\lstdefinelanguage{RASPL}{
    morekeywords={def,return,if,else,for,while,in,import,from},
    sensitive=true,
    morecomment=[l]{\#},
    morestring=[b]',
    morestring=[b]",
}
\lstdefinestyle{rasplstyle}{
    language=RASPL,
    backgroundcolor=\color{black!5},
    basicstyle=\ttfamily\footnotesize,
    keywordstyle=\color{blue!70!black},
    commentstyle=\color{gray!60},
    stringstyle=\color{orange!80!black},
    showstringspaces=false,
    frame=single,
    breaklines=true,
    tabsize=4,
}
\definecolor{theoFrame}{HTML}{2B6CB0}   %
\definecolor{theoBack}{HTML}{EBF8FF}    %
\definecolor{defFrame}{HTML}{2F855A}    %
\definecolor{defBack}{HTML}{F0FFF4}     %
\definecolor{remFrame}{HTML}{4A5568}    %
\definecolor{remBack}{HTML}{F7FAFC}     %
\definecolor{req}{HTML}{E63946}
\definecolor{preq}{HTML}{1B4965}
\newtcolorbox{remarkbox}[2][]{%
  headbar=remFrame,  %
  title={#2},        %
  #1                 %
}
\definecolor{propFrame}{HTML}{B7791F} %
\definecolor{propBack}{HTML}{FFFBEB}  %
\lstdefinestyle{py}{
  language=Python,
  basicstyle=\ttfamily\footnotesize,
  keywordstyle=\color{RoyalBlue},
  commentstyle=\color{DimGray}\itshape,
  stringstyle=\color{SaddleBrown},
  showstringspaces=false,
  breaklines=true,
  columns=fullflexible
}
\DeclareMathOperator*{\argmax}{arg\,max}
\DeclareMathOperator*{\argmin}{arg\,min}
\definecolor{amber}{RGB}{206,18,86}
\definecolor{blueish}{RGB}{43,140,190}
\definecolor{purpleish}{RGB}{140,81,10}
\definecolor{ye}{HTML}{ff7f00}
\definecolor{pu}{HTML}{984ea3}
\definecolor{gre}{HTML}{4daf4a}
\definecolor{re}{HTML}{e41a1c}
\newcommand{\xxcomment}[4]{\textcolor{#1}{[$^{\textsc{#2}}_{\textsc{#3}}$ #4]}}
\newcommand{\agw}[1]{\xxcomment{amber}{A}{W}{#1}}
\newcommand{\yj}[1]{\xxcomment{cyan}{Y}{J}{#1}}
\renewcommand{\L}{\mathcal{L}}
\newcommand{\mup}{$\mu$P}
\newcommand{\E}{\mathbb{E}}
\newcommand{\D}{\mathcal{D}}
\newcommand{\rH}{\mathrm{H}}
\newcommand{\rS}{\mathrm{S}}
\renewcommand{\L}{\mathcal{L}}
\newcommand{\asteriskfootnote}[1]{%
  \begingroup
  \def\@thefnmark{*}%
  \long\def\@makefntext##1{%
    \noindent\hb@xt@1.8em{\hss\@makefnmark}##1%
  }%
  \@footnotetext{#1}%
  \endgroup
}
\newcommand{\daggerfootnote}[1]{%
  \begingroup
  \def\@thefnmark{\dag}%
  \long\def\@makefntext##1{%
    \noindent\hb@xt@1.8em{\hss\@makefnmark}##1%
  }%
  \@footnotetext{#1}%
  \endgroup
}
\newcommand{\soph}[1]{epiplexity}
\newcommand{\csoph}[1]{Epiplexity}
\definecolor{lightblue}{HTML}{18282e}
\definecolor{lighterblue}{HTML}{f2fafd}
\newtcolorbox{abox}{colback=lighterblue,colframe=lightblue}
\NewTotalTCBox{\myverb}{ O{lighterblue} v !O{} }
{ fontupper=\ttfamily,nobeforeafter,tcbox raise base,arc=0pt,outer arc=0pt,
top=0pt,bottom=0pt,left=0mm,right=0mm,
leftrule=0.3mm,rightrule=0.3mm,toprule=0.3mm,bottomrule=0.3mm,boxsep=0.8mm,
colback=#1!10!lighterblue,colframe=#1!0!black,#3}{#2}
\begin{document}

\title{From Entropy to Epiplexity: Rethinking Information for Computationally Bounded Intelligence}

\author{\centering Marc Finzi$^{*1}$ \quad Shikai Qiu$^{*2}$ \quad Yiding Jiang$^{*1}$ \quad Pavel Izmailov$^2$ \quad J. Zico Kolter$^1$ \\ Andrew Gordon Wilson$^2$ \\[0.4em]
\makebox[\linewidth][c]{
\normalfont\small $^1$Carnegie Mellon University \qquad $^2$New York University}}

\editor{}

\maketitle

\begin{abstract}
 Can we learn more from data than existed in the generating process itself? Can new and useful information be constructed from merely applying deterministic transformations to existing data? Can the learnable content in data be evaluated without considering a downstream task? On these questions, Shannon information and Kolmogorov complexity come up nearly empty-handed, in part because they assume observers with unlimited computational capacity and do not target the useful information content. 
In this work, we identify and exemplify three seeming paradoxes in information theory: (1) information cannot be increased by deterministic transformations; (2) information is independent of the order of data; (3) likelihood modeling is merely distribution matching. To shed light on the tension between these results and modern practice, and to quantify the value of data, we introduce \emph{epiplexity}$^\dagger$, a formalization of information capturing what computationally bounded observers can learn from data. Epiplexity captures the structural content in data while excluding time-bounded entropy, the random unpredictable content exemplified by pseudorandom number generators and chaotic dynamical systems. 
With these concepts, we demonstrate how information can be created with computation, how it depends on the ordering of the data, and how likelihood modeling can produce more complex programs than present in the data generating process itself.
We also present practical procedures to estimate epiplexity which we show capture differences across data sources, track with downstream performance, and highlight dataset interventions that improve out-of-distribution generalization. In contrast to principles of model selection, epiplexity provides a theoretical foundation for \emph{data selection}, guiding how to select, generate, or transform data for learning systems.
\end{abstract}

\section{Introduction}

\asteriskfootnote{Equal contribution.}
\daggerfootnote{Code available at \url{https://github.com/shikaiqiu/epiplexity}.}

As AI research progresses towards more general-purpose intelligent systems, cracks are beginning to show in mechanisms for grounding mathematical intuitions. 
Much of learning theory is built around controlling generalization error with respect to a given distribution, treating the training distribution as fixed and focusing optimization effort on the choice of model. Yet modern systems are expected to transfer across tasks, domains, and objectives that were not specified at training time, often after large-scale pretraining on diverse and heterogeneous data. In this regime, success or failure frequently hinges less on architectural choices than on what data the model was exposed to in the first place.
Pursuing broad generalization to diverse out-of-distribution tasks forces a shift in perspective: instead of treating data as given and optimizing for in-distribution performance, we need to choose and curate data to facilitate generalization to unseen tasks. %
This shift makes the value of data itself a central question---how much usable, transferable information can a model acquire from training? In other words, instead of model selection, how do we perform \emph{data selection}? On this question, existing theory offers little guidance and often naively contradicts empirical observations.

Consider synthetic data, crucial for further developing model capabilities \citep{abdin2024phi, maini2024rephrasing} when existing natural data are exhausted. Existing concepts in 
information theory like the data processing inequality appear 
to suggest that synthetic data adds no additional value.
Questions about what information is transferred to a given model seem naturally within the purview of information theory, yet, quantifying this information with existing tools proves to be elusive. 
Even basic questions, such as the source of the information in the weights of an AlphaZero game-playing model~\citep{silver2018general}, are surprisingly tricky to answer. AlphaZero takes in zero human data, learning merely from the deterministic rules of the game and the AlphaZero RL algorithm, both of which are simple to describe. Yet the resulting models achieve superhuman performance and are large in size.
To assert that AlphaZero has learned little to no information in this process is clearly missing the mark, and yet both Shannon and algorithmic information theory appear to say so.

\begin{figure*}[t!]
\centering
\begin{minipage}[b]{0.99\linewidth}
    \centering
    \includegraphics[width=\linewidth]{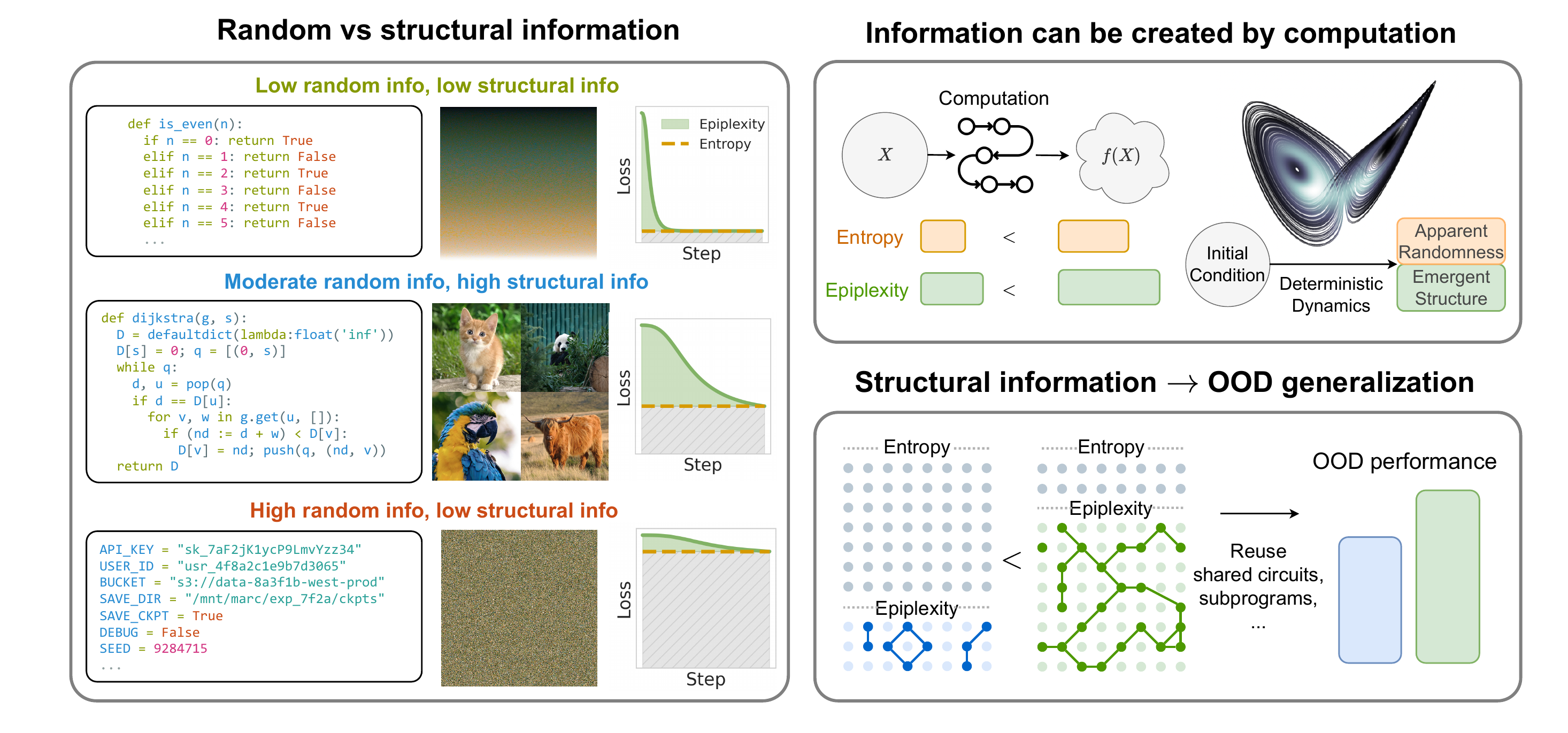}
\end{minipage}%
\vspace{1mm}

\caption{
\small
\textbf{Random vs structural information for computationally bounded observers.} (\textbf{Left}) Illustration of random vs structural information of different data for computationally bounded observers, which we formalize with time-bounded entropy and epiplexity (\Cref{sec:epiplexity}) and can be estimated from loss curves of neural networks trained on that data (\Cref{sec:measuring}).
(\textbf{Top Right}) Unlike other forms of information, time-bounded entropy and epiplexity can be increased through computational processes, such as simulating dynamical systems (cellular automation, Lorenz equations) and interventions like changing the data ordering, which can produce apparent randomness but also learnable, emergent structures like gliders and the Lorenz attractor invariant measure (\Cref{sec:paradox}). 
(\textbf{Bottom Right}) Whereas time-bounded entropy captures the in-distribution randomness and unpredictability, epiplexity measures the amount of structural information the model extracts from the data to its weights, which can be useful for OOD tasks such as by reusing learned circuits shared between the in-distribution and OOD tasks.
}
\label{fig:fig1}
\end{figure*}

In this paper, we argue that the amount of structural information a \emph{computationally bounded} observer can extract from a dataset is a fundamental concept that underlies many observed empirical phenomena. 
As we will show, existing notions from Shannon and algorithmic information theory are inadequate when forced to quantify this type of information.
These frameworks often lend intuitive or mathematical support to beliefs that, in fact, obscure important aspects of empirical phenomena. To highlight the limitations of classical frameworks and motivate the role of computational constraints in quantifying information, we identify and demonstrate three \emph{apparent paradoxes}: statements which can be justified mathematically by Shannon and algorithmic information theory, and yet are in tension with intuitions and empirical phenomena. %
\begin{enumerate}
[
    label=Paradox~\arabic*:,
    leftmargin=3em,
    labelsep=0.5em,
    align=left
]

    \item \textbf{Information cannot be increased by deterministic processes.}
    For both Shannon entropy and Kolmogorov complexity, deterministic transformations cannot meaningfully increase the information content of an object. And yet, we use pseudorandom number generators to produce randomness, synthetic data improves model capabilities, mathematicians can derive new knowledge by reasoning from axioms without external information, dynamical systems produce emergent phenomena, and self-play loops like AlphaZero learn sophisticated strategies from games \citep{silver2018general}. %
    \item \textbf{Information is independent of factorization order.} A property of both Shannon entropy and Kolmogorov complexity is that total information content is invariant to factorization: the information from observing first $X$ and then $Y$ is the same as observing $Y$ followed by $X$. On the other hand, LLMs learn better on English text ordered left-to-right than reverse ordered text, picking out an ``\emph{arrow of time}'' \citep{papadopoulos2024arrows, bengio2019meta}, and we have cryptography built on the existence of functions that are computationally hard to predict in one direction and easy in another. 

    \item \textbf{Likelihood modeling is merely distribution matching.}
    Maximizing the likelihood is often equated with matching the training data generating process: the true data-generating process is a perfect model of itself, and no model can achieve a higher expected likelihood. 
    As a consequence, it is often assumed that a model trained on a dataset cannot extract more structure or learn useful features that were not used in generating the data.
    However, we show that a computationally-limited observer can in fact uncover much more structure than is in the data generating process. 
    For example, in Conway's game of life the data are generated via simple programmatic rules that operate on two-dimensional arrays of bits.
    Applying these simple rules sequentially, we see emergent structures, such as different species of objects that move and interact in a predictable way.
    While an unbounded observer can simply simulate the evolution of the environment exactly, a computationally bounded observer would make use of the emergent structures and learn the different types of objects and their behaviors. 
 
\end{enumerate}

The tension between these theoretical statements and empirical phenomena can be resolved by imposing computational constraints on the observer and separating the random content from the structural content. Drawing on ideas from cryptography, algorithmic information theory, and these unexplained empirical phenomena, we define a new information measure, \textbf{epiplexity} (epistemic complexity), which formally defines the amount of structural information that a computationally bounded observer can extract from the data (\autoref{sec:epiplexity}, Definition~\ref{def:epiplexity}).
Briefly, epiplexity is the information in the model that minimizes the description length of data under computational constraints. A simple heuristic measurement is the area under the loss curve above the final loss, while a more rigorous approach uses the cumulative KL divergence between a teacher and student model (\autoref{sec:measuring}, \autoref{fig:measuring}).

Our definitions capture the intuition that an object contains both random, inherently unpredictable information (entropy), and predictable structured information that enables observers to generalize by identifying patterns (epiplexity). In \autoref{fig:fig1} (left) we illustrate this divide. In the top row, we have highly redundant and repetitive code and simple color gradients, which have little information content, be it structural or random. In the middle row, we have the inner workings of an algorithm and pictures of animals, showing complex, long-range interdependencies between the elements from which a model can learn complex features and subcircuits that are helpful even for different tasks. In contrast, on the bottom, we have random data with little structure: configuration files with randomly generated API keys, file paths, hashes, arbitrary boolean flags have negligible learnable content and no long-range dependencies or complex circuits that result from learning on this task. Similarly, uniformly shuffled pixels from the animal pictures have high entropy but are fundamentally unpredictable, and no complex features or circuits arise from training on these data.

An essential property of our formulation is that information is \emph{observer dependent}: the same object may appear random or structured depending on the computational resources of the observer. For instance, the output of a strong pseudorandom generator appears indistinguishable from true randomness to any polynomial-time observer lacking the secret key (seed), regardless of the algorithm or function class. In other situations, such as chaotic dynamical systems, both apparently random behavior is produced along with structure: the state of the system cannot be predicted precisely over long time-scales, but such observers may still learn meaningful predictive distributions, as shown by the invariant measure in \Cref{fig:fig1} (top right).

Models trained to represent these distributions are computer programs, and substructures within these programs, like circuits for performing specific tasks, or induction heads \citep{olsson2022context}, can be reused even for seemingly unrelated data. This view motivates selecting high epiplexity data that induces more structural information in the model, since these structures can then be reused for unseen out-of-distribution (OOD) tasks, as illustrated in \Cref{fig:fig1} (bottom right). We emphasize, however, that epiplexity is a measure of information, \emph{not} a guarantee of OOD generalization to specific tasks. Epiplexity quantifies the amount of structural information a model extracts, while being agnostic to whether these structures are relevant to a \emph{specific} downstream task. 

To build intuition, we explore a range of phenomena and provide experimental evidence for behaviours that are poorly accounted for by existing information-theoretic tools, yet naturally accommodated by epiplexity. We show that information \emph{can} be created purely through computation, giving insights into synthetic data (\autoref{sec:deterministic-info-creation}). We examine how certain factorizations of the same data can increase structural information and downstream OOD performance—even as they result in worse training loss (\autoref{sec:factorization}). We show why likelihood modeling is more than distribution matching, identifying induction and emergence as two settings where the observer can learn more information than was present in the data generating process (\autoref{sec:likelihood}). By measuring epiplexity, we can better understand why pre-training on text data transfers more broadly than image data, and why certain data selection strategies for LLMs are empirically successful (\autoref{sec:ood}). Together, our results provide clarity on the motivating questions: the information content of data can be compared independently of a specific task, new information can be created by computation, and models can learn more information than their generating processes contain.

In short, we identify a disparity between existing concepts in information theory and modern practice, embodied by three apparent paradoxes, and introduce epiplexity as a measurement of structural information acquired by a computationally bounded observer to help resolve them. We formally define epiplexity in \autoref{sec:epiplexity} (Definition~\ref{def:epiplexity}) and present measurement procedures in \autoref{sec:measuring}. In \autoref{sec:paradox}, we show how epiplexity and time-bounded entropy shed light on these paradoxes, including induction and emergent phenomena. Finally, in \autoref{sec:ood}, we demonstrate that epiplexity correlates with OOD generalization, helping explain why certain data enable broader generalization than others.

\section{Background}
In order to define the interesting, structural, and predictive component of information, we must separate it out from random information---that which is fundamentally unpredictable given the computational constraints of the observer. Along the way, we will review algorithmic randomness as developed in algorithmic information theory as well as notions of pseudo-randomness used in cryptography, and how these concepts crucially depend on the observer.

\subsection{What Does it Mean for An Object to Be Random?}
\textbf{Random Variables and Shannon Information.}
Many common intuitions about randomness start from random variables and Shannon information. A random variable defines a map from a given measurable probability space to different outcomes, with probabilities corresponding to the measure of the space that lead to a certain outcome. %
Shannon information assigns to each outcome $x$ a self-information (or surprisal) $\log 1/P(x)$ based on the probability $P$, and an entropy for the random variable $\mathrm{H}(X)=\mathbb{E}[\log 1/P(X)]$, which provides a lower bound on the average code length needed to \emph{communicate} samples to another party \citep{shannon1948mathematical}. In Shannon's theory, information comes only from distributions and random variables---objects that are not random must contain no information. As a result, non-random information is seemingly contradictory, and thus we must draw from a broader mathematical perspective to describe such concepts.

In the mid 1900s, mathematicians were interested in formalizing precisely what it means for a given sample to be a random draw from a given distribution, to ground the theory of probability and random variables \citep{shafer2006sources}. A central consideration involves a uniformly sampled binary sequence $u_{1:\infty}$ from which other distributions of interest can be constructed. This sequence can also be interpreted as the binary expression of a number $[0,1)$. Intuitively, one might think that all sequences should be regarded as equally random, as they are all equally likely according to the probability distribution: $1111111\dots$ has the same probability mass as $10011101\dots$ and also the same self-information. However, looking at statistics on these sequences reveals something missing from this perspective; from the law of large numbers, for example, it must be that $\lim_{N\to \infty}\frac{1}{N}\sum_{i=1}^Nu_i=0.5$, which is clearly not satisfied by the first sequence of $1$s.

\textbf{Martin-L\"of Randomness: No algorithm exists to predict the sequence.}
Initial attempts were made to formalize randomness as sequences which pass all statistical tests for randomness, such as the law of large numbers for selected substrings. However, under such definitions all sequences fail to be random since tests like $u_{1:\infty}\ne y_{1:\infty}$ for any particular sequence $y$ must also be included \citep{downey2019algorithmic}. The solution to these issues was found by defining random sequences not as those that pass all tests of randomness, but those that pass all \emph{computable} tests of randomness, in a formalization known as Martin-L\"of randomness \citep{martin1966definition}. As it turned out, this definition is equivalent to a number of seemingly distinct definitions, such as the inability for any gambler to exploit properties of the sequence to make a profit, or that all prefixes of the random sequence should be nearly incompressible \citep{terwijn2016mathematical}. 
For this last definition, we must invoke Kolmogorov complexity, a notion of compressibility and a key concept in this paper.

\begin{definition}[Prefix Kolmogorov complexity {\citep{Kolmogorov01011968,chaitin1975theory}}]
Fix a \\ universal prefix-free Turing machine $\mathcal{U}$. The (prefix) Kolmogorov complexity of a finite binary string $x$ is $K(x)\;=\;\min\{\,|p|:\; \mathcal{U}(p)=x\,\}$.
That is, $K(x)$ is the length of the shortest self-delimiting program (a program which also encodes its length) that outputs $x$ and halts. The conditional complexity $K(x|y)$ is the length of the shortest program that outputs $x$ and halts when provided $y$ as input.
\end{definition} Due to the universality of Turing machines, the Kolmogorov complexity for two Turing machines (or programming languages) $\mathcal U_1$ and $\mathcal U_2$ differ by at most a constant, $|K_{\mathcal{U}_1}(x)-K_{\mathcal{U}_2}(x)|\le C$, where the constant $C$ depends only on $\mathcal U_1,\mathcal U_2$, but not on $x$ \citep{li2008introduction}. %

\begin{definition}[Martin--L\"of random sequence \citep{martin1966definition}] An infinite sequence \\
$x_{1:\infty} \in \{0,1\}^{\mathbb{N}}$ is Martin--L\"of random iff there exists a constant $c$ such that for all $n$, $K(x_{1:n}) \;\ge\; n - c$. Using this criterion, all computable randomness tests are condensed into a single incomputable randomness test concerning Kolmogorov complexity. 
\end{definition}

One can extend Martin-L\"of randomness to finite sequences. We say that a sequence $x \in \{0,1\}^n$ is $c$-random if $K(x)>n-c$. Equivalently, \emph{randomness discrepancy} is defined as $\delta(x) = n-K(x)$, which measures how far away $x$ is from having maximum Kolmogorov complexity. A sequence $x$ is $c$-random if $\delta(x)<c$. High Kolmogorov complexity, low randomness discrepancy, sequences are overwhelmingly likely when sampled from uniform randomly sampled random variables. From Kraft's inequality \citep{Kraft1949Device,McMillan1956TwoInequalities}, there are at most $2^{n-c}$ (prefix-free) programs of length $L\le n-c$, therefore in the $2^n$ possibilities in uniformly sampling $X\sim U_n$, the probability that $K(X)$ is size $L$ or smaller is $P(K(X)\le n-c) = P(\delta(X) \ge c) < 2^{-c}$. The randomness discrepancy of a sequence can thus be viewed as a test statistic for rejecting the null hypothesis that the object $X$ was indeed sampled uniformly at random \citep{grunwald2008algorithmic}. For a sequence to have low randomness discrepancy, it must exhibit no discernible pattern, and thus there is an objective sense in which $1001011100$ is more random than $0101010101$.

Given the Martin-L\"of definition of infinite random sequences, every random sequence is incomputable; in other words, there is no program that can implement the function $\mathbb{N}\to \{0,1\}$ which produces the bits of the sequence. One should contrast such random numbers from those like $\pi/4$ or $e/3$, which though transcendental, are computable, as there exist programs that can compute the bits of their binary expressions. While the computable numbers in $[0,1)$ form a countable set, algorithmically random numbers in $[0,1)$ are uncountably large in number. With the incomputability of random sequences in mind we can appreciate the Von Neumann quote
\begin{quote}
\emph{“Anyone who considers arithmetical methods of producing random digits is, of course, in a state of sin.”} \citep{vonNeumann1951RandomDigits}
\end{quote}
which anticipates the Martin--L\"of formalization that came later. But this viewpoint also misses something essential, as evidenced by the success of pseudorandom number generation, derandomization, and cryptography.

\textbf{Cryptographic Randomness: No polynomial time algorithm exists to predict the sequence.}
An important practical and theoretical development of random numbers has come from the cryptography community, by once again limiting the computational model of the observer.

Rather than passing all computable tests as with Martin-L\"of randomness, cryptographically secure pseudorandom number generators (CSPRNG or PRG) are defined as functions which produce sequences that pass all \emph{polynomial time} tests of randomness.
Such functions are conjectured to be constructible by computer programs and are central to cryptographic research.

\begin{definition}[Non-uniform PRG~\citep{blum1984generate, goldreich2001foundations1}]
A function $G$ stretching $k$ input bits into $n$ output bits is a pseudorandom generator (PRG) if its outputs cannot be distinguished from a random sequence by any polynomial time algorithm more than a negligible fraction of the time. More precisely, $G$ is a (non-uniform) PRG iff for every non-uniform probabilistic polynomial time algorithm $D_k:\{0,1\}^n\to \{0,1\}$ (making use of advice strings \(\{a_k\}_{k\in\mathbb{N}}\) of length \(\mathrm{poly}(k)\))
has at most negligible advantage $\epsilon(k)$ distinguishing outputs of $G$ from uniformly random sequences $u\sim U_n$:
\begin{equation}
    \left|\Pr_{s\sim U_k} [D_n(G(s))=1] - \Pr_{u\sim U_n}[D_n(u)=1]\right| \ =\epsilon(k) < \ \mathrm{negl}(k)\,. \footnote{Here $\mathrm{negl}(k)$ means that the function decays faster than the reciprocal of any polynomial, i.e., $\mathrm{negl}(k) < \tfrac{1}{k^c}$ 
for all integers $c>0$ and sufficiently large $k$.}
\end{equation}
 
\end{definition}
The definition of indistinguishability via polynomial time tests is equivalent to a definition on the failure to predict the next element of a sequence given the previous elements: no polynomial time predictor can predict the next bit of the sequence with probability negligibly better than random guessing \citep{Yao1982Trapdoor}.

Following from the indistinguishability definition, randomness of this kind can be substituted for Martin-L\"of randomness in the vast majority of practical circumstances.\footnote{Specifically, when the difference between outcomes can be measured in polynomial time.} For a concrete example, if a use-case of randomness that runs in polynomial time like quicksort, and takes more iterations to run with PRG sequences than with truly random sequences, and this difference could be determined within polynomial time such as by measuring the quicksort runtime, then this construction could be used as a polynomial time distinguisher, 
which by the definition of PRG does not exist. If PRGs exist, then quicksort must run nearly as fast using pseudorandom number generation as it does with truly random sequences. %

The existence of PRGs hinges on the existence of \emph{one way functions} (OWF), from which PRGs and other cryptographic primitives are constructed, forming the basis of modern cryptography \citep{goldreich1989hard}. For example, the backbone algorithm for parallel random number generation in Jax \citep{jax2018github}, works to create random numbers $u_1,u_2,\dots u_N$ by simply encrypting the numbers $1,2,\dots, N$: $u_k = E(k,s)$ where the encryption key $s$ is the random seed and $E$ is the threefish block cypher \citep{salmon2011parallel}. Block ciphers, like other primitives, are constructed using one way functions.

\begin{definition}[Non-uniform one-way function, OWF~\citep{Yao1982Trapdoor, goldreich2001foundations1}]\label{def:owf} \
Let \(f:\{0,1\}^n \to \{0,1\}^m\) (with $m>n$) be computable in time \(\mathrm{poly}(n)\) where \(n=|x|\).
We say \(f\) is \emph{one-way against non-uniform PPT adversaries} if for every non-uniform probabilistic polynomial time
algorithm \(A_n\) (i.e., a polynomial-time algorithm $A$ with advice strings \(\{a_n\}_{n\in\mathbb{N}}\) of length \(\mathrm{poly}(n)\)),
\[
\Pr_{x \sim U_n}\!\left[\, A_n(f(x)) \in f^{-1}(f(x)) \,\right] \;<\; \mathrm{negl}(n),
\]
where the probability is over the uniform choice of \(x\) (and any internal randomness in \(A\)).%
\end{definition}

While cryptographers are most interested in the polynomial versus nonpolynomial compute separations for security, cryptographic primitives with respect to less extreme compute separations have been constructed and are believed to exist, for example for quadratic time \citep{merkle1978secure}, quasipolynomial time \citep{liu2024direct}, and even constraints on circuit depth \citep{applebaum2016cryptographic}. While the results we prove in this paper are based on the polynomial vs nonpolynomial separation in cryptographic primitives, it seems likely that a much wider array of compute separations are relevant for information in the machine learning context even if not as important for cryptography. For example, the separations between quadratic or cubic time and higher order polynomials may be relevant to transformer self attention, or gaps between fixed circuit depth and variable depth as made possible with chain of thought or other mechanisms.

\subsection{Random vs Structural Information}
\label{sec:sophistication}

With these notions of randomness in hand, we can use what is random to define what is not random. In algorithmic information theory, there is a lesser known concept that captures exactly this idea, known as \emph{sophistication} \citep{koppel1987structure}, which has no direct analog in Shannon information theory. While several variants of the definition exist, the most straightforward is perhaps the following:
\begin{definition}[Naive Sophistication \citep{mota2013sophistication}]\label{def:nsoph}
Sophistication, like Kolmogorov complexity, is defined on individual bitstrings, and it uses the compressibility criterion from Martin-L\"of randomness to carve out the random content of the bitstring. Sophistication is defined as the smallest Kolmogorov complexity of a set $S$ such that $x$ is a random element from that set (at randomness discrepancy of $c$).
\begin{equation}
    \mathrm{nsoph}_c(x) = \min_S: \{K(S): K(x\mid S)>\log |S|-c\}
\end{equation}
\end{definition}
Informally, sophistication describes the structural component of an object; however, it is surprisingly difficult to give concrete examples of high sophistication objects. %
The difficulty of finding high sophistication objects
is a consequence of Chaitin's incompleteness theorem \citep{chaitin1974information}. This theorem states that in a given formal system there is a constant $L$ for which there are no proofs that any specific string $x$ has $K(x)>L$, even though nearly all strings have nearly maximal complexity. Since $\mathrm{nsoph}_c(x)>L$ implies $K(x)>L-O(1)$, there can be no proofs that the sophistication of a particular string exceeds a certain constant either. It is known that high sophistication strings exist by a diagonalization argument \citep{antunes2006sophistication}, but we cannot pinpoint any specific strings which have high sophistication. 
On typical Turing machines, $L$ is often not more than a few thousand \citep{chaitin1998limits}, far from the terabytes of information that frontier AI models have encoded. %

We look towards complex systems and behaviors as likely examples of high sophistication objects; however in many of these cases the objects could conceivably be produced by simpler descriptions given tremendous amounts of computation. The mixing of two fluids for example can produce extremely complex transient behavior due to the complexities of fluid dynamics; however, with access to unlimited computation and some appropriately chosen random initial data one should be able to reproduce the exact dynamics~\citep{aaronson2014quantifying}.
Owing to the unbounded compute available for the programs in sophistication, many complex objects lose their complexity. Additionally, for strings that \emph{do} have high sophistication, the steps of computation required for the optimal program grow faster than any computable function with the sophistication content~\citep{ay2010effective}.
For a computationally bounded observer, an encrypted message or a \emph{cryptographically secure pseudo-random number generator} (CSPRNG) output \emph{is} random, and measurements that do not recognize this randomness do not reflect the circumstances of this observer. 
These limitations of sophistication leads to a disconnect with real systems with observers that have limited computation, and it is our contention that this disconnect is an essential one, central to phenomena such as emergence, induction, chaos, and cryptography. %

\subsection{The Minimum Description Length Principle}

Finally, we review the minimum description length principle (MDL), used as a theoretical criterion for model selection, which we will use in defining epiplexity. The principle states that among models for the data, the best explanation minimizes the total description length of the data, including both the description of the data using the model and the description of the model itself \citep{rissanen2004minimum}. The most common instantiation of this idea is via the statistical two-part code MDL.

\begin{definition}[Two-part MDL~\citep{rissanen2004minimum,grunwald2007minimum}]
Let $x \in \{0,1\}^{n \times d}$ be the data and $\mathcal{H}$ be a set of candidate models. The two-part MDL is:
$$L(x) = \min_{H\in \mathcal{H}} L(H) -\log P(x \mid H),$$
where $L(H)$ specifies the number of bits required to encode the model $H$, and $-\log P(x \mid H)$
is the number of bits required to encode the data given the model.
\end{definition}
This formulation provides an intuitive implementation of Occam's Razor: complex models (large $L(H)$) are penalized unless they provide a reduction in the data's description length (large $P(x\mid H)$).
If there are repeating patterns in the data, they can be stored in the model $H$ rather than being duplicated in the code for the data. 
We review the modern developments of MDL in Appendix~\ref{app:mdl}.
While MDL is a criterion for model selection given a fixed dataset, epiplexity, which we introduce next, can be viewed as its dual: a criterion for data selection given a fixed computation budget.

\section{Epiplexity: Structural Information Extractable by a Computationally Bounded Observer}
\label{sec:epiplexity}

Keeping in mind the distinction between structural and random information in the unbounded compute setting, and the computational nature of pseudorandomness in cryptography, we now introduce epiplexity. \emph{Epiplexity} captures the structural information present to a computationally bounded observer. As the computational constraints of this observer change, so too does the division between random and structured content. After introducing epiplexity here, we present ways of measuring epiplexity in \autoref{sec:measuring}. In Sections \ref{sec:paradox} and \ref{sec:ood} we show how epiplexity can shed light on seeming paradoxes in information theory around the value of data, and OOD generalization. 

First we will define what it means for a probability distribution to have an efficient implementation, requiring that it be implemented on a prefix-free universal Turing machine (UTM) and halt in a fixed number of steps.

\begin{definition}[Time-bounded probabilistic model]
\label{def:PT}
Let $T:\mathbb{N}\to\mathbb{N}$ be a non-decreasing time-constructible function and let $\mathcal{U}$ be a fixed prefix-free universal Turing machine.
A (prefix-free) program $\mathrm{P}$ is a \emph{$T$-time probabilistic model} over $\{0,1\}^n$ if it supports both sampling and probability evaluation in time $T(n)$:

\textbf{Evaluation.} On input $(0,x)$ with $x\in\{0,1\}^n$, $\mathcal{U}(\mathrm{P},(0,x))$ halts within $T(n)$ steps
and outputs an element in $[0,1]$ (with a finite binary expansion), denoted
\[
\mathrm{Prob}_{\mathrm{P}}(x)\;:=\;\mathcal{U}(\mathrm{P},(0,x)).
\]

\textbf{Sampling.} On input $(1,u)$ where $u\in\{0,1\}^\infty$ is an infinite random tape,
$\mathcal{U}(\mathrm{P},(1,u))$ halts within $T(n)$ steps and outputs an element of $\{0,1\}^n$, denoted
\[
\mathrm{Sample}_{\mathrm{P}}(u)\;:=\;\mathcal{U}(\mathrm{P},(1,u)).
\]

These outputs must define a normalized distribution matching the sampler:
\[
\sum_{x\in\{0,1\}^n} \mathrm{Prob}_{\mathrm{P}}(x)=1
\quad\text{and}\quad
\Pr_{u\sim U_\infty}[\mathrm{Sample}_{\mathrm{P}}(u)=x]=\mathrm{Prob}_{\mathrm{P}}(x)\ \ \forall x\in\{0,1\}^n.
\]

Let $\mathcal{P}_T$ be the set of all such programs. To simplify the notation, we will use italicized $P$ to denote the probability mass function $\mathrm{Prob}_{\mathrm{P}}$ in contrast with the non-italicized $\mathrm{P}$, which denotes the program.

\end{definition}

Here, $n$ denotes the dimension of the underlying sample space (e.g., the length of the binary string.) %
This definition allows us to constrain the amount of computation the function class can use. %
Such a model class enforces that the functions of interest are both efficiently sampleable and evaluable, which include most sequence models.
While in this work we focus primarily on computational constraints which we consider most fundamental, other constraints such as memory or within a given function class $\mathcal{F}$ can be accommodated by replacing $\mathcal{P}_T$ with $\mathcal{P}_{\mathcal{F}}$, and may be important for understanding particular phenomena.\footnote{One such possibility is to constrain the function class to all models reachable by a given optimization procedure with a given neural network architecture.} With these preliminaries in place, we can now separate the random and structural components of information.

We define epiplexity and time-bounded entropy in terms of the program which achieves the best expected compression of the random variable $X$, minimizing the two-part code length (model and data given model bits) under the given runtime constraint. 

\begin{remarkbox}{}
\begin{definition}[Epiplexity and Time-Bounded Entropy]
\label{def:epiplexity}
Consider a random variable $X$ on $\{0,1\}^n$. Let
\begin{equation}
\mathrm{P^\star} = \argmin_{\mathrm{P}\in \mathcal P_T}\qty{|\mathrm{P}| + \E[\log 1/P(X)]}
\end{equation}
be the program that minimizes the time bounded MDL with ties broken by the smallest program, and expectations taken over $X$. $|\mathrm{P}|$ denotes the length of the program $\mathrm{P}$ in bits, and logarithms are in base $2$. We define the $T$-bounded \emph{epiplexity} $\mathrm{S}_T$ and \emph{entropy} $\mathrm{H}_T$ of the random variable $X$ as  
\begin{equation}
    \mathrm{S}_T(X):=|\mathrm{P}^\star|, \quad \text{and} \quad  \mathrm{H}_T(X): = \E[\log 1/P^\star(X)].
\end{equation}
\end{definition}
\normalsize
\end{remarkbox}

\vspace{5mm}

The time-bounded entropy $\mathrm{H}_T$ captures the amount of information in the random variable that is random and unpredictable, whereas the epiplexity $\mathrm{S}_T$ captures the amount of structure and regularity visible within the object at the given level of compute $T$. Uniform random variables have trivial epiplexity because a model (or equivalently a program) as simple as the uniform distribution achieves a small two-part code length, despite having large time-bounded entropy. Explicitly, for a uniform random variable $U_n$ on $\{0,1\}^n$, and even a constant time bound $T(n)\ge c_1$, $\rS_T(U_n)+\rH_T(U_n) \le n+c_2$ where $c_2$ is the length of a program for the uniform distribution running in time $c_1$, and since $\rH_T(U_n) \ge \rH(U_n)=n$, it must be that $\rS_T(U_n) \le c_2$.
Random variables with simple patterns, like $0101010101...$ with probability $1/2$ and $1010101010...$ with probability $1/2$, also have low epiplexity because the time bounded MDL minimal model is simple. In this case with linear time $T(n) = \Theta(n)$, both $\rS_T(X)= O(1)$ and $\rH_T(X) = O(1)$. Henceforth, we will abbreviate $\mathrm{MDL}_T(X):=\mathrm{S}_T(X)+\mathrm{H}_T(X)$, which is the total time-bounded information content. We will now enumerate a few basic consequences of these definitions.

\vspace{5mm}

\begin{remarkbox}{Basic Properties}\label{eq:basic}
\[
\begin{aligned}
\text{(1)}\quad 
  & \rS_T(X)\ge 0, \quad \rH_T(X) \ge 0,\\[0.4ex]
\text{(2)}\quad 
  & \rH(X)\le \rS_T(X)+\rH_T(X) \le n+c_1,\\[0.4ex]
\text{(3)}\quad 
  & \mathrm{MDL}_{T'}(X) \le \mathrm{MDL}_{T}(X)
    \quad \text{whenever } \ T'(n)\ge T(n),\\[0.4ex]
\text{(4)}\quad 
  & \mathrm{MDL}_{T'}(f^{-1}(X))
    \le \mathrm{MDL}_T(X) + |\mathrm{f}|+c_2,
  \text{with } T'(n)=T(n)+\mathsf{Time}(\mathrm f).
\end{aligned}
\]
\normalsize
\end{remarkbox}
\vspace{5mm}
Statement 4 (defined for programs $\mathrm{f}$ that run in a fixed time implementing a bijection) is an analog of the information non-increase property $K(f(x)) \le K(x)+K(f)+c$. However, note that while the Kolmogorov complexity for $K(f)$ and $K(f^{-1})$ are the same to within an additive constant, in our setting of a fixed computational budget having a short program for $f^{-1}$ does not imply one for $f$, and vice versa. This gap between a function and its inverse has important consequences for the three paradoxes as we will see in \autoref{sec:paradox}. %

\paragraph{Pseudorandom number sequences have high random content and little structure.} 
Unlike Shannon entropy, Kolmogorov complexity, or even resource bounded forms of Kolmogorov complexity \citep{allender2011pervasive}, we show that CSPRNGs have nearly maximal time-bounded entropy for polynomial time observers. Additionally, while CSPRNGs produce random content, they do not produce structured content as the epiplexity is negligibly larger than constant.
Formally, let $U_k$ be the uniform distribution on $k$ bits.%

\begin{theorem}
\label{thm:csprng_entropy}
For any $G \in \mathrm{PRG}$ that stretches the input to $n=\mathrm{poly}(k)$ bits and allowing for an advantage of at most $\varepsilon(k)$, the polynomial time bounded entropy is nearly maximal:
$$
    n - 2 - \,n\varepsilon(k) < \mathrm{H}_{\mathrm{Poly}}(G(U_k)) \le n+c
$$
for a fixed constant $c$, and epiplexity is nearly constant
$$
\rS_{\mathrm{Poly}}(G(U_k)) \leq c + n\varepsilon(k).
$$
\\
Proof: see Appendix~\ref{app:csprng}.
\end{theorem}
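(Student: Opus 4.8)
The plan is to establish the two upper bounds from the trivial uniform model and to prove the lower bound on $\rH_T$ by converting any efficient compressor of $G(U_k)$ into an efficient threshold distinguisher, which the CSPRNG assumption forbids.

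\textbf{Setup and reduction.} Write $X=G(U_k)$, let $\mathrm{P}^\star\in\mathcal P_T$ be the MDL-optimal program of Definition~\ref{def:epiplexity} with pmf $P^\star$, and set $W:=\log 1/P^\star(X)$. By Basic Property~(2) the uniform-distribution program already witnesses $\mathrm{MDL}_T(X)=\rS_T(X)+\rH_T(X)\le n+c$, which is the claimed bound $\rH_T(X)\le n+c$ and reduces the epiplexity bound to the entropy lower bound, since $\rS_T(X)=\mathrm{MDL}_T(X)-\rH_T(X)\le (n+c)-\rH_T(X)$. Note $P^\star$ must be strictly positive on $\mathrm{supp}(X)$ (otherwise $\E[\log 1/P^\star(X)]=\infty$ and $\mathrm{P}^\star$ is not optimal), so $W$ is a genuine nonnegative real random variable with $\E[W]=\rH_T(X)<\infty$, and since $P^\star(X)\le 1$ we have $W\ge 0$, i.e. $Z:=n-W\le n$. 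It then suffices to show $\E[Z]<2+n\varepsilon(k)$.

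\textbf{The distinguisher family.} For each real $t\in[0,n]$ let $D_t$ be the algorithm that on input $y\in\{0,1\}^n$ computes $P^\star(y)$ (which halts within $T(n)$ steps) and outputs $1$ iff $P^\star(y)\ge 2^{-t}$. Since $T$ is polynomial and $n=\mathrm{poly}(k)$, each $D_t$ runs in time $\mathrm{poly}(k)$; the only resource beyond this is the description of $\mathrm{P}^\star$, which $D_t$ hard-codes as advice — admissible because $|\mathrm{P}^\star|=\rS_T(X)\le n+c=\mathrm{poly}(k)$ fits the advice budget of the non-uniform CSPRNG definition (its possible non-computability is irrelevant, only its length matters). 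Under $U_n$ the set $\{y:P^\star(y)\ge 2^{-t}\}$ has at most $2^{t}$ elements, since each contributes at least $2^{-t}$ to $\sum_y P^\star(y)=1$, so $\Pr_{u\sim U_n}[D_t(u)=1]\le 2^{\,t-n}$. Applying the CSPRNG guarantee to $D_t$ (with $X=G(U_k)$) gives, for all $t\in[0,n]$,
\[
\Pr_X\!\big[W\le t\big]=\Pr_X[D_t(X)=1]\ \le\ 2^{\,t-n}+\varepsilon(k).
\]

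\textbf{Integrating the tail.} By the layer-cake identity, using $0\le Z^{+}=(n-W)^{+}\le n$ and then $\Pr_X[W< n-s]\le \Pr_X[W\le n-s]\le 2^{-s}+\varepsilon(k)$ (the displayed bound at $t=n-s\in[0,n]$),
\[
\E[Z]\ \le\ \E[Z^{+}]\ =\ \int_0^{n}\Pr_X[Z>s]\,ds\ =\ \int_0^{n}\Pr_X[W< n-s]\,ds\ \le\ \int_0^{n}\!\big(2^{-s}+\varepsilon(k)\big)\,ds\ <\ \tfrac{1}{\ln 2}+n\varepsilon(k).
\]
Since $1/\ln 2<2$ this yields $\rH_T(X)=\E[W]=n-\E[Z]>n-2-n\varepsilon(k)$, and then $\rS_T(X)\le (n+c)-\rH_T(X)<c+2+n\varepsilon(k)$, the claimed epiplexity bound after renaming the additive constant.

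\textbf{Main obstacle.} The one delicate point is the legitimacy of the $D_t$ as non-uniform polynomial-time adversaries: $\mathrm{P}^\star$ depends on $k$ and is not known to be computable, so it must be supplied as advice, and the argument works only because its length is bounded a priori by $\rS_T(X)\le n+c=\mathrm{poly}(k)$ — which is exactly why the theorem is stated against the \emph{non-uniform} CSPRNG notion. The remaining steps (the counting bound under $U_n$, the tail integral) are routine, and the slack between $1/\ln 2\approx 1.443$ and the stated constant $2$ leaves room for the strict inequality without assuming $\varepsilon(k)$ is small.
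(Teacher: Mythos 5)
Your proof is correct and follows essentially the same route as the paper's: threshold distinguishers $D_t(y)=\mathbbm{1}\{P^\star(y)\ge 2^{-t}\}$ with $\mathrm{P}^\star$ supplied as polynomial-length advice, the counting bound $\Pr_{U_n}[D_t=1]\le 2^{t-n}$, transfer via CSPRNG security, and a layer-cake integration of the tail, with the epiplexity bound falling out of the MDL upper bound $n+c$ exactly as in the paper. The only (immaterial) difference is that you integrate over real thresholds, paying $1/\ln 2$ where the paper's discrete sum pays $1$; both fit within the stated slack of $2$.
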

In contrast, the Shannon entropy is $\rH(G(U_k))=k$, polynomial time bounded Kolmogorov complexity will be at most $k+c$ (assuming $n$ is fixed or specified ahead of time) as there is a short and efficiently runnable program $G$ which produces the output, and similarly with other notions such as Levin complexity~\citep{LiVitanyi2008} or time bounded Kolmogorov complexity~\citep{allender2011pervasive}. Taken together, these results show that epiplexity appropriately characterizes pseudorandom numbers as carrying a large amount of time-bounded randomness but essentially no learnable structure, exactly as intuition suggests.

\paragraph{Existence of Random Variables with High Epiplexity.} 
One may wonder whether any high epiplexity random variables exist at all. Indeed, assuming the existence of one-way functions, we can show via a counting argument that there exists a sequence of 
random variables whose epiplexity grows at least logarithmically with the dimension. %
\begin{theorem}\label{thm:high-ep}
    Assuming the existence of one-way functions secure against non-uniform probabilistic polynomial-time adversaries, there exists a sequence of random variables $\{X_n\}_{n=1}^\infty$ over $\{0,1\}^n$ such that $$\mathrm{S}_{\mathrm{Poly}}(X_n) =\Omega(\log n).$$ Proof: see Appendix~\ref{app:high-ep}.
\end{theorem}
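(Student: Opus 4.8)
The plan is to prove existence by a pigeonhole argument over short programs, invoking the assumed one-way functions only through the pseudorandom generator they imply (the standard construction of H{\aa}stad et al.). Fix a constant $c_0$ and set $\ell := c_0 \log n$; there are fewer than $2^{\ell+1}$ programs of length $\le \ell$, and this list is the same for every time bound. By Definition~\ref{def:epiplexity}, if the minimizer $\mathrm{P}^\star$ for a random variable $X_n$ has $|\mathrm{P}^\star| \le \ell$ then $\mathrm{S}_T(X_n) \le \ell$; so to force $\mathrm{S}_T(X_n) > \ell$ it suffices to produce an efficiently samplable $X_n$ together with (i) a cheap reference model $\mathrm{Q} \in \mathcal{P}_T$ whose two-part cost $|\mathrm{Q}| + \E[\log 1/Q(X_n)]$ is small, and (ii) a guarantee that every program $\mathrm{P}$ with $|\mathrm{P}| \le \ell$ has $|\mathrm{P}| + \E[\log 1/P(X_n)]$ strictly larger, so that $\mathrm{P}^\star$ cannot be short.

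For the construction, take a pseudorandom generator stretched in the usual way to a map $G':\{0,1\}^\lambda \to \{0,1\}^{Nn}$ with $\lambda := \lceil \sqrt n\rceil$ and $N := 2^{\lceil \log_2 n\rceil}$, and for each seed $z \in \{0,1\}^\lambda$ let $X_n^{(z)}$ be the uniform distribution over the $N$ length-$n$ blocks of $G'(z)$. Each $X_n^{(z)}$ is samplable and evaluable in a fixed polynomial time because its support has size $N = \mathrm{poly}(n)$, and hard-wiring $z$ yields a canonical program $\mathrm{Q}_z$ with $|\mathrm{Q}_z| = \lambda + O(\log n)$, so $\mathrm{MDL}_T(X_n^{(z)}) \le \lambda + O(\log n) = o(n)$, giving (i) with room to spare. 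For (ii), attach to each program $\mathrm{P}$ the capped‑surprisal statistic $\psi_{\mathrm P}(w) := \tfrac1N \sum_{i=1}^N \min\!\big(\log 1/P(x_i),\, n+1\big)$, where $x_1,\dots,x_N$ are the length-$n$ blocks of $w$; this lies in $[0,n+1]$ and is poly-time computable whenever $\mathrm{P}$ is (a program that is not polynomially time bounded cannot be a model and is ignored). A one-line Jensen argument on the super-/sub-level sets of $P$ gives $\E_{x\sim U_n}[\min(\log 1/P(x),n+1)] \ge n - O(1)$, and Hoeffding over the $N$ i.i.d.\ uniform blocks then gives $\Pr_{w\sim U_{Nn}}[\psi_{\mathrm P}(w) < n/3] \le e^{-\Omega(n)}$; since $\psi_{\mathrm P}$ is poly-time, pseudorandomness of $G'$ promotes this to $\Pr_{z\sim U_\lambda}[\psi_{\mathrm P}(G'(z)) < n/3] \le \mathrm{negl}(\lambda)$. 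Because $\lambda = \lceil\sqrt n\rceil$ grows faster than any power of $\log n$ while there are only $2^{\ell+1} = \mathrm{poly}(n)$ short programs, a union bound leaves a seed $z^\star$ with $\psi_{\mathrm P}(G'(z^\star)) \ge n/3$ for every short $\mathrm{P}$. Setting $X_n := X_n^{(z^\star)}$, every short $\mathrm{P}$ then satisfies $\E[\log 1/P(X_n)] \ge \psi_{\mathrm P}(G'(z^\star)) \ge n/3 \gg \mathrm{MDL}_T(X_n)$, so $\mathrm{P}^\star$ is not short and $\mathrm{S}_{\mathrm{Poly}}(X_n) > \ell = \Omega(\log n)$; collecting the $X_n$ over $n$ gives the sequence, and the same $z^\star$ works for every polynomial time bound at once since neither the list of short programs nor the union bound depends on it.

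The step I expect to be the main obstacle is (ii): turning ``no short program is a good model of $X_n^{(z)}$'' into something counting and pseudorandomness can actually deliver. Raw two-part code length is awkward—it is unbounded, not obviously efficiently computable, and a short program might try to hide structure in the data term—so the argument is routed through the bounded, poly-time statistic $\psi_{\mathrm P}$, which forces the hard distributions to have polynomial support (hence small entropy, which is harmless) and relies on the uniform-distribution lower bound $\E_{U_n}[\min(\log 1/P(\cdot),n+1)] \ge n - O(1)$ holding for all $\mathrm{P}$. Keeping the seed length $\lambda$ super-logarithmic is what lets the per-program negligible error survive the union bound over all $2^{\Theta(\log n)}$ short programs. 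Minor points to verify along the way: programs assigning zero probability somewhere in the support of $X_n$ have infinite cost and are never the minimizer; hard-wiring a $\lambda$-bit seed costs $\lambda + O(\log n)$ bits on the fixed universal machine; and the stretched generator $G'$ can be taken poly-time and pseudorandom with the stated parameters.
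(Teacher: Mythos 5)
Your proof is correct, and it reaches the conclusion by a genuinely different construction than the paper's, though the overall skeleton (a family of efficiently samplable distributions indexed by a hidden secret, a cheap ``keyed'' reference model that hard-wires the secret, a per-program distinguisher argument, and a union bound over the $\mathrm{poly}(n)$ programs of length $O(\log n)$) is the same. The paper instantiates the secret as a PRF key $K$ and takes $P_K$ to be the distribution of $(x, F_K(x))$ for uniform $x$; the distinguisher is a single-query test for membership in the heavy set $A_{Q,t}=\{z: Q(z)\ge 2^{-2(m+t)}\}$, with Markov's inequality converting ``small KL to $Q$'' into ``large mass on $A_{Q,t}$'' and a counting bound showing $A_{Q,t}$ is tiny under the uniform distribution. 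You instead use only a PRG (a weaker primitive, though both follow from OWFs via H{\aa}stad et al.) and take $X_n$ to be the empirical distribution of the blocks of a pseudorandom string; your distinguisher is the capped-surprisal statistic $\psi_{\mathrm P}$, whose uniform-case lower bound $\E_{U_n}[\min(\log 1/P(x),n+1)]\ge n-O(1)$ follows from Gibbs' inequality applied to the normalized $\max(P(x),2^{-(n+1)})$, plus Hoeffding over the $N$ i.i.d.\ blocks. The capping is a clean way to dodge the unboundedness of $\log 1/P$ that the paper handles via the heavy-set/Markov route, and your hard distributions have only $\mathrm{poly}(n)$ support, which makes the gap between the reference MDL ($\approx\sqrt n$) and the cost of any short program ($\ge n/3$) very explicit; the price is that your $X_n$ has tiny Shannon entropy, whereas the paper's has entropy $\Theta(n)$, which is immaterial for this theorem. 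One point to be slightly careful about in both proofs: the union bound needs a single negligible function covering all $\mathrm{poly}(n)$ distinguishers simultaneously, which is justified by folding the short program into the advice string of one universal non-uniform PPT adversary with the time bound $T$ fixed; your parenthetical about ignoring non-poly-time programs and your closing remark about the argument being uniform over polynomial time bounds are at the same level of rigor as the paper's treatment.
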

This result implies that epiplexity can be unbounded; however, logarithmically growing information content only admits a very modest amount of structural information, still far from the power law scaling we see with some natural data. We also note that the argument is nonconstructive and hence does not compromise cryptographic security.
\textbf{Conditional Entropy and Epiplexity.}
To describe situations like image classification, 
where we are only interested in a function which predicts the label from the image, and not the information in generating the images, we define \emph{conditional} time-bounded entropy and epiplexity.

\begin{definition}[Conditional epiplexity and time-bounded entropy]
    For a pair of random variables $X$ and $Y$, define $\mathcal{P}_{T(n)}^X$ as the set of probabilistic models $P$ such that for each fixed $x$, the conditional model $\mathrm{P}_{Y\mid x}$ is in $\mathcal{P}_{T(n)}$. The optimal conditional model with  access to $X$ is:
    \begin{align}
        \mathrm{P}^\star_{Y \mid X} = \argmin_{\mathrm{P} \in \mathcal{P}_T^X} \left\{|\mathrm{P}| + \mathbb{E}_{(X,Y)}\left[-\log P(Y \mid X)\right]\right\}.
    \end{align}
    The conditional \emph{epiplexity} and \emph{time-bounded entropy} are defined as:
    \begin{align}
        \rS_T(Y \mid X) := \left|\mathrm{P}^{\star}_{Y\mid X}\right|, \quad \mathrm{H}_T(Y
        \mid X) := \mathbb{E}_{(X, Y)}\left[-\log P^\star_{Y\mid X}(y\mid x)\right].
    \end{align}
    These quantities are defined with respect to the time bounded MDL over programs which take as input $X,Y$ and output the probabilities over $Y$ (conditioned on $X$), and with expectations taken over both $X$ and $Y$. We note that in general this definition is not equivalent to the difference of the joint and individual entropies, $\rH_T(Y,X)-\rH_T(X)\ne \rH_T(Y|X)$.   
    Unlike Shannon entropy, we can also condition on deterministic strings, which will change the values on account of not needing such a large program $\mathrm{P}$. For example, we may be interested in the conditional epiplexity $\rS_T(X|m)$ or entropy $\rH_T(X|m)$ given a model $m$.
    For a deterministic string $d \in \{0,1\}^*$ we define the conditional epiplexity via
    \begin{align}
        \mathrm{P}^\star_{Y \mid d} = \min_{\mathrm{P} \in \mathcal{P}_T^{\{0,1\}^*}} \left\{|\mathrm{P}| + \mathbb{E}_{Y}\left[-\log P(Y \mid d)\right]\right\},
    \end{align}
    where the minimization is over time bounded functions $P(\cdot \ | \ \cdot)$ that take in the string $d$ as the second argument (which we refer to as $\mathcal{P}_T^{\{0,1\}^*}$).
\end{definition}

For the machine learning setting, we take the random variable $X$ to refer to the \emph{entire dataset} of interest, i.e. typically a collection $X=[X_1,X_2,\dots]$ of many iid samples from a given distribution, rather than a lone sample from, and $\E[\log 1/P(X)]$ scales with the dataset size.  %
Epiplexity typically grows with the size of the dataset (see detailed arguments for why this is the case in \Cref{app:epi-scaling}) as larger datasets allow identifying and extracting more intricate structure and patterns, mirroring the practice of ML training. Moreover, as we will see later, the epiplexity of a typical dataset is orders of magnitudes smaller than the random information content. While not a focus of this paper, conditioning on deterministic strings opens up the possibility to understand what additional data is most useful for a specific machine learning model, such as on top of a pretrained LLM.

\section{Measuring Epiplexity and Time-Bounded Entropy}
\label{sec:measuring}

We have now introduced epiplexity and time-bounded entropy as measures of structural and random information of the data. In this section, we present practical procedures to estimate upper bounds and empirical proxies for these quantities. Intuitively, we want to find a probabilistic model $P(\cdot)$ of the data $X$ that achieves low expected loss $\E[\log 1/P(X)]$, is described by a short program $\mathrm{P},$ and evaluating $P(X)$ takes time at most $T(|X|),$ which we will abbreviate as $T.$ Using this model, we thereby decompose the information of the data into its structural and random components, namely, (1) epiplexity $\rS_T(X)$: the length of the program $|\mathrm{P}|,$ accounting for the bits required to model the data distribution, and (2) time-bounded entropy $\rH_T(X)$: the expected length for entropy coding the data using this model, which accounts for the bits required to specify the particular realization of $X$ within that distribution. We estimate conditional epiplexity analogously, providing random variable conditioning as input into the model. %

Since directly searching over the space of programs is intractable, we restrict attention to probabilistic models parameterized by neural networks, as they achieve strong empirical compression across data modalities \citep{mackay2003information, goldblum2023no, deletang2023language, balle2018variational} and capture the most relevant ML phenomenology. %
While a naive approach is to let $\mathrm{P}$ be a program that directly stores the architecture and weights of a neural network and evaluates it on the given data, this approach can significantly overestimate the information content in the weights, particularly for large models trained on relatively little data. Instead, we will use a more efficient approach that encodes the training process that produces the weights. We will discuss two approaches for encoding neural network training processes, based on \emph{prequential coding} \citep{dawid1984present} and \emph{requential coding} \citep{finzi2026requential}, respectively. The former is more straightforward to understand and evaluate, but relies on a heuristic argument to separate structure bits from noise bits, while the latter is rigorous at the cost of being more difficult to evaluate. 
Fortunately, both approaches often yield comparable rankings of epiplexity across datasets (\Cref{sec:preq-proxy}).

Moving forward, we will measure time by the number of floating-point operations (FLOPs) and dataset size by number of tokens, so that training a model with $N$ parameters on $D$ tokens takes time approximately $6ND$ \citep{kaplan2020scaling}, while evaluating it on $X$ takes time $2N\D$ with $\D=|X|$ the number of tokens in $X.$ To distinguish $X$ from the training dataset, which we are free to choose, we will refer to $X$ as the test dataset, as it is the data we need to perform inference on.

\begin{figure*}[t]
\centering
\begin{minipage}[t]{0.32\linewidth}
    \centering
    \includegraphics[width=\linewidth]{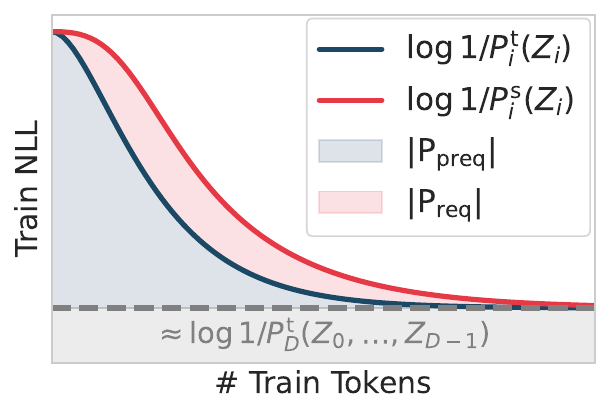}
    \subcaption{Estimate information in model}
    \label{fig:estimate_epi}
\end{minipage}%
\hfill
\begin{minipage}[t]{0.31\linewidth}
    \centering
    \includegraphics[width=\linewidth]{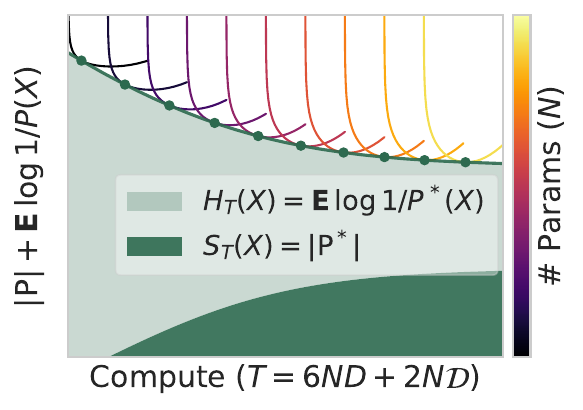}
    \subcaption{Compute-optimal 2-part code}
    \label{fig:min-two-part}
\end{minipage}%
\hfill
\begin{minipage}[t]{0.32\linewidth}
    \centering
    \includegraphics[width=\linewidth]{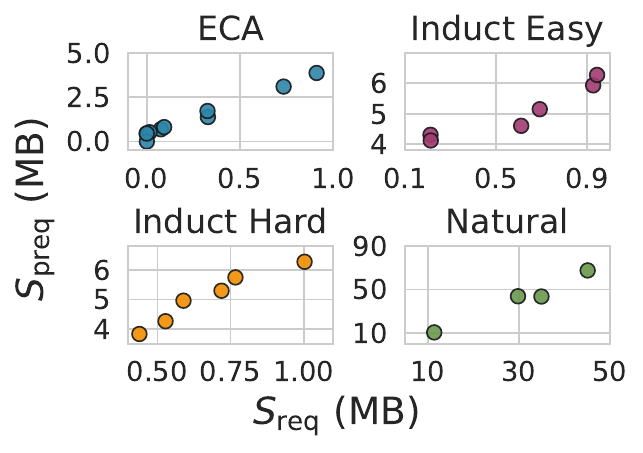}
    \subcaption{Requential vs Prequential}
    \label{fig:req_vs_preq}
\end{minipage}%
\caption{\small \textbf{How to estimate epiplexity.} (\textbf{a}) We consider two approaches for efficiently coding trained neural networks. Prequential estimation estimates information content as the area under the loss curve of a model above its final loss, with the training set matching the test data distribution. Requential coding, which provides an explicit code for $P^{\mathrm{s}}$ with expected length as the cumulative KL between a student model $P^{\mathrm{s}}$ and the teacher $P^{\mathrm{t}}$ that generates its \emph{synthetic} training data, visualized approximately by their loss gap. We typically choose $P^{\mathrm{t}}$ to be a model trained on the \emph{real} training set, as in prequential coding.
(\textbf{b}) Using either approach, we optimize hyperparameters (model size $N$, training tokens $D$, etc.) to find the shortest two-part code for each compute budget, which decomposes into the estimated epiplexity and time-bounded entropy.
(\textbf{c}) Comparing prequential and requential coding on four groups of datsets used in this work. The prequential estimate is typically larger, but the two correlate well, particularly within each group.
}
\label{fig:measuring}
\end{figure*}

\subsection{Approximating Model Description Length with Prequential Coding}\label{sec:preq}
Prequential coding provides a classic approach for compressing the training process of a neural network. We assume a batch size of one for simplicity, but generalizing to batch sizes larger than one is straightforward. Starting with a randomly initialized network $P_0$ (where the subscript indicates timestep), we proceed iteratively: at each step $i$, we entropy encode the current training token $Z_i$ using $\log 1/P_i(Z_i)$ bits, then train the model on this token to produce $P_{i+1}$. Typically $Z_i$'s are drawn i.i.d. from the same distribution as $X.$ On the side of the decoder, a synchronized model is maintained; the model decodes $Z_i$ using $P_i$ and then trains on it to produce the identical $P_{i+1}$. Omitting small constant overheads for specifying the random initialization, architecture, and training algorithm, a total of $L({Z_{:M}, P_M}) = \sum_{i=0}^{M-1} \log 1/P_i(Z_i)$ bits yields an explicit code for both the training data $Z_{:M} = \{Z_0, \ldots, Z_{M-1}\}$ and the final model weights $P_M$, which can be decoded in time $6ND$ for a model with $N$ parameters trained on $D$ tokens (typically $D> M$ as each example contains multiple tokens). Despite having an explicit code for $Z,P_M$, we cannot easily separate this into a code for $P_M$ alone for estimating epiplexity.

To isolate the description length of $P_M$ alone, we adopt the heuristic in \citet{zhang2020measuring} and \citet{finzi2025compute}: we first estimate the description length of the training data given $P_M$ as its entropy code length under the final model, $L({Z_{:M}|P_M}) = \sum_{i=0}^{M-1} \log 1/P_M(Z_i)$. Then, appealing to the symmetry of information, which states $K(P_M) = K(Z_{:M}, P_M) - K(Z_{:M}|P_M)$ up to constant terms, we estimate the description length of $P_M$ as the difference $L({Z_{:M}, P_M}) - L({Z_{:M} | P_M})$:
\begin{equation}\label{eq:preq}
    |\mathrm{P}_{\mathrm{preq}}| \,\approx \sum_{i=0}^{M-1} \qty(\log 1/P_i(Z_i) - \log 1/P_M(Z_i)).
\end{equation}
If $Z_i$ is sampled i.i.d., as is typically the case, then the code length for the model \emph{can be visualized as the area under the loss curve above the final loss} in \Cref{fig:estimate_epi}. Intuitively, the model absorbs a significant amount of information from the data if training yields a sustained and substantial reduction in loss. For random data, $\log 1/P_i(Z_i)$ never decreases, while for simple data, $\log 1/P_i(Z_i)$ drops rapidly and stabilizes, both leading to small $|\mathrm{P}_{\mathrm{preq}}|.$ We note that the prequential loss values are effectively taken on estimates of the \emph{test loss}, because they evaluate the log probabilities on a batch before it is trained on, a central detail to the coding scheme. In cases where train and test diverge, such as when there is overfitting, this difference could become important important.

Encoding the test dataset $X$ (not to be confused with the training data) using this model, we obtain a two-part code of expected length $|\mathrm{P}_{\mathrm{preq}}| + \E[\log 1/P_M(X)]$ that runs in time $6ND + 2N\D.$ We optimize the training hyperparameters (e.g., learning rate) and the trade-off between $N$ and $D$ subject to the time bound $6ND + 2N\D \leq T$ to find the optimal $P^\star$ that minimizes the two-part code within this family, and estimate epiplexity and time-bounded entropy as $\rS_T(X) = |\mathrm{P}_{\mathrm{preq}}^\star|$ and $\rH_T(X)= \E[\log1/P^\star(X)].$ The better these hyperparameters are optimized, the more accurate our estimates become. We use the Maximal Update Parameterization ($\mu$P) \citep{yang2022tensor} to ensure the optimal learning rate and initialization are consistent across model sizes, simplifying tuning. We estimate the expectation $\E[\log 1/P_M(X)]$ by its empirical value on held-out validation data, i.e., the validation loss scaled by the size of $X$. We detail the full procedure in \Cref{app:measure}, such as how we choose the hyperparameters and estimate the Pareto frontier of MDL vs compute.

While conceptually simple, practically useful, and easy to evaluate, this prequential approach to approximating epiplexity is not rigorous for two reasons. First, both $L({Z_{:M}, P_M})$ and $L({Z_{:M} | P_M})$ can only upper-bound the respective Kolmogorov complexities, and thus their difference does not yield an upper bound for $K(P_M).$\footnote{We have $L({Z_{:M}, P_M}) + O(1) \geq K(Z_{:M}, P_M),$ but not that $L({Z_{:M} | P_M}) + O(1) \leq K(Z_{:M} | P_M).$} Second, even setting this issue aside, the argument only establishes the existence of a program that encodes $P_M$ with length $|\mathrm{P}_{\mathrm{preq}}|,$ but does not guarantee that its runtime falls within $6ND,$ since the symmetry of information does not extend to time-bounded Kolmogorov complexity. Nevertheless, prequential coding can serve as a useful starting point for crudely estimating epiplexity, particularly convenient when one already has access to the loss curve from an existing training run.

\subsection{Explicitly Coding the Model with Requential Coding}
To address the shortcomings of the previous approach based on prequential coding, we adopt requential coding \citep{finzi2026requential} for constructing an explicit code of the model with a known runtime. Rather than trying to code a particular training dataset, with requential coding one can use the insensitivity to the exact data points sampled to code for \emph{a} sampled dataset that leads to a performant model but without paying for the entropy of the data. Specifically, it encodes a training run where at step $i$ a student model $P^{\mathrm{s}}_i$ is trained on a synthetic token sampled randomly from a teacher model $P^{\mathrm{t}}_i$, where the sequence $P^{\mathrm{t}}_0, \ldots, P^{\mathrm{t}}_{M-1}$ are arbitrary teacher model checkpoints. We typically choose $P^{\mathrm{t}}_i$ to be the checkpoints from training on the original \emph{real} training set, as in prequential coding. Using relative entropy coding \citep{theis2022algorithms}, the synthetic tokens $\widetilde{Z}_i \sim P^{\mathrm{t}}_i$ can be coded given only the student $P^{\mathrm{s}}_i$ (synchronized between encoder and decoder) using $\mathrm{KL}(P^{\mathrm{t}}_i\|P^{\mathrm{s}}_i) + \log\bigl(1+\mathrm{KL}(P^{\mathrm{t}}_i\|P^{\mathrm{s}}_i)\bigr) + 4$ bits in expectation.
Summing over all steps gives the requential code length for $P^{\mathrm{s}}_M$:
\begin{align}
|\mathrm{P}_{\mathrm{req}}|\,
= \sum_{i=0}^{M-1} \mathrm{KL}(P^{\mathrm{t}}_i\|P^{\mathrm{s}}_i) + \log\bigl(1+\mathrm{KL}(P^{\mathrm{t}}_i\|P^{\mathrm{s}}_i)\bigr) + 4 + O(1)
\approx \sum_{i=0}^{M-1} \mathrm{KL}(P^{\mathrm{t}}_i\|P^{\mathrm{s}}_i),
\end{align}
where the logarithmic and constant overheads are typically negligible due to large sequence length and batch size, and as before we omit the small constant cost of specifying the random initialization, architecture, and training algorithm. In addition to providing an explicit code, a key advantage of requential coding is its flexibility in choosing the teacher sequence: by selecting teachers $P^{\mathrm{t}}_i$ that remain close to the student $P^{\mathrm{s}}_i$ while still pointing toward the target distribution, we keep the per-step coding cost $\mathrm{KL}(P^{\mathrm{t}}_i\|P^{\mathrm{s}}_i)$ small while effectively guiding the student's learning.

\Cref{fig:estimate_epi} connects requential coding to the student's and teacher's loss curves: suppose we take as teachers the checkpoints $P^{\mathrm{t}}_0,\ldots,P^{\mathrm{t}}_{M-1}$ from a model trained on real data $Z_0,\ldots,Z_{M-2} \sim P_X$. For visualization, we can then estimate $\mathrm{KL}(P^{\mathrm{t}}_i\|P^{\mathrm{s}}_i)$ by the loss gap $\log 1/P^{\mathrm{s}}_i(Z_i) - \log 1/P^{\mathrm{t}}_i(Z_i)$, which is accurate when $P^{\mathrm{t}}_i \approx P_X$. 
We can thus visualize the code length for the student as approximately the area between the teacher's and student's loss curves on real data, as shown in \Cref{fig:estimate_epi}. 

The two-part code has expected length $|\mathrm{P}_{\mathrm{req}}| + \E[\log 1/P^{\mathrm{s}}_M(X)],$ consisting of first decoding $P^{\mathrm{s}}_M$ by replaying the training process, which takes time $6ND$ for a total of $D$ requential training tokens, and then evaluating $P^{\mathrm{s}}_M$ on the test dataset $X,$ taking an additional time $2N\D,$ for a total runtime of $6ND + 2N\D$. We optimize the training hyperparameters, teacher choices, and the trade-off between $N$ and $D$ subject to the specified time bound $T$ to find the optimal model $P^\star$ minimizing the two-part code, and estimate $\rS_T(X) = |\mathrm{P}_{\mathrm{req}}^\star|$ and $\rH_T(X) = \E[\log 1/P^\star(X)].$ See details in \Cref{app:detail-procedure}.

\subsection{Comparison Between the Two Approaches and Practical Recommendations}\label{sec:preq-proxy}
\Cref{fig:req_vs_preq} compares the estimated epiplexity obtained by the two approaches across four groups of datasets used in this work: ECA (\Cref{sec:deterministic-info-creation}), easy and hard induction (\Cref{sec:induction}), and natural datasets (\Cref{sec:epiplexity-natural}). While the prequential estimate is typically several times larger than the requential estimate, the two estimates correlate well, particularly within each group where the datasets yield similar learning dynamics. We detail the datasets and time bounds used in \Cref{app:preq-vs-req}. This general agreement is expected since the prequential estimate can be viewed as an approximation of requential coding with a static teacher (\Cref{app:preq-as-approx}). In general, however, the discrepancy between the two estimates will depend on particular datasets and training configurations, and a good correlation between the two is not guaranteed. 

While requential coding is the more rigorous approach, it is typically $2\times$ to $10\times$ slower than prequential coding, which requires only standard training. The overhead depends on batch size, sequence length, and inference implementation (smaller overhead for large batches and short sequences), as requential coding requires repeatedly sampling from the teacher, though it is possible that the overhead can be reduced with more efficient algorithms. Therefore, we recommend using prequential coding for crudely estimating epiplexity and ranking the epiplexity of different datasets, particularly when one has access to the loss curve from an existing expensive training run (e.g., see an application in \Cref{sec:epiplexity-natural}), and requential coding for obtaining the most accurate estimates otherwise.

\subsection{How Epiplexity and Time-Bounded Entropy Scale with Compute and Data}
Under natural assumptions about neural network training—namely, that larger models are more sample-efficient and that there are diminishing returns to scaling model size or data alone—we expect epiplexity and time-bounded entropy to exhibit certain generic scaling behavior as a function of the compute budget $T$ and dataset size $\D$. In \Cref{app:epi-scaling}, we show that, under these assumptions, the compute-optimal model size $N^\star(T)$ and training data size $D^\star(T)$ are generally increasing in the compute budget $T$, which implies that epiplexity $\rS_T(X)$ typically grows with $T$ while time-bounded entropy $\rH_T(X)$ decreases. In the infinite-compute limit, epiplexity $\rS_\infty(X)$ typically grows with the test set size $\D = |X|$, while the per-token time-bounded entropy $\rH_\infty(X)/\D$ decreases. These results align with our intuition that larger compute budgets and more data allow the model to extract more structural information from the dataset and reduce the apparent randomness remaining in each sample. However, they should be understood only as typical trends, with a counterexample shown in \Cref{sec:emergent} relating to the phenomenon of emergence.

\section{Three Apparent Paradoxes of Information}
\label{sec:paradox}

To illustrate the lacunae in existing information theory perspectives, we highlight three \emph{apparent paradoxes} of information: (1) information cannot be created by deterministic transformations; (2) total information content of an object is the same regardless of the factorization;  and (3) likelihood modeling can only learn to match the data-generating process. %
Each statement captures some existing sentiment within the machine learning community, can be justified mathematically by Shannon and algorithmic information theory, and yet seems to be in conflict with intuitions and experimental observations. In this section, we will show with both theoretical results and empirical evidence that time bounding and epiplexity help resolve these apparent paradoxes.

\subsection{Paradox 1: Information Cannot be Created by Deterministic Transformations}
\label{sec:deterministic-info-creation}
Both Shannon and algorithmic information theory state in some form that the total information cannot be increased by applying deterministic transformations on existing data. The data processing inequality (DPI) states that if some information source $W$ produces natural data $X$ that are collected, then no deterministic \emph{or stochastic} transformations used to produce $Y$ from $X$ can increase the mutual information with the variable of interest $W$: $I(Y;W) \le I(X;W)$. Similarly, information non-increase states that a deterministic transformation $f$ can only preserve or decrease the Shannon information, a property that holds pointwise $-\log P_Y(f(x)) \le -\log P_X(x)$ and in expectation: $\rH(f(X)) \le \rH(X)$ (we note $X$ here is a discrete random variable). In algorithmic information theory, there is a corresponding property: $K(f(x)) \le K(x)+K(f)+c$ for a fixed constant $c$. These inequalities appear to rule out creating new information with deterministic computational processes.

How can we reconcile this fact with algorithms like AlphaZero \citep{silver2018general} that can be run in a closed environment from a small deterministic program on the game of chess, extracting insights about the game, different openings, the relative values of pieces in different positions, tactics and high level strategy, and requiring megabytes of information stored in the weights? Similarly we have dynamical systems with simple descriptions of the underlying laws that produce rich and unexpected structures, from which we can learn new things about them and mathematics. 

We also have evidence that synthetic data is helpful for model capabilities \citep{liu2024deepseek,gerstgrasser2024model,maini2024rephrasing,openai2025gpt5systemcard}. Moreover, if we believe that the processes that create natural data could in principle have been simulated to sufficient precision on a large computer, then all data could have been equivalently replaced with synthetic data. For practical synthetic data produced from transformations of samples from a given model and prompt, this sampling is performed with pseudorandom number generators, making the entire transformation deterministic. If we consider $f$ as the transformations we use to produce synthetic data and $x$ was the limited real data we started with, these inequalities appear to state very concretely that our synthetic data adds no additional information beyond the model and training data.

\begin{figure*}[t!]
\centering

\centering
\includegraphics[width=0.49\linewidth]{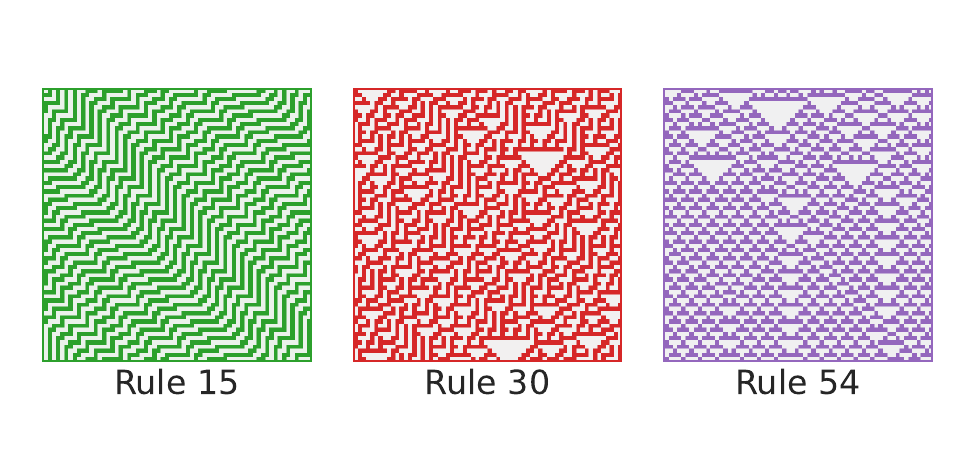}
\hfill
\centering
\includegraphics[width=0.49\linewidth]{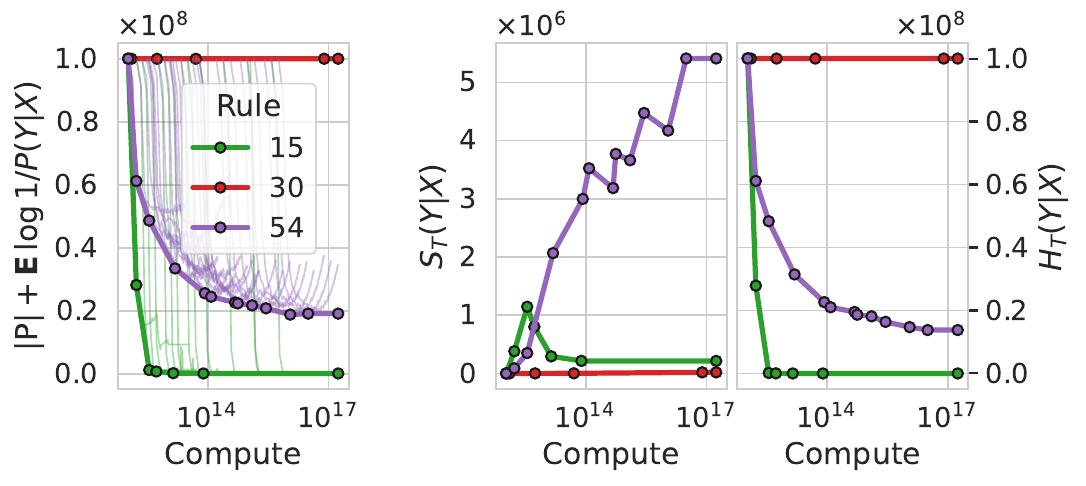}
\caption{
\small
\textbf{
Information created with cellular automata.}
(\textbf{Left}) Example rollouts from random initial conditions of the class II rule 15, class III rule 30, and class IV rule 54. Time flows from up to down.
(\textbf{Right}) Measuring epiplexity on data produced by these transformations, we see that rule 15 produces little information (low $\rH_T$, low $\rS_T)$, rule 30 produces lots of unpredictable random information (high $\rH_T$, low $\rS_T$), and rule 54 produces both random and structural information (medium $\rH_T$, high $\rS_T$). These observations are reflected in the training loss curve of LLMs, which saturates quickly for rule 15, makes no progress for rule 30, and makes continued progress with compute for rule 54.
}
\label{fig:eca}
\end{figure*}

Whatever information it is that we mean when we say that AlphaZero has produced new and unexpected insights in chess, or new theoretical results in mathematics, or with synthetic data, it is not Shannon or algorithmic information. We argue that these unintuitive properties of information theory are a consequence of assuming unlimited computation for the observer. With limited computation, a description of the AlphaZero algorithm and the result of running AlphaZero for thousands of TPU hours are distinct. To build intuition, we start with the humble CSPRNG which also creates time-bounded information through computation (albeit random information).

\begin{theorem}\label{thm:creation}
    Let $G: \{0,1\}^k \to \{0,1\}^n$ be a $\mathrm{PRG}$ which admits advantage $\varepsilon(k)$ and $U_k$ be the uniform distribution. $\rH_{\mathrm{Poly}}(G(U_k))-\rH_{\mathrm{Poly}}(U_k)> n-k-n\varepsilon(k)-c$ for a fixed constant $c$. \\
    Proof: see Appendix~\ref{sec:epi_increase}.
\end{theorem}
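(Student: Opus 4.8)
The plan is to obtain this statement as an immediate corollary of \Cref{thm:csprng_entropy} together with an elementary upper bound on the time-bounded entropy of a uniform source. The two ingredients are: (i) the lower bound $\rH_{\mathrm{Poly}}(G(U_k)) > n - 2 - n\varepsilon(k)$ already established in \Cref{thm:csprng_entropy}; and (ii) the bound $\rH_{\mathrm{Poly}}(U_k) \le k + O(1)$, i.e.\ a uniform source of $k$ bits carries essentially no more time-bounded entropy than its trivial $k$ bits. Granting both,
\[
\rH_{\mathrm{Poly}}(G(U_k)) \;>\; n - 2 - n\varepsilon(k) \;=\; \bigl(n - n\varepsilon(k) - k\bigr) + k - 2 \;\ge\; \rH_{\mathrm{Poly}}(U_k) + n - n\varepsilon(k) - k - O(1),
\]
which is exactly the claimed inequality once the additive constants from (i) and (ii) are absorbed into the $O(1)$ term.

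For ingredient (ii) I would observe that it is just property (2) of the Basic Properties specialized to $X = U_k$: the constant-size program $\mathrm{P}_{\mathrm{unif}}$ for the uniform distribution on $\{0,1\}^k$ — returning $2^{-k}$ on an evaluation query and copying $k$ fresh random-tape bits on a sampling query — runs in time linear in $k$, hence within any polynomial bound, and achieves expected code length $\E[\log 1/P_{\mathrm{unif}}(U_k)] = k$. Optimality of $\mathrm{P}^\star$ then forces $|\mathrm{P}^\star| + \rH_{\mathrm{Poly}}(U_k) \le |\mathrm{P}_{\mathrm{unif}}| + k = k + O(1)$, and dropping the nonnegative term $|\mathrm{P}^\star|$ gives the bound. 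The one bookkeeping point is that the two entropies live over sample spaces of different dimension ($\{0,1\}^n$ versus $\{0,1\}^k$): I would fix a single polynomial $T$, read $\rH_{\mathrm{Poly}}(G(U_k))$ as the $T(n)$-bounded entropy and $\rH_{\mathrm{Poly}}(U_k)$ as the $T(k)$-bounded entropy, and note that since $n = \mathrm{poly}(k)$ by hypothesis, $T(n)$ is still polynomial in $k$, so both are genuine polynomial-time quantities and \Cref{thm:csprng_entropy} applies.

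There is no genuinely hard step here: all the substance — that a CSPRNG output is near-maximally unpredictable to polynomial-time observers and therefore has time-bounded entropy close to $n$ — is already contained in \Cref{thm:csprng_entropy}, which we may assume. If anything merits the label ``main obstacle,'' it is only making the constant-chasing airtight: choosing the $O(1)$ in the theorem statement large enough to swallow both the $-2$ from \Cref{thm:csprng_entropy} and the $O(1)$ from the uniform-program bound, and keeping the dimension/time-bound bookkeeping consistent. The point worth stating explicitly is the conceptual payoff: a fixed polynomial-time deterministic map has manufactured roughly $n - k$ bits of time-bounded information out of only $k$ truly random seed bits, and — consistent with the companion bound $\rS_{\mathrm{Poly}}(G(U_k)) \le c + n\varepsilon(k)$ in \Cref{thm:csprng_entropy} — this created information is entirely random content $\rH_{\mathrm{Poly}}$ rather than structural content $\rS_{\mathrm{Poly}}$.
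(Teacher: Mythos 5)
Your proposal is correct and follows essentially the same route as the paper's own proof: combine the CSPRNG entropy lower bound $\rH_{\mathrm{Poly}}(G(U_k)) \ge n - 2 - n\varepsilon(k)$ from \Cref{thm:csprng_entropy} with the upper bound $\rH_{\mathrm{Poly}}(U_k)\le k+O(1)$ obtained from the constant-size uniform program (Lemma~\ref{lemma:maxdl}), and subtract. Your extra remarks on the dimension/time-bound bookkeeping are a welcome clarification that the paper leaves implicit, but the argument is otherwise identical.
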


Notably, we have a deterministic function which dramatically increases the time-bounded information content of the input. It is worth contrasting this result with \autoref{eq:basic}, where the time-bounded information content increase from a deterministic function \emph{can} be bounded if the inverse function has a short program which can run efficiently. The statement highlights an important asymmetry between the function $G$ and its inverse with fixed computation that does not hold with unlimited computation (e.g. $K(G^{-1})=K(G)+O(1)$). Simultaneously, it provides some useful guidance for synthetic data: if we want to produce interesting information, we should make sure the functions we use do not have simple and efficiently computable inverses.

As an illustrative example, consider the iterated dynamics of elementary cellular automata \citep{wolfram2003new,zhang2024intelligence}. An elementary cellular automaton (ECA) is a one‑dimensional array of binary cells that evolves in discrete time steps according to a \emph{fixed} rule mapping each cell’s current state and the states of its two immediate neighbors to its next state. Despite their simple formulation -- only 256 possible rules---these systems can produce a rich variety of behaviors, from stable and periodic patterns to chaotic and computationally universal dynamics. We setup the problem of predicting $Y_i=F(X_i)$ from random initial data $X_i$ for $F$ being an ECA iterated $48$ times on a grid of size 64, and assemble these pairs into a dataset $X=[X_1,\dots,X_K]$ and $Y=[Y_1,\dots,Y_K]$ for a total dataset of $\D=100$M tokens. %
We measure the conditional information content $Y|X$ (epiplexity and entropy) for ECA rules 15, 30, and 54 by training LLMs on this dataset. We provide a visualization of these dynamics in \autoref{fig:eca} (left). For the class II rule 15 in the Wolfram hierarchy \citep{wolfram2003new}, the produced behavior is periodic and has a simple inverse. Consequently, in \autoref{fig:eca} (right), we see that training dynamics that rapidly converge to optimal predictions and with little epiplexity or time-bounded entropy. With the class III rule 30, the computation produces outputs that are inherently intractable to predict with limited computation, and as a result we see that there is maximal time-bounded entropy that is produced but no epiplexity. For the class IV rule 54, we see that the dynamics are complex but also partly understandable: the loss decreases slowly and much epiplexity is produced. These results highlight the sensitivity of epiplexity to the generating process. With the same compute spent and with a very similar program we can have drastically different outcomes, producing simple objects, producing only random content, and producing a mix of random and structured content.

\subsection{Paradox 2: Information Content is Independent of Factorization}
\label{sec:factorization}

An important property of Shannon's information is the symmetry of information, which states that the amount of information content does not change with factorization.
The information we acquire when predicting $x$ and then $y$ is exactly equal to when predicting $y$ and then $x$:~ Shannon entropy satisfies $\rH(Y \mid X) + \rH(X) = \rH(X,Y) = \rH(X\mid Y) + \rH(Y)$. An analogous property also holds for Kolmogorov complexity, known as the symmetry of information identity: $K(y\mid x) + K(x) = K(x \mid y) + K(y) + O(1)$.

On the other hand, multiple works have observed that natural text is better compressed (with final model achieving higher likelihoods) when modeled in the left-to-right order (for English) than when modeled in reverse order~\citep{papadopoulos2024arrows, bengio2019meta}, picking out an \textit{arrow of time} in LLMs where one direction of modeling is preferred over the other. It seems likely that for many documents, other orderings may lead to more information extracted by LLMs. Similarly, as we will show later, small rearrangements of the data can lead to substantially different losses and downstream performance. Cryptographic primitives like one way functions and block cyphers also provide examples where the order of conditioning can make all the difference to how entropic the data appears, for example considering autoregressive modeling of two prime numbers followed by their product vs the reverse ordering. These experimental results and cryptographic ideas indicate what can be learned is dependent on the ordering of the data, which in turn suggests that different amounts of ``information'' are extracted from these different orderings. 

Our time-bounded definitions capture this discrepancy. Under the existence of one way permutations, we can prove that a gap in prediction exists over different factorizations for time bounded entropy.
\begin{theorem}\label{thm:asymmtry}
    Let $f$ be a one-way permutation and let $X = U_n$ be uniform and $Y=f(X)$. %
    
    $ \rH_{\mathrm{Poly}}(X \mid Y) + \rH_{\mathrm{Poly}}(Y) > \rH_{\mathrm{Poly}}(Y \mid X) + \rH_{\mathrm{Poly}}(X)+\omega(\log n). $ \\
    Proof: see Appendix~\ref{sec:inf_fac}.
\end{theorem}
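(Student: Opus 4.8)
The plan is to evaluate the four time-bounded entropy terms separately, with essentially all the content in lower-bounding $\rH_{\mathrm{Poly}}(X\mid Y)$. First I would dispatch the three easy terms. Since $f$ is a permutation and $X=U_n$ is uniform, $Y=f(X)$ is also uniform on $\{0,1\}^n$; the uniform distribution is a constant-size time-bounded model whose expected log-loss is exactly $n$, and because every model's expected log-loss is at least the Shannon entropy $\rH(U_n)=n$, one gets $\rH_{\mathrm{Poly}}(X)=n+O(1)$ and $\rH_{\mathrm{Poly}}(Y)=n+O(1)$, so $\rH_{\mathrm{Poly}}(Y)-\rH_{\mathrm{Poly}}(X)=O(1)$. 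For $Y\mid X$: conditioned on $X=x$, the law of $Y$ is the point mass at $f(x)$, so since $f$ is computable in time $\mathrm{poly}(n)$ there is an $O(1)$-size program placing probability $1$ on $f(x)$ with expected log-loss $0$; as conditional log-loss is always nonnegative, $\rH_{\mathrm{Poly}}(Y\mid X)=O(1)$. Hence the claimed gap reduces to showing $\rH_{\mathrm{Poly}}(X\mid Y)=\omega(\log n)$.

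For that, I would set up a reduction from inverting $f$. Fix a polynomial time bound $T$ and let $\mathrm{P}^\star$ be the conditional MDL-minimizer defining $\rH_T(X\mid Y)$. Comparing against the trivial model (``ignore $Y$, output $U_n$'', of MDL $n+O(1)$) gives $|\mathrm{P}^\star|\le n+O(1)$, so $\mathrm{P}^\star$ is polynomial-size, and as an element of $\mathcal{P}_T^Y$ it is both evaluable and---crucially---sampleable in time $T(n)=\mathrm{poly}(n)$. Let $A$ be the non-uniform PPT algorithm that takes $\mathrm{P}^\star$ as advice and, on input $y$, returns a sample $\hat x\sim P^\star(\cdot\mid y)$. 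Since $f$ is a permutation, $f^{-1}(f(x))=\{x\}$, so $A$ inverts $f$ at $f(x)$ iff $\hat x=x$, and Jensen's inequality applied to $-\log$ gives
\[
\Pr_{x\sim U_n,\, A}\!\bigl[A(f(x))\in f^{-1}(f(x))\bigr]
=\E_{x\sim U_n}\!\bigl[P^\star(x\mid f(x))\bigr]
\ge 2^{-\rH_T(X\mid Y)}.
\]
One-wayness of $f$ against non-uniform PPT adversaries forces the left-hand side to be $\mathrm{negl}(n)$, hence $2^{-\rH_T(X\mid Y)}<\mathrm{negl}(n)$, i.e. $\rH_T(X\mid Y)>\log(1/\mathrm{negl}(n))=\omega(\log n)$.

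Finally I would combine the pieces: $[\rH_{\mathrm{Poly}}(X\mid Y)+\rH_{\mathrm{Poly}}(Y)]-[\rH_{\mathrm{Poly}}(Y\mid X)+\rH_{\mathrm{Poly}}(X)]=\omega(\log n)+O(1)=\omega(\log n)$, i.e. the left side exceeds the right side by an $\omega(\log n)$ amount, as claimed. The main obstacle is the reduction step: I must verify that the MDL-optimal conditional model is legitimately sampleable within the polynomial time bound (so it defines a genuine poly-time inverter) and short enough to serve as polynomial-size non-uniform advice, and then correctly convert the expected-log-loss bound into a success-probability bound via Jensen. Everything else---uniformity of $Y$, the point-mass model for $Y\mid X$, and the $O(1)$ slack in the unconditional entropies---is routine bookkeeping.
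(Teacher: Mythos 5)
Your proposal is correct and follows essentially the same route as the paper's proof (Theorem~\ref{thm:soi_entropy} in Appendix~\ref{sec:inf_fac}): both dispatch $\rH_{\mathrm{Poly}}(X)$, $\rH_{\mathrm{Poly}}(Y)$, and $\rH_{\mathrm{Poly}}(Y\mid X)$ by the uniform and point-mass models, and both lower-bound $\rH_{\mathrm{Poly}}(X\mid Y)$ by turning the optimal conditional sampler into a non-uniform PPT inverter and applying Jensen's inequality to relate its (negligible) success probability $\E[P^\star(X\mid Y)]$ to the expected log-loss. Your extra observation that $|\mathrm{P}^\star|\le n+O(1)$, so it is valid polynomial-size advice, is a worthwhile detail the paper leaves implicit.
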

As a corollary, we show no polynomial time probability model which can fit a one way function's forward direction can satisfy Bayes theorem (see \autoref{cor:inf_fac2}). Adding to these theoretical results, we look empirically at the gap in time-bounded entropy for one way functions, and the gap in both entropy and epiplexity over two orderings of predicting chess data.
In \autoref{fig:soi}(a), we choose $f$ to be given by the $8$ steps of evolution of the ECA rule 30 with state size $n$ and periodic boundary conditions~\citep{wolfram2003new}. Though distinct from the one way functions used in cryptography, rule 30 is believed to be one way~\citep{wolfram2003new} and unlike typical one way functions, the forward pass of rule 30 can be modeled by an autoregressive transformer, which we demonstrate by constructing an explicit RASP-L \citep{zhou2023algorithms, weiss2021thinking} program in Appendix~\ref{app:rule30_rasp}. As shown in \autoref{fig:soi}(a), the model achieves the Shannon entropy (gray) in the forward direction, but has a consistent gap in the reverse direction.

\begin{figure*}[t!]
\centering
\begin{minipage}[b]{0.34\linewidth}
    \centering
    \includegraphics[width=\linewidth]{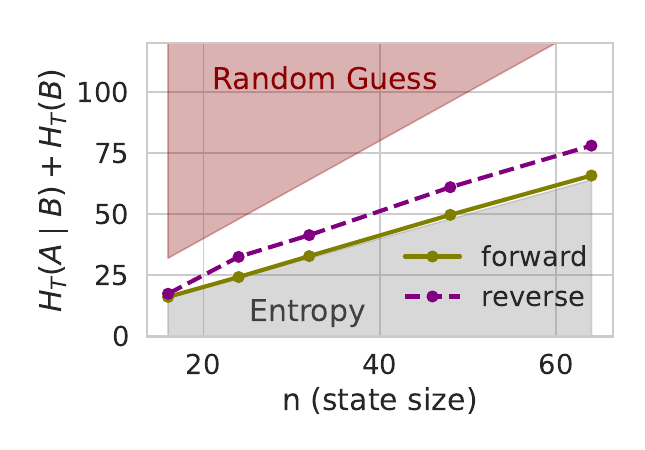}
    \subcaption{One way functions}
\end{minipage}%
\hfill
\begin{minipage}[b]{0.3\linewidth}
    \centering
    \includegraphics[width=\linewidth]{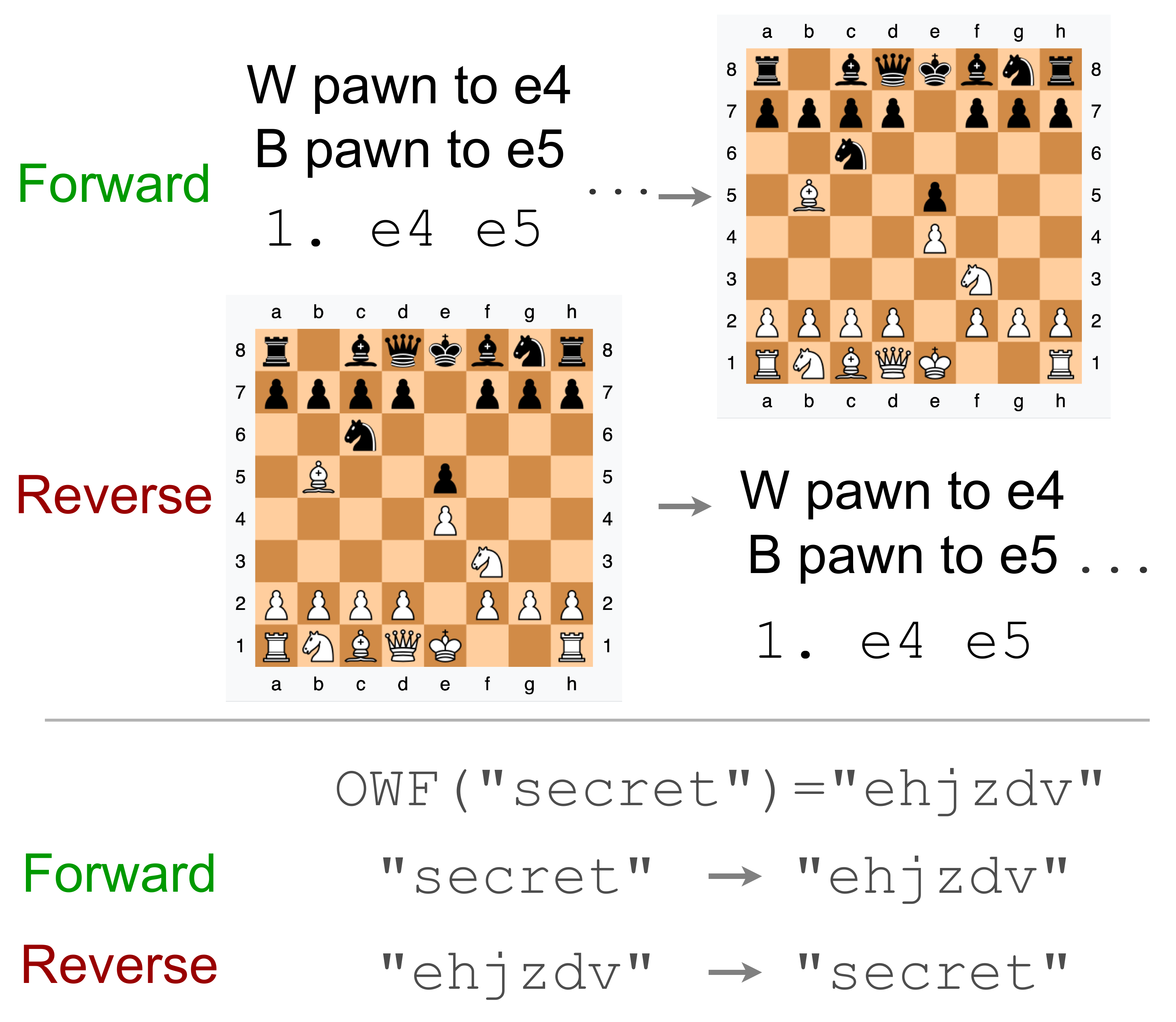}
    \subcaption{Factorization order}
    \label{fig:board-order}
    
\end{minipage}%
\hfill
\begin{minipage}[b]{0.34\linewidth}
    \centering
    \includegraphics[width=\linewidth]{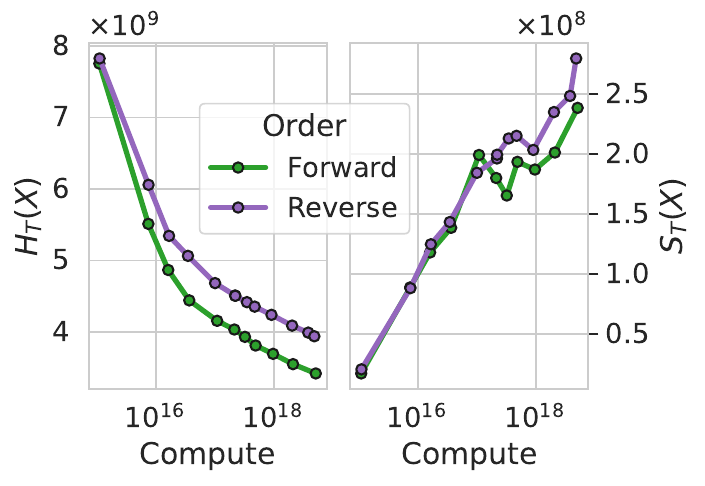}
    \subcaption{Chess orderings}
    \label{fig:chess-order-info}
    
\end{minipage}%

\caption{
\small \textbf{Factorization matters.}
(\textbf{a}) We compare the losses from modeling a conjectured one way function in forward and reverse as the state size $n$ is increased. The model reaches Shannon entropy in the forward direction, but with a persistent gap in the reverse direction. %
(\textbf{b}) The two orderings produce different outcomes. Analogous to the OWF, predicting the moves followed by the final board state is the direction that can be predicted with a straightfoward computation. Predicting the board first and then the moves requires more complex behaviors.
(\textbf{c}) As compute increases, the same chess data presented in the reverse order leads to higher time-bounded entropy and epiplexity, showing it becomes more difficult to predict but allows more structure to be learned.
}
\label{fig:soi}
\end{figure*}

Beyond just how the random information can vary with orderings, the structural information can also differ as we will show next.
We demonstrate this fact by training autoregressive transformer models on the Lichess dataset, a large collection of chess games where the moves are recorded in algebraic chess notation. We consider two variants of this dataset: (1) formatting each game as the move sequence followed by final board state in FEN notation, and (2) formatting each game as the final board state followed by the move sequence, as illustrated in \Cref{fig:board-order}. We provide full experiment details in \Cref{app:chess}. While there is no clear polynomial vs non-polynomial time separation in this setup, the first ordering is analogous to the forward direction as the final board state can be straightforwardly mapped from the moves with a simple function, while the latter ordering is analogous to the reverse direction, where recovering the moves from the final board state requires the inverse function that infers the intermediate moves from the final state. We hypothesize the reverse direction is a more complex task and will lead the model to acquire more structural information, such as a deeper understanding of the board state. \Cref{fig:chess-order-info} confirms this hypothesis, showing that the reverse order has both time-bounded higher entropy and epiplexity. This gap vanishes at small compute budgets where the model likely learns only surface statistics common to both orderings before the additional complexity of the reverse task forces it to develop richer board-state representations.

\subsection{Paradox 3: Likelihood Modeling is Merely Distribution Matching}\label{sec:likelihood}
There is a prevailing view that from a particular training distribution, we can at best hope to match the data generating process. If there is a property or function that is not present in the data-generating process, then we should not expect to learn it in our models. As an extension, if the generating process is simple, then so are models that attempt to match it. This viewpoint can be supported by considering the likelihood maximization process abstractly,
$
\argmin_P\mathbb{E}_{X\sim Q}[-\log P(X)] = Q;
$
the test NLL is minimized when the two distributions match. The extent to which the distributions differ is regarded as a failure either from too limited a function class or insufficient data for generalization. From these arguments we could reasonably believe that AI models cannot surpass human intelligence when pretraining on human data. Here we provide two classes of phenomena that seem to contradict this viewpoint: induction, and emergence. In both cases, restricting the compute available to AI models leads them to extract more structural information than what is required for implementing the generating process itself.

\subsubsection{Induction}\label{sec:induction}
\begin{figure*}[t!]
\centering
\begin{minipage}[b]{0.3\linewidth}
    \centering
    \includegraphics[width=\linewidth]{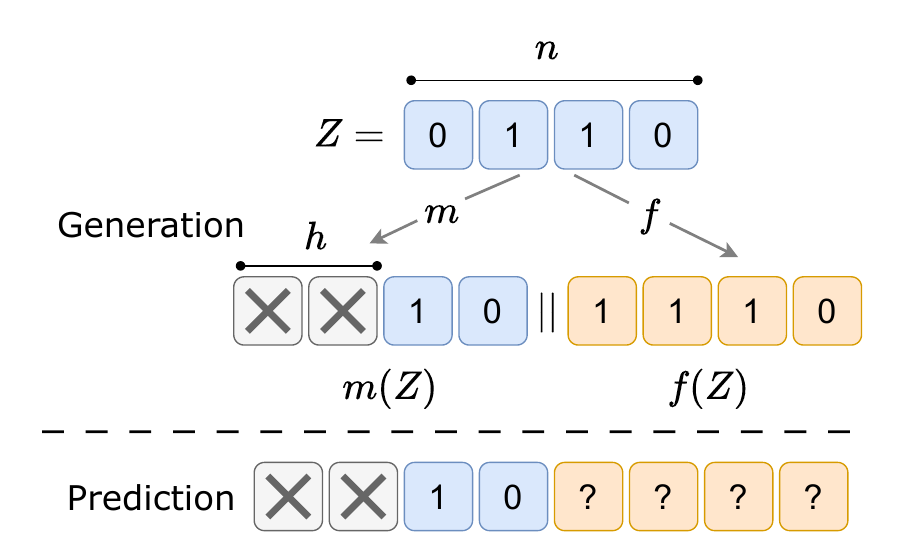}
    \subcaption{Data generating process}
\end{minipage}%
\hspace{2mm}
\begin{minipage}[b]{0.3\linewidth}
    \centering
    \includegraphics[width=\linewidth]{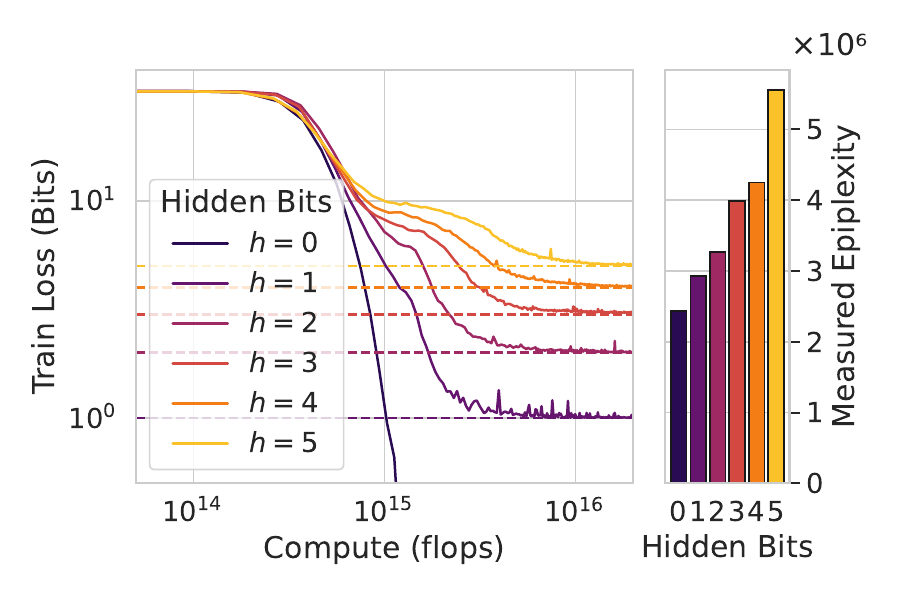}
    \subcaption{Induction (hard)}
    \label{fig:induction_hard}
    
\end{minipage}%
\hspace{2mm}
\begin{minipage}[b]{0.3\linewidth}
    \centering
    \includegraphics[width=\linewidth]{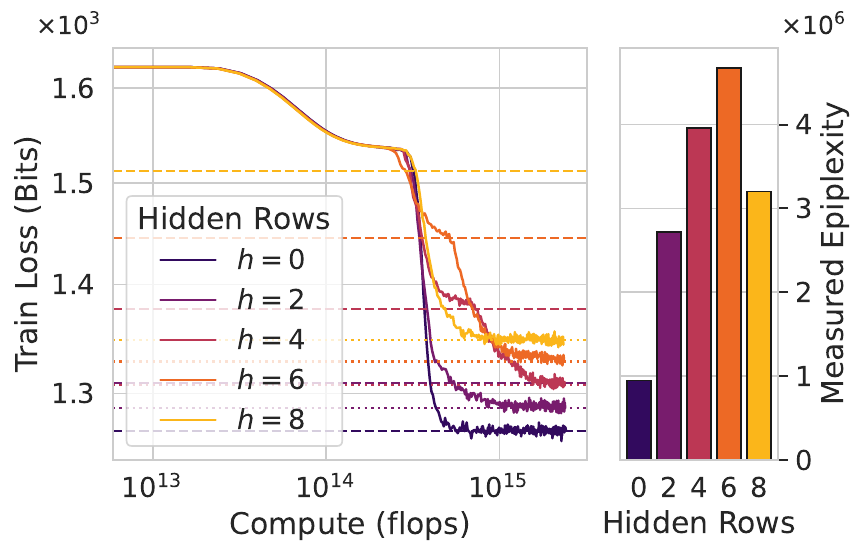}
    \subcaption{Induction (easy)}
    \label{fig:induction_easy}
    
\end{minipage}%

\caption{
\small
\textbf{Studying induction through epiplexity.} (a) Our setup for creating induction problems.
(b) Predicting Rule 30 ECA with hidden inputs. The LLM must induct on the $h$ bits missing from the input, paying a cost exponential in $h$. For $h$ small enough but $>0$, epiplexity is increased.
(c) Predicting Markov chain samples with hidden transition probabilities. Models that need to both use the provided probabilities and induct on the missing ones acquire the most epiplexity.
}
\label{fig:induction}
\end{figure*}

The generative modeling community is often challenged with simultaneously wanting a tractable sampling process and tractable likelihood evaluation, with autoregressors, diffusion models, VAEs, GANs, and normalizing flows each providing different approaches. For natural generative processes, it is often the case that one direction may be much more straightforward than the other. Here we investigate generative processes which can be constructed by transforming latent variables such that computing likelihoods requires inducting on the values of those latents.

A window into the phenomenon can be appreciated through this quote from Ilya Sutskever:
\begin{quote}
    ``\emph{You're reading a murder mystery and at some point the text reveals the identity of the criminal. ... If the model can predict [the name] then it must have figured out [who perpetrated the murder from the evidence provided].}'' \citep{sutskever2019gpt2}
\end{quote}
The author of the book on the other hand, need not have made that same induction. Instead, they may have chosen the murderer first and then painted a compelling story of their actions. 
This example highlights a gap between the generating process and the requirements of a predictive model, a gap which we explore with the following more mathematical setup.

As we illustrate in \autoref{fig:induction}(a), consider a simple to model random variable $Z$ over $\{0,1\}^n$ which we transform with two functions $m$ and $f$, which are both short in length and efficient to compute, and produce the data $Y=(m(Z),f(Z))$. We choose $m: \{0,1\}^n \to \{0,1\}^{n-h}$ as a masking function which removes the bits at a total of $h$ fixed locations in the input, leaving the rest unchanged. The generating process is simple to implement and can be executed efficiently. Now consider a likelihood generative model learning to model $Y$, under any given factorization. With appropriate properties of the function $f$, in producing the likelihoods the model must learn to induct on the missing information in the state $Z$, and then apply the transformation given by the data generating process. We consider cases both where the function $f$ is hard to invert and those where $f$ is not especially hard to invert. In both cases, predictive circuits must be learned that were not present in the data generating process, but with hard $f$ these circuits only appear at exponentially high compute. 

\textbf{Induction Hard: Rule 30 ECA.} For the first setting we use uniform $Z=U_n$ and $f$ as $4$ steps of the rule 30 ECA on state size $n=32$, $m$ simply removes the first $h$ bits, and we also compute the loss only on $f(Z)$ (conditioned on $m(Z)$) as the bits in $m(Z)$ are uniform and only add noise. We use an LLM, and the loss curves and measured epiplexities are shown in \Cref{fig:induction_hard}. The loss converges to the number of hidden bits $-\log P(f(Z)\mid m(Z)) = h$, representing the $2^h$ possible inductions on the hidden state. However, the total compute required for this loss to converge grows exponentially with $h$, an overall behavior consistent with a strategy of passing all $2^h$ candidates through $f$ and then eliminating inconsistent candidates as values of $f(Z)_i$ are observed with the autoregressive factorization. This complex learned function stands in contrast with the mere $f(Z)$ and simple postprocessing removing bits with masking. This picture is mirrored by the measured epiplexity: as the model is forced to induct on the missing bits, the epiplexity grows.

\textbf{Induction Easy: Random Markov Chains.} In the second setting, we leverage the statistical induction heads setup \citep{edelman2024evolution} with a few modifications. %
$Z$ is given by a random Markov chain transition matrix with $V=8$ symbols, and %
$m$ removes $h$ columns of the matrix at fixed random locations. The function $f(Z)$ computes a sampled sequence from the Markov chain of length $n=512$. When $h > 0$, the optimal solution involves 1) using the provided rows $Z$ to perfectly predict next-token probabilities on $V-h$ of the symbols, and 2) inducting on the missing rows of $Z$ in-context based on the empirically observed transitions to improve remaining predictions. For $h=0,$ the first is sufficient, and for $h=8$ the second is sufficient. In \Cref{fig:induction_easy}, we find evidence that both strategies are employed whenever $0<h<8$ as the final loss achieved matches the theoretical loss of both (the lower of the two dotted lines). The higher horizontal line marks the loss achievable using 1) along with a simple unigram strategy \citep{edelman2024evolution}, showing that the transformer learns 1) first and later the induction strategy 2). While the data generating program only only involves strategy one followed by the postprocessing masking step, the model must learn both strategies to reach these values.
Measured epiplexity matches this picture, with values $0<h<8$ having higher epiplexity than $h=0$ or $h=8$.  We emphasize that the induction strategy was never present in the data-generating process, yet it is learned by a generative model trained on that same data distribution. In \Cref{app:induction-not-specific}, we argue the induction phenomena are not specific to autoregressive models, but occur more generally for models trained via Maximum Likelihood Estimation as they need to be able to evaluate the likelihood $P(x)$ for an arbitrary data point $x$ rather than merely sample random $x$ from $P.$ VAEs \citep{kingma2013auto} provide a clear example of explicitly performing induction in non-autoregressive models: the encoder is trained specifically to approximate the posterior $P_{Z|X}$, enabling tractable likelihood estimation, yet this encoder is entirely unnecessary if the goal is merely to sample from the model.

In both of the hard and easy induction examples, the size of the program needed to perform the induction strategy is greater than the size of the program needed generate the data. We can expect that with limited computational constraints, it will not be generically possible to invert the generation process using brute force, and thus, in cases where alternative inverse strategies exist (like the easy induction example with the statistical induction heads), those additional strategies increase the epiplexity. Given that there is likely no single generally applicable strategy for these computationally efficient inverses across problems, it is likely to be possible as a source of epiplexity.

To make these statements more precise, it seems likely that there are \emph{no} constants $c_1$ and $c_2$ for which the following property holds:
\begin{remarkbox}{}
\label{def:induction_def}
\textbf{Limited Epiplexity Increase Property:} Given any program $\mathrm{G}:\{0,1\}^k\to \{0,1\}^n$ running in time at most $T_1$ on random variable $Z$, the epiplexity of $G(Z)$ is increased by at most a constant more than the size of $G$:
$\rS_{T_2}(G(U_k)) \le |\mathrm{G}|+c_1$ for $T_2(n)>T_1(k)+c_2$.

\end{remarkbox}
In other words, there is no bound on how much larger the MDL optimal probability model will be than the generating program even when the model is allowed more compute than the generating program. We present this phenomenon in contrast to Shannon information or Kolmogorov complexity, where a function and its inverse can differ in complexity by at most a fixed constant: $K(F^{-1}) = K(F)+O(1)$. When the computational constraints are lifted, the brute force inverse is possible, and there is no essential gap between deduction and induction, or between sampling and likelihood computation.

\subsubsection{Emergent Phenomena}
\label{sec:emergent}

One of the most striking counterexamples to the ``distribution matching'' viewpoint is \emph{emergence}. Even when a system’s underlying dynamics admit a simple description, an observer with limited computation may need to learn a richer, and seemingly unrelated, set of concepts to predict or explain its behavior. As articulated by \citet{anderson1972more}, reductionism---that a complex object’s behavior follows from its parts---does not guarantee that knowing those parts lets us predict the whole. Across biology and physics, many‐body interactions give rise to behaviors (e.g.\ bird flocking, Conway’s Game of Life patterns, molecular chemistry, superconductivity) that are not apparent from the microscopic laws alone. Here we sketch how emergence critically relates to the computational constraints of the observer, demonstrating how observers predicting future states may be required to learn \emph{more} than their unbounded counterparts who can execute the full generating process. %

Consider Type‐Ib emergence in the \citet{carroll2024emergence} classification, in which higher‐level patterns arise from local rules yet resist prediction from those rules. A canonical example is Conway’s Game of Life (see \autoref{app:conway} for definition), where iterating a simple computational rule $\Phi$ %
on a $2$D grid leads to complex emergent behavior. For observers that lack the computational resources to directly compute the iterated evolution $\Phi^k$, an alternate description must be found. In the state evolution, one can identify localized “species” (static blocks, oscillators, gliders, guns) which propagate through space and time. %
By classifying these species, learning their velocities, and how they are altered under collisions with other species, as well as the ability to identify their presence in the initial state, computationally more limited observers can make predictions about the future state of the system. Doing so, however, requires a more complex program in the sense of description length, and the epiplexity will be higher. We can formalize this intuition into the following definition of emergence.

\begin{remarkbox}{}
\begin{definition}[Epiplexity Emergent]
\label{def:epiemergent}
Let $\{\Phi_n\}_{n\ge1}$ be a computable family $\Phi_n:\{0,1\}^n\to\{0,1\}^n$
and let $\{X_n\}_{n\ge1}$ be random variables over $\{0,1\}^n$.
We say $(\Phi,X)$ is \emph{epiplexity-emergent} if there exist
time bounds $T_1,T_2$ with $T_1(n)=o(T_2(n))$ and an iteration schedule $k(n)$ such that as $n\to\infty$,

\begin{align}
    \rS_{T_1}(\Phi(X) \mid X,n) - \rS_{T_2}(\Phi(X) \mid X,n)&=\Theta(1)\,,\\
    \rS_{T_1}(\Phi^k(X) \mid X,n,k) -\rS_{T_2}(\Phi^k(X) \mid X,n,k) &=\omega(1), \notag
\end{align}

where we have suppressed the dependence of $X_n$ and $\Phi_n$ on $n$ for clarity.
\end{definition}
\end{remarkbox}
In words, $\Phi,X$ displays emergent phenomena if two observers see equivalent structural complexity in the one step map, but asymptotically more structural complexity in the multistep map for the observer with fewer computational resources.

Considering $\Phi$ from the Game of Life as an example, $P(\Phi(X)\mid X,n)$ could be well estimated by both $T_1$ and $T_2$-bounded observers using the exact time evolution rule, using constant bits for both. $P(\Phi^k(X) \mid X,n,k)$ could be estimated by $T_2$ using the iterated rule, but not by $T_1$. Using knowledge of the different pattern species improves predictions of $\Phi^k(X) \mid X$, so they would need to be learned; however, the number of patterns that needs to be considered in the time-bounded optimal solution is unbounded, and grows with the size of the board $n$, and thus the gap in epiplexity for the two time bounds grows with $n$. We have not proven that the Game of Life satisfies this definition, which is likely difficult as small changes to the evolution rule can destroy the emergent behavior; however, we provide empirical evidence for this set being non-empty with the example below.

\begin{wrapfigure}[14]{r}{0.35\linewidth}
  \centering
  \vspace{-5mm}
  \includegraphics[width=\linewidth]{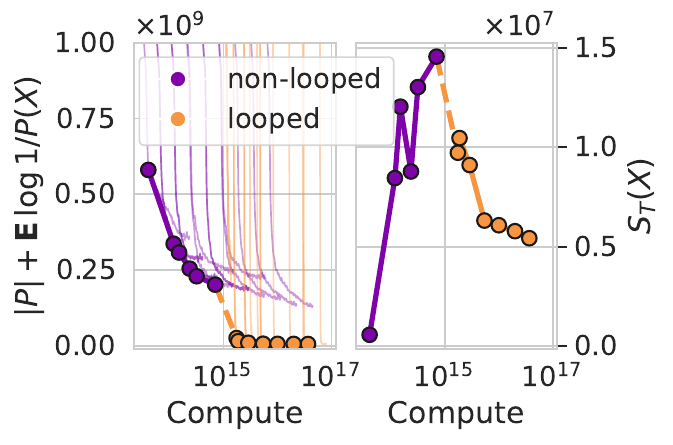}
  \vspace{-6mm}
  \caption{\small \textbf{Emergence in ECA.} Compute-constrained models extract high epiplexity from data generated by simple rules, trading increased program length for reduced computation.}
  \label{fig:eca_emergence}
\end{wrapfigure}
In \Cref{fig:eca_emergence}, we empirically demonstrate the emergence phenomenon by training a transformer to predict the iterated dynamics of ECA rule 54, a class IV rule that produces complex patterns. As in Conway's Game of Life, a model with sufficient computation can exactly simulate the dynamics by directly iterating the per-step rule---a brute-force solution with a short description length. However, a compute-limited model cannot afford this approach and must instead learn emergent patterns (e.g., gliders and their collision rules) that approximately shortcut the infeasible exact simulation. The brute-force solution can be naturally implemented by learning to autoregressively unroll intermediate ECA states rather than directly predicting the final state, resembling the use of chain-of-thought \citep{wei2022chain} or looped transformers \citep{dehghani2018universal,giannou2023looped,saunshi2025reasoning}. We provide experiment details in \Cref{app:eca-emergence}. While initially the non-looped model (directly predicting final state) gradually achieves better MDL and higher epiplexity as compute increases, we identify a compute threshold beyond which the looped model suddenly becomes favorable, causing an abrupt drop in MDL and epiplexity, likely by learning the simple, brute-force solution. Below this threshold, the looped model underperforms likely because it lacks the compute to fully unroll the dynamics. The non-looped model, unable to rely on brute-force simulation, must instead learn increasingly sophisticated emergent rules, recognizing more species and their interactions, thus causing epiplexity to initially rise with compute before eventually falling.

While this experiment cleanly demonstrates how compute-limited models can learn richer structure from data, it is a more uncommon situation where the brute-force solution is accessible, and where training with more compute reveals a much simpler underlying structure. With natural data and compute bounds that are not extraordinarily high, we expect that expending additional compute leads to increased rather than decreased observed structure. %

We explore other kinds of emergence, such as in chaotic dynamical systems or in the optimal strategies of game playing agents in \autoref{app:emergence}. Each of these examples presents clear evidence that in pursuit of the best probability distribution to explain the data, observers with limited compute will require models with greater description length than the minimal data generating process in order to achieve comparable predictive performance \citep{martinez2006phenomenology,redeker2010language}. Epiplexity provides a general tool for understanding and quantifying these phenomena of emergence, and how simple rules can create meaningful, complex structures that AI models can learn from, as recently demonstrated empirically by \citet{zhang2024intelligence}.

\section{Epiplexity, Pre-Training, and OOD Generalization}
\label{sec:ood}

Pre-training on internet-scale data has led to remarkable OOD generalization, yet a thorough understanding of this phenomenon remains elusive. What kinds of data provide the best signal for enabling broad generalization? Why does pre-training on text yield capabilities that transfer across domains while image data does not? As high-quality internet data becomes exhausted, what metric should guide the selection or synthesis of new pre-training data?  In this section, we show how epiplexity helps answer these foundational questions.

OOD generalization is fundamentally about how much reusable structure the model acquires, not how well it predicts in-distribution. Two models trained on different corpora can achieve the same in-distribution loss, yet differ dramatically in their ability to transfer to OOD tasks. This happens because loss captures only the residual unpredictability, corresponding to the time-bounded entropy, not how much reusable structure the model has internalized to achieve that loss. Epiplexity measures exactly this missing component: the amount of information in the learned program. Intuitively, loss indicates how random the data looks to the model, while epiplexity indicates how much structure the model must acquire to explain away the non-random part. If OOD generalization depends on reusing learned mechanisms rather than memorizing superficial statistics, then epiplexity is a natural lens through which to understand the relationship between pre-training data and OOD transfer.
As a motivating toy example, \citet{zhang2024intelligence} observed that downstream task performance benefits most from training on type IV ECA rules over the other ECA rules, aligned with Figure~\ref{fig:eca} where we showed that rule 54 (a type IV rule) induces much higher epiplexity compared to other rules.

\vspace{-1mm}
\subsection{Epiplexity Correlates with OOD Generalization in Chess}
\vspace{-1mm}
\begin{wrapfigure}[14]{r}{0.3\linewidth}
  \centering
  \vspace{-5mm}
  \includegraphics[width=\linewidth]{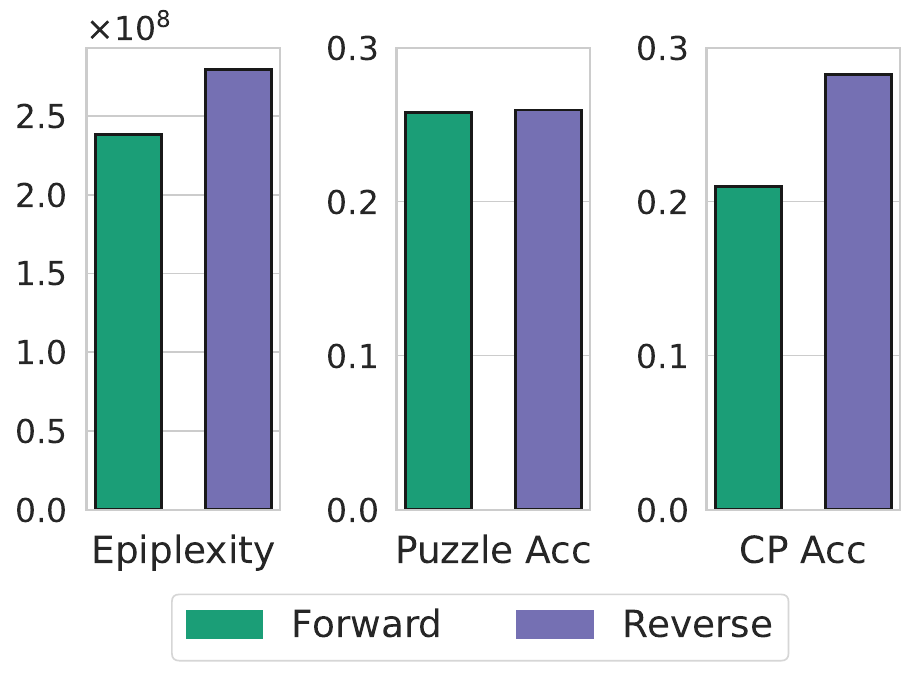}
  \vspace{-6mm}
  \caption{\small \textbf{Epiplexity and OOD performance in chess.} Models trained on the higher epiplexity reverse order performs better in OOD tasks.}
  \label{fig:chess-ood}
\end{wrapfigure}

We finetune models trained on either ordering from \Cref{sec:factorization} on two downstream tasks: (1) solving chess puzzles, where the model must predict the \emph{optimal} next move given a board state \citep{burns2023weak}, and (2) predicting centipawn evaluation, where the model evaluates positional advantage from FEN notation—a more substantial distribution shift from next-move prediction learned in pre-training. Experiment details are in \Cref{app:chess}.
As shown in \autoref{fig:chess-ood}, the reverse (board-then-moves) ordering yields higher epiplexity and better downstream performance: matching accuracy on chess puzzles but significantly higher accuracy on the centipawn task. This result supports our hypothesis: the reverse order forces the model to develop richer board-state representations needed to infer the intermediate moves, and these representations transfer to OOD tasks like centipawn evaluation that similarly require understanding the board state. This example reflects a more general principle: epiplexity measures the learnable structural information a model extracts from data to its weights, which is precisely the source of the information transferable to novel tasks, making epiplexity a plausible indicator for the potential of OOD generalization. However, we emphasize that higher epiplexity does not guarantee better generalization to any specific task: epiplexity measures the amount of structural information, irrespective of its content. A model trained on high epiplexity data can learn a lot of structures, but these structures may or may not be relevant to the particular downstream task of interest.

\begin{figure*}[t]
\centering
\begin{minipage}[t]{0.28\linewidth}
    \centering
    \includegraphics[width=\linewidth]{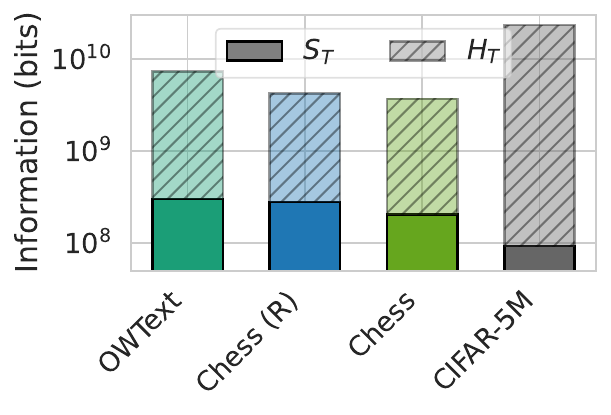}
    \subcaption{Epiplexity in natural data}
    \label{fig:natural-stacked}
\end{minipage}%
\hfill
\begin{minipage}[t]{0.29\linewidth}
    \centering
    \includegraphics[width=\linewidth]{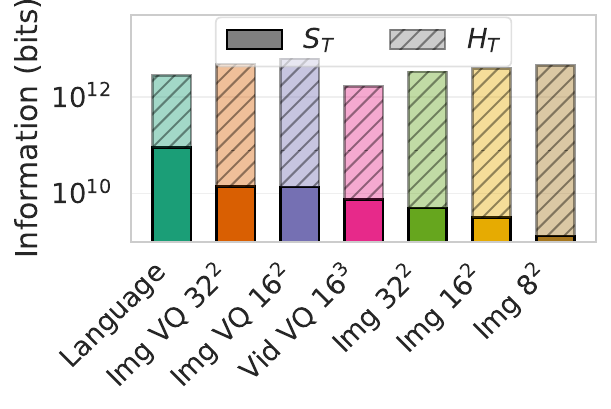}
    \subcaption{Estimation via scaling laws}
    \label{fig:scaling-law-stacked}
\end{minipage}%
\hfill
\begin{minipage}[t]{0.4\linewidth}
    \centering
    \includegraphics[width=\linewidth]{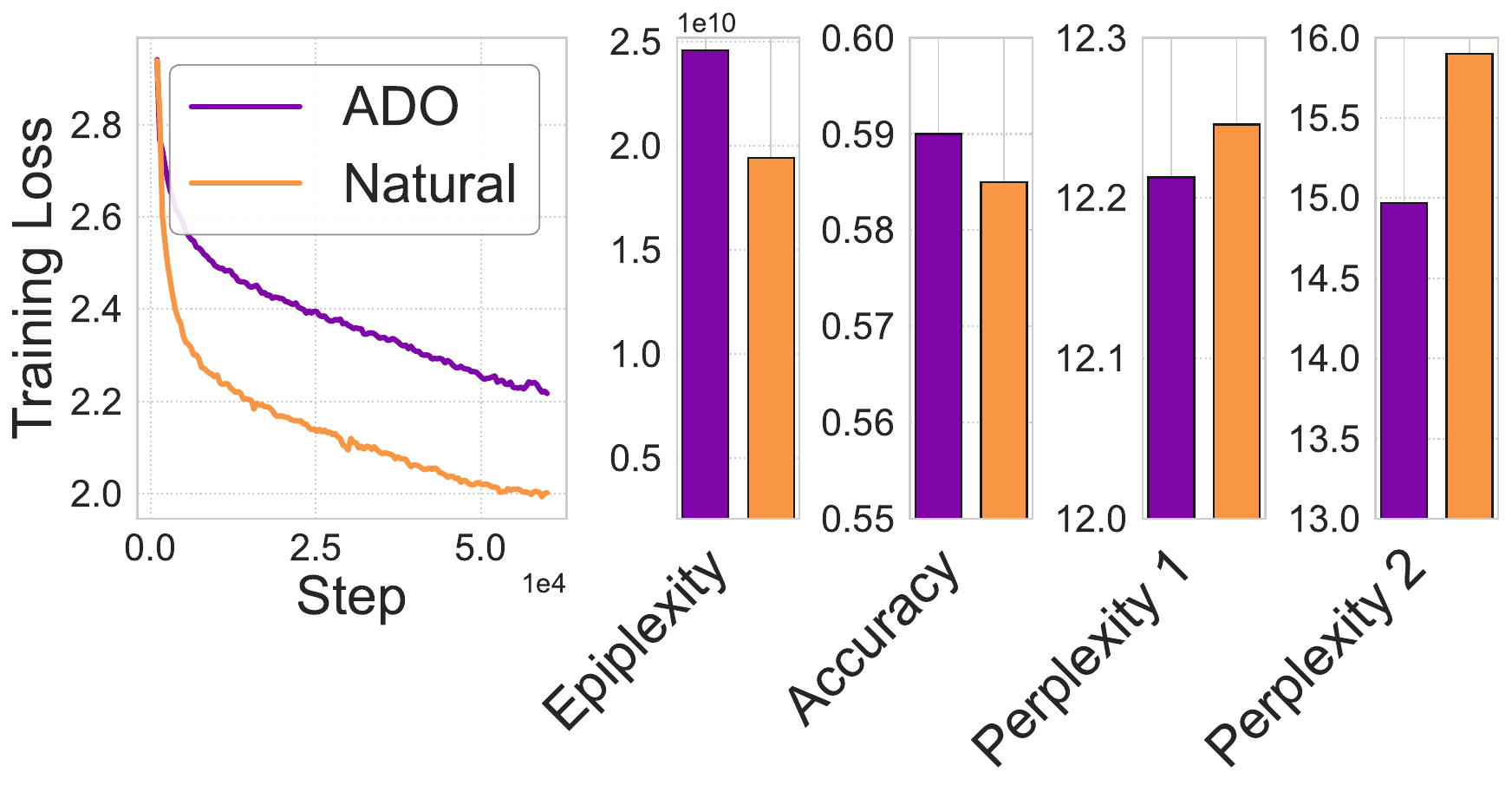}
    \subcaption{ADO: epiplexity and downstream metrics}
    \label{fig:ado}
\end{minipage}%

\caption{
\small
\textbf{Epiplexity reveals differences in the structural information across data modalities and can guide pre-training data selection.}
(\textbf{a}) Estimated epiplexity and time-bounded entropy using requential coding for 1B OpenWebText, Chess, and CIFAR-5M tokens at $6\times10^{18}$ FLOPs. 
(\textbf{b}) Estimated values based on scaling laws and prequential coding for 1T language, image, and video tokens at $10^{25}$ FLOPs.
(\textbf{c}) Selecting pre-training data using ADO \citep{jiang2025adaptive} leads to different loss curves than standard sampling (natural). Our measurement shows ADO selects data with higher epiplexity, in line with the improved downstream performance and OOD perplexity on different text corpora. 
}
\end{figure*}

\vspace{-1mm}
\subsection{Measuring Structural Information in Natural Data} \label{sec:epiplexity-natural}
\vspace{-1mm}
Among different modalities of natural data, language has proven uniquely fruitful for pre-training, not only for improving in-distribution performance such as language understanding \citep{radford2019language}, but also for out-of-distribution tasks such as robotics control \citep{ahn2022can}, formal theorem proving \citep{song2024lean}, and time-series forecasting \citep{gruver2023large}. While equally abundant total information is available in other modalities, such as images and videos, pre-training on those data sources typically does not confer a similarly broad increase in capabilities. We now show that epiplexity helps explain this asymmetry by revealing differences in their structural information content. In \Cref{fig:natural-stacked}, we show the estimated decomposition of the information in 5B tokens of data from OpenWebText, Lichess, and CIFAR-5M \citep{nakkiran2020deep} into epiplexity (structural) and time-bounded entropy (random) with a time-bound of $6\times10^{18}$ FLOPs, by training models of up to 160M parameters on at most 5B tokens using requential coding. In all cases, epiplexity accounts for only a tiny fraction of the total information, with the OpenWebText carrying the most epiplexity, followed by chess data. Despite having the most total information, CIFAR-5M data has the least epiplexity, as over $99\%$ of its information is random (e.g., unpredictability of the exact pixels).

\begin{wrapfigure}[15]{r}{0.4\linewidth}
  \centering
  \vspace{-5mm}
  \includegraphics[width=\linewidth]{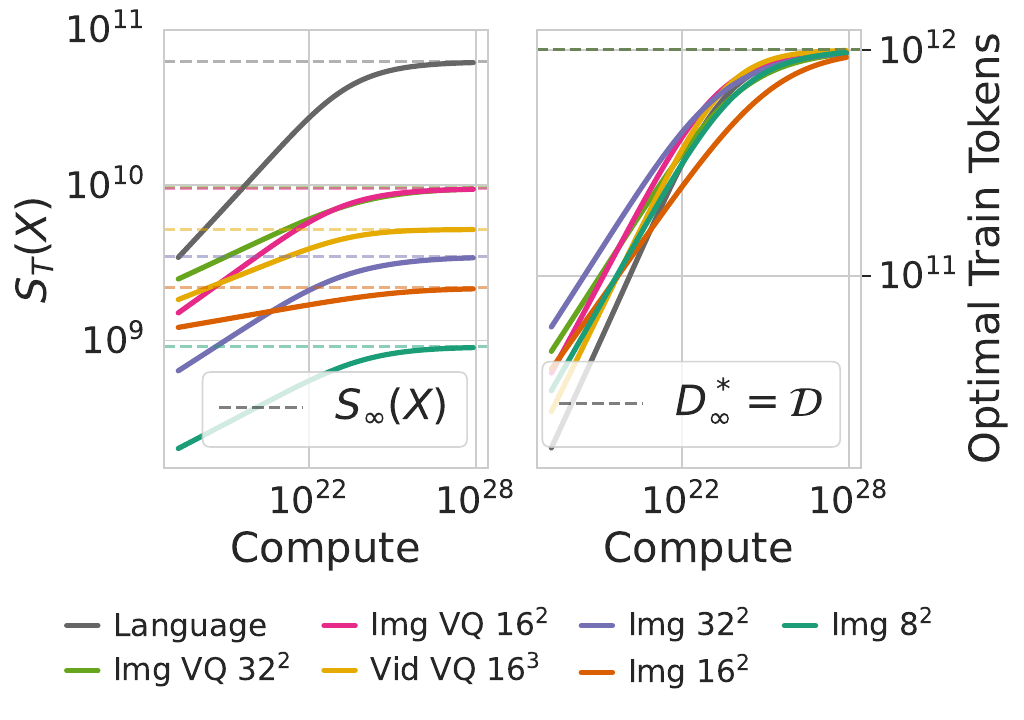}
  \vspace{-6mm}
  \caption{\small \textbf{Epiplexity and optimal training tokens for each fixed dataset converge to predictable limits as compute increases.}}
  \label{fig:asymptotic}
\end{wrapfigure}
\subsection{Estimating Epiplexity from Scaling Laws}
We can estimate the epiplexities of larger datasets at higher compute budgets using reported scaling laws, 
which describe the loss achieved by an $N$-parameter model trained on $D$ tokens as $\L(N,D) = E + \qty(N/N_0)^{-\alpha} + \qty(D/D_0)^{-\beta}$, for some dataset-specific constants $\alpha,\beta,N_0, D_0, E$ \citep{hoffmann2022training,kaplan2020scaling,henighan2020scaling}. By estimating the model's description length via the prequential coding approach (\Cref{sec:preq-proxy}), we obtain estimates for the epiplexity and time-bounded entropy for language, image, and video datasets, with varying resolutions and tokenizations of size $\D=10^{12}$ (1T) tokens under a compute budget of $10^{25}$ FLOPs (equivalent to the training compute of Llama3 70B), illustrated in \Cref{fig:scaling-law-stacked} (see details in \Cref{app:scaling-law-exps}). Consistent with our smaller-scale experiments, we find that language data has the highest epiplexity, while image data has the least. For image data, applying VQ tokenization leads to a significant increase in epiplexity, likely as a result of allowing the model to focus on higher-level semantic structures. Video data has less time-bounded entropy and epiplexity than image data with the same resolution, likely due to significant redundancy across the temporal dimension.

Using this approach, we can also gain some analytical insights about epiplexity for data admitting scaling laws of this form. As we derive in \Cref{app:scaling-law-model}, for a fixed dataset $X$ with $\D$ tokens, the optimal split of the compute budget between training and inference (evaluating the trained model on $X$) approaches a fixed ratio as compute increases, with the optimal asymptotic training tokens $D^\star_\infty = \D$ and asymptotic epiplexity $\rS_\infty(X) = \frac{\beta}{1-\beta} D_0^\beta \D^{1-\beta},$ both illustrated in \Cref{fig:asymptotic}. As expected, the maximum amount of extractable structural information is ultimately capped by the dataset size $\D$ when compute is not the bottleneck, and epiplexity can increase further if we also grow the dataset size. For large $\D,$ the scale of the asymptotic epiplexity is primarily determined by $\beta$ and $D_0,$ with smaller $\beta$ and larger $D_0$ leading to higher epiplexity, corresponding to slower improvement in loss and thus more (estimated) information absorbed per token. In line with our discussion on emergence in \Cref{sec:emergent}, it is possible that with significantly more compute much simpler programs can model these natural datasets, such as by directly simulating the basic laws of physics from which the natural world emerges, but the amount of required computation is likely so high that such programs remain inaccessible to any physically realizable observer and we must treat natural data as having high epiplexity for all practical purposes.

\subsection{Pre-Training Data Selection and Curriculum for Language Models} \label{sec:ado}
A crucial step in pretraining a language model is designing the composition of the pretraining data, but there lack clear guidelines for this step.
Existing data mixtures are designed through extensive trial-and-error and rely on heuristic guidelines such as ``diversity'' or ``high-quality''.
More importantly, the primary way of comparing different training data is via perplexity metrics of held-out datasets and downstream performance.
These procedures are highly susceptible to data contamination, overfitting to a narrow set of downstream evaluations, and Goodhart's law. After all, no suite of downstream evaluations is extensive enough to faithfully capture the range of tasks that a general-purpose language model will encounter in the real world.

As we argued above, \soph{} measures the structural information learned by the model, which could be affected by data selection strategies. 
\citet{jiang2025adaptive} demonstrated that models of the loss curves for different data subsets can be used to dynamically adjust the data distribution online to favor data subsets whose training losses are \emph{decreasing faster}\footnote{It is worth noting that choosing data subsets with faster-decreasing loss does not mean that the observed training loss would be smaller because such data subsets tend to have higher loss values since there is more learnable information in them. Consequently, training on them often leads to a larger area under the training loss curve.}.
Intuitively, this objective aligns with increasing the prequential estimate of epiplexity described in \Cref{sec:preq} by maximizing information absorbed per token.
We hypothesize that the proposed algorithm, Adaptive Data Optimization (ADO), inadvertently achieves higher \soph{}.
Experiments of \citet{jiang2025adaptive} are conducted on decoder-only transformers with 1.3B parameters trained on 125B tokens from the Pile dataset~\citep{gao2020pile}.
The models are evaluated on a suite of 7 zero-shot downstream tasks and two OOD validation datasets, SlimPajama~\citep{cerebras2023slimpajama} and FineWeb~\citep{penedo2024fineweb}.

In Figure~\ref{fig:ado}(c), we show the estimated \soph{} and the downstream performance as well as perplexity on two OOD datasets, adapted from \citet{jiang2025adaptive}.
As shown in \citet{jiang2025adaptive}, ADO achieves higher downstream performance than a standard data sampling strategy that uniformly samples from the entire dataset (denoted by \emph{Natural} in Figure~\ref{fig:ado}), despite not being optimized for any of these metrics.
Interestingly, we see that ADO indeed achieves higher epiplexity measured by prequential coding. 
While these downstream evaluations do not capture everything about a pretrained model, they do offer evidence that epiplexity is a potentially useful concept for understanding the intrinsic value of pretraining data without particular downstream evaluations.

\section{Additional Related Work}
Epiplexity builds on a number of related ideas in algorithmic information theory and complexity science that attempt to theoretically characterize \emph{meaningful information}. 
A group of closely related concepts are sophistication (\autoref{sec:sophistication}), effective complexity, and logical depth. Similar to sophistication, effective complexity aims to separate random from structural content \citep{gell1996information}. From a different starting point, \citet{bennett1988logical} introduced logical depth, measuring the number of time steps required by a nearly optimal program to produce a given string, and which was later shown to be equivalent to sophistication through the busy beaver function \citep{antunes2006sophistication,ay2010effective}. Several other formal measures have been developed to quantify structured or meaningful complexity. Algorithmic statistics offers a principled decomposition of data into regular versus random components by introducing the notion of an algorithmic sufficient statistic \citep{vereshchagin2004kolmogorov}, a concept closely tied to sophistication. Relatedly, statistical complexity in computational mechanics \citep{shalizi2001computational} measures the entropy of causal states in an optimally predictive model, capturing structure in time-series data. As we argued above, these existing notions of complexity do not account for the limited computation available to the observer, which is essential for understanding machine learning algorithms.
Being oblivious to computational limits means that they cannot characterize CSPRNGs or encrypted objects as being random. One might think that these failures are surface-level; for example, a plausible strategy would be to upgrade sophistication by replacing Kolmogorov complexity with time-bounded Kolmogorov complexity in (Definition~\ref{def:nsoph}). However, this approach does not work for several reasons, the most obvious being that CSPRNG outputs do have short and efficiently runnable generating programs and thus their time-bounded Kolmogorov complexities are small.
A more subtle reason is that doing so results in trivial sophistication for all strings, which we discuss in more detail in Appendix~\ref{app:time_bounded_sophistication}.

Our work is also closely related to several lines of work trying to characterize observer-dependent notions of information. In cryptography, \citet{barak2003computational} and \citet{hsiao2007conditional} discuss several possible definitions for \emph{computational pseudoentropy}, an observer-dependent analogue of entropy.
HILL-pseudoentropy \citep{haastad1999pseudorandom} is defined relative to a class of tests: a source is considered random if no test within the class can distinguish it from a high-entropy distribution with nontrivial advantage, and Yao-pseudoentropy is defined via compressing and decompressing an object for example.
Both definitions are closely related to time-bounded entropy, which measures the random content to a given computationally bounded observer; however, our formulation directly maps on to machine learning practice and allows for separating out the structural information content, a key contribution of our work. %
More recently, \citet{xu2020theory} propose $\mathcal{V}$-entropy, a generalization of Shannon entropy to the minimum expected negative log probability over a given family of probability models, such as those with given computational constraints. With $\mathcal{V}$-entropy, the symmetry of information can be violated, and so too can the data processing inequality, though neither is explicitly proven in the paper. Unlike time-bounded entropy, the computational constraint in $\mathcal{V}$-entropy only limits the inference time, and does not account for the time to find such a model. 
Hence, the minimizer can be far away from the regime that is practically evaluated (such as models that are \emph{trained} on infinite data or with infinite compute). 
While these undesirable behaviors can be overcome by imposing further data constraints, we believe our formulation of imposing a single bound on both training and inference time leads to fewer complications. 
More importantly, both pseudoentropy and $\mathcal{V}$-entropy, much like time-bounded entropy, capture only the random component of information since it still measures the unpredictability of the random variable under the best feasible model.
For understanding what useful information a model has learned, we are more interested in the non-random component of information as measured by epiplexity. Using existing measures of complexity, such as the Lempel-Ziv complexity and Wolfram classification, \citet{zhang2024intelligence} showed that models trained on complex data like Class IV ECA rules tend to perform better on downstream tasks.

Other parts, such as the area under the curve estimate of epiplexity, have seen some related exploration in prior work. The concept of excess entropy, independently introduced under various names \citep{crutchfield1983symbolic, shaw1984dripping, grassberger1986toward} and reviewed in \citet{feldman1998information}, is defined as the area between finite-block entropy density estimates and the asymptotic entropy rate of a stationary process, an analogous construction to our prequential estimate of epiplexity. However, excess entropy is defined for stationary processes observed by computationally unbounded agents, lacking the explicit dependence on the observer's compute budget that we view as essential for the machine learning setting. More recently, \citet{whitney2020evaluating} introduced surplus description length (SDL), which is the summed online loss of the training algorithm, with either the entropy of the data or a fixed baseline performance subtracted out. The authors use this measurement to evaluate pre-trained representations for solving a downstream task, arguing that smaller SDL is preferred as they lead to more efficient downstream learning. In contrast, we seek to create datasets and interventions to the data which \emph{increase} epiplexity. More analogous to the spirit of epiplexity is information transfer from \citet{zhang2020measuring}, which sums a variant of a loss difference, adapted to held out test data and for the classification setting. In this work, the authors present information transfer to measure how much is learned from the data. Epiplexity is complementary to these works, clarifying the role of computation in defining information, and explicitly separating random and structural information.

Several works have also explored how to quantify data complexity. \citet{dziugaite2025size} suggests that the complexity of a minimal near-optimal reference model can be viewed as a measure of data complexity under the PAC-Bayes framework and how such data complexity gives rise to empirical scaling laws.
This perspective is related to epiplexity in that both associate data complexity with the size of compact models that explain the data well. 
However, the two notions differ in important ways. 
In particular, the PAC-Bayes formulation is concerned with the existence of some small reference model achieving good in-distribution performance, whereas epiplexity characterizes the amount of structural information extractable by a computationally bounded observer, formalized through a two-part code that explicitly accounts for the cost of obtaining such a model. 
Further, our primary interest is not in characterizing in-distribution generalization, but in using epiplexity to measure the intrinsic value of data in settings that extend beyond supervised learning.
Relatedly, \citet{hutter2021learning} shows that power-law learning curves can emerge under specific assumptions on the data-generating distribution, illustrating how properties of the data itself can shape empirical scaling behavior. While this line of work focuses on explaining observed learning dynamics rather than defining a complexity measure, it similarly emphasizes the role of data structure in determining learning outcomes. 
These perspectives on data complexity can be viewed as instances of \emph{coarse graining}, where one seeks a compressed representation that preserves some notion of ``relevant'' structure. A canonical example is the information bottleneck framework, which formalizes coarse graining as a trade-off between compression and retained information about a relevant variable~\citep{tishby2000information}. 
Epiplexity is aligned with this perspective, but rather than defining relevance through a task variable or through distinguishability to tests, it measures the amount of structural information extractable by a computationally bounded learner, while explicitly accounting for the cost of obtaining the model.

More broadly, our work is related to several lines of work on how resource constraints fundamentally alter the notion of simplicity and learnability. In algorithmic information theory, \citet{schmidhuber2002speed} proposes the speed prior, which replaces Solomonoff’s universal prior with a \emph{computable} semimeasure that favors both shorter program length and smaller computation time, thereby incorporating computational resources directly into the definition of simplicity.
\citet{achille2025ai} argue that in the transductive setting, the role of information from past data is to reduce the time needed to solve new tasks rather than to reduce uncertainty, with the optimal speedup tightly characterized by the amount of shared algorithmic information between past data and future tasks.
In this setting, \emph{larger} information content is shown to be more conducive to better performance.
In learning theory, a related line of work shows that computational limitations can directly affect what can be learned from data.
For instance, in the problem of sparse PCA detection, \citet{berthet2013computational} show that although there exist procedures that succeed with an information-theoretically minimal number of samples, any algorithm that runs in polynomial time necessarily requires more data under widely used average-case hardness assumptions.
Memory and space constraints alone can also qualitatively change learnability.
\citet{steinhardt2016memory} show that restricting a learner’s memory can dramatically increase the amount of data required to learn, even when the target concept itself has a very concise description.
They identify parity functions as a canonical example where this tension is conjectured to be sharp.
\citet{raz2018fast} later resolves this conjecture by proving that any learner with sub-quadratic memory requires exponentially many samples to learn parity from random examples.

\section{Discussion}

Much of classical information theory is concerned with the representation and transmission of information, and abstracts away key aspects of the computational processes by which information is extracted and used. While complexity theory and cryptography treat computation as fundamental, machine learning theory typically does not.
Yet learning, whether biological or artificial, is an inherently computational process. What can be learned from data depends not only on statistical feasibility, but on the available resources. This perspective calls for more theoretical tools that place computation on an equal footing with information.

This work reframes information as a property of data relative to a computationally bounded observer, and demonstrates that information can be decomposed into time-bounded entropy and epiplexity, a formalization of structural information. It also sheds light on how perceived information can be changed through computation. This perspective resolves several tensions between information theory and empirical machine learning---including the usefulness of synthetic data, the dependence of learning on factorization and ordering, and the emergence of structure beyond the data-generating process itself. Technically, epiplexity connects ideas from algorithmic statistics, cryptography, and learning theory, showing that standard assumptions (i.e., existence of one-way functions) suffice to produce distributions with high structural complexity for efficient learners.

Our framework opens several exciting directions for future work. On the theoretical side, it invites a systematic and more fine-grained understanding of how structural information changes with computational budget, model class, and data transformations, potentially yielding new lower bounds and impossibility results for representation learning and transfer. Taking information and computation as the fundamental resources may offer new explanations for the relative universality observed in large-scale training, including why scaling law exponents depend only weakly on architectural and optimizer details.
There is also a possibility of a compute-aware analogue of classical notions such as sufficient statistics and information bottlenecks. 
More broadly, framing emergence, induction, and generalization through the lens of computationally bounded observers may offer a unifying language across learning theory, algorithmic information theory, cryptography, and complexity theory.

On the empirical side, epiplexity provides a way to reason about why some data sources, formatting, and transformations can lead to more transferable models than others, even when they do not improve training loss.
The framework suggests that pretraining data should be evaluated not only by held-out perplexity, but by how much reusable structural information it induces in a computationally bounded model.
This perspective helps explain empirical successes of curriculum design, data ordering, augmentation strategies, and even synthetic data that appear counterintuitive from a purely statistical viewpoint. 
Our empirical estimator offers a concrete starting point for comparing datasets and interventions in data centric research.
In the long run, we believe epiplexity could provide guidance on how to generate new synthetic data from existing data.

Finally, representation learning can be understood as the gradual accumulation of epiplexity: the construction of increasingly rich internal programs that approximate a data distribution within a fixed time budget. While epiplexity in isolation is not a measure of generalization, or a complete theory of learning, this perspective raises the possibility of new notions of hardness for learning and transfer that are orthogonal to classical PAC-style measures, capturing not sample complexity but the size of the structure that must be extracted. 
Such notions may help explain why certain tasks appear to require disproportionately large models or long training horizons despite admitting simple generative descriptions, and why improvements in generalization sometimes correlate more strongly with training dynamics or data structure than with likelihood alone.

\vspace{5mm}
\textbf{Acknowledgements.} We thank NSF CAREER IIS-2145492, NSF CDS\&E-MSS 2134216, and DARPA AIQ
HR00112590066 for support, and Scott Aaronson, Alan Amin, Brandon Amos, Martin Marek, Zhili Feng, Vaishnavh Nagarajan, Patrick Shafto, Charlie Chen, Alex Ozdemir, Andres Potapczynski, and Ethan Baron for helpful feedback. This work was supported by 
Google’s TPU Research Cloud (TRC) program: \url{https://sites.research.google/trc}. YJ thanks the support of the Google PhD Fellowship, and SQ thanks the support of the Two Sigma Fellowship.

\bibliography{ref}

\begin{thebibliography}{111}
\providecommand{\natexlab}[1]{#1}
\providecommand{\url}[1]{\texttt{#1}}
\expandafter\ifx\csname urlstyle\endcsname\relax
  \providecommand{\doi}[1]{doi: #1}\else
  \providecommand{\doi}{doi: \begingroup \urlstyle{rm}\Url}\fi

\bibitem[Aaronson et~al.(2014)Aaronson, Carroll, and Ouellette]{aaronson2014quantifying}
Scott Aaronson, Sean~M Carroll, and Lauren Ouellette.
\newblock Quantifying the rise and fall of complexity in closed systems: the coffee automaton.
\newblock \emph{arXiv preprint arXiv:1405.6903}, 2014.

\bibitem[Abdin et~al.(2024)Abdin, Aneja, Behl, Bubeck, Eldan, Gunasekar, Harrison, Hewett, Javaheripi, Kauffmann, et~al.]{abdin2024phi}
Marah Abdin, Jyoti Aneja, Harkirat Behl, S{\'e}bastien Bubeck, Ronen Eldan, Suriya Gunasekar, Michael Harrison, Russell~J Hewett, Mojan Javaheripi, Piero Kauffmann, et~al.
\newblock Phi-4 technical report.
\newblock \emph{arXiv preprint arXiv:2412.08905}, 2024.

\bibitem[Achille and Soatto(2025)]{achille2025ai}
Alessandro Achille and Stefano Soatto.
\newblock Ai agents as universal task solvers.
\newblock \emph{arXiv preprint arXiv:2510.12066}, 2025.

\bibitem[Ahn et~al.(2022)Ahn, Brohan, Brown, Chebotar, Cortes, David, Finn, Fu, Gopalakrishnan, Hausman, et~al.]{ahn2022can}
Michael Ahn, Anthony Brohan, Noah Brown, Yevgen Chebotar, Omar Cortes, Byron David, Chelsea Finn, Chuyuan Fu, Keerthana Gopalakrishnan, Karol Hausman, et~al.
\newblock Do as i can, not as i say: Grounding language in robotic affordances.
\newblock \emph{arXiv preprint arXiv:2204.01691}, 2022.

\bibitem[Allender et~al.(2011)Allender, Kouck{\`y}, Ronneburger, and Roy]{allender2011pervasive}
Eric Allender, Michal Kouck{\`y}, Detlef Ronneburger, and Sambuddha Roy.
\newblock The pervasive reach of resource-bounded kolmogorov complexity in computational complexity theory.
\newblock \emph{Journal of Computer and System Sciences}, 77\penalty0 (1):\penalty0 14--40, 2011.

\bibitem[Anderson(1972)]{anderson1972more}
Philip~W Anderson.
\newblock More is different: Broken symmetry and the nature of the hierarchical structure of science.
\newblock \emph{Science}, 177\penalty0 (4047):\penalty0 393--396, 1972.

\bibitem[Antunes et~al.(2005)Antunes, Fortnow, van Melkebeek, and Vinodchandran]{antunes2006sophistication}
Luis Antunes, Lance Fortnow, Dieter van Melkebeek, and N.~V. Vinodchandran.
\newblock Sophistication revisited.
\newblock \emph{Theory of Computing Systems}, 38\penalty0 (4):\penalty0 535--555, 2005.

\bibitem[Applebaum(2016)]{applebaum2016cryptographic}
Benny Applebaum.
\newblock Cryptographic hardness of random local functions: Survey.
\newblock \emph{Computational complexity}, 25\penalty0 (3):\penalty0 667--722, 2016.

\bibitem[Ay et~al.(2010)Ay, M{\"u}ller, and Szkola]{ay2010effective}
Nihat Ay, Markus M{\"u}ller, and Arleta Szkola.
\newblock Effective complexity and its relation to logical depth.
\newblock \emph{IEEE transactions on information theory}, 56\penalty0 (9):\penalty0 4593--4607, 2010.

\bibitem[Ball{\'e} et~al.(2018)Ball{\'e}, Minnen, Singh, Hwang, and Johnston]{balle2018variational}
Johannes Ball{\'e}, David Minnen, Saurabh Singh, Sung~Jin Hwang, and Nick Johnston.
\newblock Variational image compression with a scale hyperprior.
\newblock \emph{arXiv preprint arXiv:1802.01436}, 2018.

\bibitem[Barak et~al.(2003)Barak, Shaltiel, and Wigderson]{barak2003computational}
Boaz Barak, Ronen Shaltiel, and Avi Wigderson.
\newblock Computational analogues of entropy.
\newblock In \emph{International Workshop on Randomization and Approximation Techniques in Computer Science}, pages 200--215. Springer, 2003.

\bibitem[Bengio et~al.(2019)Bengio, Deleu, Rahaman, Ke, Lachapelle, Bilaniuk, Goyal, and Pal]{bengio2019meta}
Yoshua Bengio, Tristan Deleu, Nasim Rahaman, Rosemary Ke, S{\'e}bastien Lachapelle, Olexa Bilaniuk, Anirudh Goyal, and Christopher Pal.
\newblock A meta-transfer objective for learning to disentangle causal mechanisms.
\newblock \emph{arXiv preprint arXiv:1901.10912}, 2019.

\bibitem[Bennett(1988)]{bennett1988logical}
Charles~H Bennett.
\newblock Logical depth and physical complexity.
\newblock \emph{The Universal Turing Machine: A Half-Century Survey}, 1:\penalty0 227--257, 1988.

\bibitem[Berthet and Rigollet(2013)]{berthet2013computational}
Quentin Berthet and Philippe Rigollet.
\newblock Computational lower bounds for sparse pca.
\newblock \emph{arXiv preprint arXiv:1304.0828}, 2013.

\bibitem[Blum and Micali(1982)]{blum1984generate}
Manuel Blum and Silvio Micali.
\newblock How to generate cryptographically strong sequences of pseudo random bits.
\newblock In \emph{23rd Annual Symposium on Foundations of Computer Science (sfcs 1982)}, pages 112--117, 1982.
\newblock \doi{10.1109/SFCS.1982.72}.

\bibitem[Bradbury et~al.(2018)Bradbury, Frostig, Hawkins, Johnson, Leary, Maclaurin, Necula, Paszke, Vander{P}las, Wanderman-{M}ilne, and Zhang]{jax2018github}
James Bradbury, Roy Frostig, Peter Hawkins, Matthew~James Johnson, Chris Leary, Dougal Maclaurin, George Necula, Adam Paszke, Jake Vander{P}las, Skye Wanderman-{M}ilne, and Qiao Zhang.
\newblock {JAX}: composable transformations of {P}ython+{N}um{P}y programs, 2018.
\newblock URL \url{http://github.com/jax-ml/jax}.

\bibitem[Burns et~al.(2023)Burns, Izmailov, Kirchner, Baker, Gao, Aschenbrenner, Chen, Ecoffet, Joglekar, Leike, et~al.]{burns2023weak}
Collin Burns, Pavel Izmailov, Jan~Hendrik Kirchner, Bowen Baker, Leo Gao, Leopold Aschenbrenner, Yining Chen, Adrien Ecoffet, Manas Joglekar, Jan Leike, et~al.
\newblock Weak-to-strong generalization: Eliciting strong capabilities with weak supervision.
\newblock \emph{arXiv preprint arXiv:2312.09390}, 2023.

\bibitem[Carroll and Parola(2024)]{carroll2024emergence}
Sean~M Carroll and Achyuth Parola.
\newblock What emergence can possibly mean.
\newblock \emph{arXiv preprint arXiv:2410.15468}, 2024.

\bibitem[Chaitin(1974)]{chaitin1974information}
Gregory~J Chaitin.
\newblock Information-theoretic limitations of formal systems.
\newblock \emph{Journal of the ACM (JACM)}, 21\penalty0 (3):\penalty0 403--424, 1974.

\bibitem[Chaitin(1975)]{chaitin1975theory}
Gregory~J Chaitin.
\newblock A theory of program size formally identical to information theory.
\newblock \emph{Journal of the ACM (JACM)}, 22\penalty0 (3):\penalty0 329--340, 1975.

\bibitem[Chaitin(1998)]{chaitin1998limits}
Gregory~J Chaitin.
\newblock \emph{The limits of mathematics: A course on information theory and the limits of formal reasoning}.
\newblock Springer, 1998.

\bibitem[Crutchfield and Packard(1983)]{crutchfield1983symbolic}
James~P Crutchfield and NH719053 Packard.
\newblock Symbolic dynamics of noisy chaos.
\newblock \emph{Physica D: Nonlinear Phenomena}, 7\penalty0 (1-3):\penalty0 201--223, 1983.

\bibitem[Dawid(1984)]{dawid1984present}
A~Philip Dawid.
\newblock Present position and potential developments: Some personal views statistical theory the prequential approach.
\newblock \emph{Journal of the Royal Statistical Society: Series A (General)}, 147\penalty0 (2):\penalty0 278--290, 1984.

\bibitem[Dehghani et~al.(2018)Dehghani, Gouws, Vinyals, Uszkoreit, and Kaiser]{dehghani2018universal}
Mostafa Dehghani, Stephan Gouws, Oriol Vinyals, Jakob Uszkoreit, and Lukasz Kaiser.
\newblock Universal transformers.
\newblock \emph{arXiv preprint arXiv:1807.03819}, 2018.

\bibitem[Del{\'e}tang et~al.(2023)Del{\'e}tang, Ruoss, Duquenne, Catt, Genewein, Mattern, Grau-Moya, Wenliang, Aitchison, Orseau, et~al.]{deletang2023language}
Gr{\'e}goire Del{\'e}tang, Anian Ruoss, Paul-Ambroise Duquenne, Elliot Catt, Tim Genewein, Christopher Mattern, Jordi Grau-Moya, Li~Kevin Wenliang, Matthew Aitchison, Laurent Orseau, et~al.
\newblock Language modeling is compression.
\newblock \emph{arXiv preprint arXiv:2309.10668}, 2023.

\bibitem[Dey et~al.(2025)Dey, Zhang, Noci, Li, Bordelon, Bergsma, Pehlevan, Hanin, and Hestness]{dey2025don}
Nolan Dey, Bin~Claire Zhang, Lorenzo Noci, Mufan Li, Blake Bordelon, Shane Bergsma, Cengiz Pehlevan, Boris Hanin, and Joel Hestness.
\newblock Don't be lazy: Completep enables compute-efficient deep transformers.
\newblock \emph{arXiv preprint arXiv:2505.01618}, 2025.

\bibitem[Downey and Hirschfeldt(2019)]{downey2019algorithmic}
Rod Downey and Denis~R Hirschfeldt.
\newblock Algorithmic randomness.
\newblock \emph{Communications of the ACM}, 62\penalty0 (5):\penalty0 70--80, 2019.

\bibitem[Dziugaite and Roy(2025)]{dziugaite2025size}
Gintare~Karolina Dziugaite and Daniel~M Roy.
\newblock The size of teachers as a measure of data complexity: Pac-bayes excess risk bounds and scaling laws.
\newblock In \emph{The 28th International Conference on Artificial Intelligence and Statistics}, 2025.

\bibitem[Edelman et~al.(2024)Edelman, Edelman, Goel, Malach, and Tsilivis]{edelman2024evolution}
Benjamin~L Edelman, Ezra Edelman, Surbhi Goel, Eran Malach, and Nikolaos Tsilivis.
\newblock The evolution of statistical induction heads: In-context learning markov chains.
\newblock \emph{arXiv preprint arXiv:2402.11004}, 2024.

\bibitem[Feldman(1998)]{feldman1998information}
David Feldman.
\newblock Information theory, excess entropy.
\newblock 1998.

\bibitem[Finzi et~al.(2025)Finzi, Kapoor, Granziol, Gu, De~Sa, Kolter, and Wilson]{finzi2025compute}
Marc Finzi, Sanyam Kapoor, Diego Granziol, Anming Gu, Christopher De~Sa, J~Zico Kolter, and Andrew~Gordon Wilson.
\newblock Compute-optimal llms provably generalize better with scale.
\newblock \emph{arXiv preprint arXiv:2504.15208}, 2025.

\bibitem[Finzi et~al.(2026)Finzi, et, and al]{finzi2026requential}
Marc Finzi, et, and al.
\newblock Requential coding.
\newblock Forthcoming, 2026.

\bibitem[Fraenkel and Lichtenstein(1981)]{fraenkel1981computing}
Aviezri~S Fraenkel and David Lichtenstein.
\newblock Computing a perfect strategy for n$\times$ n chess requires time exponential in n.
\newblock In \emph{International Colloquium on Automata, Languages, and Programming}, pages 278--293. Springer, 1981.

\bibitem[Gao et~al.(2020)Gao, Biderman, Black, Golding, Hoppe, Foster, Phang, He, Thite, Nabeshima, et~al.]{gao2020pile}
Leo Gao, Stella Biderman, Sid Black, Laurence Golding, Travis Hoppe, Charles Foster, Jason Phang, Horace He, Anish Thite, Noa Nabeshima, et~al.
\newblock The pile: An 800gb dataset of diverse text for language modeling.
\newblock \emph{arXiv preprint arXiv:2101.00027}, 2020.

\bibitem[Gardner(1970)]{gardner1970mathematical}
Martin Gardner.
\newblock Mathematical games.
\newblock \emph{Scientific american}, 222\penalty0 (6):\penalty0 132--140, 1970.

\bibitem[Gell-Mann and Lloyd(1996)]{gell1996information}
Murray Gell-Mann and Seth Lloyd.
\newblock Information measures, effective complexity, and total information.
\newblock \emph{Complexity}, 2\penalty0 (1):\penalty0 44--52, 1996.

\bibitem[Gerstgrasser et~al.(2024)Gerstgrasser, Schaeffer, Dey, Rafailov, Sleight, Hughes, Korbak, Agrawal, Pai, Gromov, et~al.]{gerstgrasser2024model}
Matthias Gerstgrasser, Rylan Schaeffer, Apratim Dey, Rafael Rafailov, Henry Sleight, John Hughes, Tomasz Korbak, Rajashree Agrawal, Dhruv Pai, Andrey Gromov, et~al.
\newblock Is model collapse inevitable? breaking the curse of recursion by accumulating real and synthetic data.
\newblock \emph{arXiv preprint arXiv:2404.01413}, 2024.

\bibitem[Giannou et~al.(2023)Giannou, Rajput, Sohn, Lee, Lee, and Papailiopoulos]{giannou2023looped}
Angeliki Giannou, Shashank Rajput, Jy-yong Sohn, Kangwook Lee, Jason~D Lee, and Dimitris Papailiopoulos.
\newblock Looped transformers as programmable computers.
\newblock In \emph{International Conference on Machine Learning}, pages 11398--11442. PMLR, 2023.

\bibitem[Goldblum et~al.(2023)Goldblum, Finzi, Rowan, and Wilson]{goldblum2023no}
Micah Goldblum, Marc Finzi, Keefer Rowan, and Andrew~Gordon Wilson.
\newblock The no free lunch theorem, kolmogorov complexity, and the role of inductive biases in machine learning.
\newblock \emph{arXiv preprint arXiv:2304.05366}, 2023.

\bibitem[Goldreich(2006)]{goldreich2001foundations1}
Oded Goldreich.
\newblock \emph{Foundations of Cryptography: Volume 1, Basic Tools}.
\newblock Cambridge University Press, 2006.

\bibitem[Goldreich and Levin(1989)]{goldreich1989hard}
Oded Goldreich and Leonid~A Levin.
\newblock A hard-core predicate for all one-way functions.
\newblock In \emph{Proceedings of the twenty-first annual ACM symposium on Theory of computing}, pages 25--32, 1989.

\bibitem[Grassberger(1986)]{grassberger1986toward}
Peter Grassberger.
\newblock Toward a quantitative theory of self-generated complexity.
\newblock \emph{International Journal of Theoretical Physics}, 25\penalty0 (9):\penalty0 907--938, 1986.

\bibitem[Gr{\"u}nwald(2007)]{grunwald2007minimum}
Peter~D Gr{\"u}nwald.
\newblock \emph{The minimum description length principle}.
\newblock MIT press, 2007.

\bibitem[Gr{\"u}nwald et~al.(2008)Gr{\"u}nwald, Vit{\'a}nyi, et~al.]{grunwald2008algorithmic}
Peter~D Gr{\"u}nwald, PM~Vit{\'a}nyi, et~al.
\newblock Algorithmic information theory, 2008.

\bibitem[Gruver et~al.(2023)Gruver, Finzi, Qiu, and Wilson]{gruver2023large}
Nate Gruver, Marc Finzi, Shikai Qiu, and Andrew~G Wilson.
\newblock Large language models are zero-shot time series forecasters.
\newblock \emph{Advances in Neural Information Processing Systems}, 36:\penalty0 19622--19635, 2023.

\bibitem[H{\"a}gele et~al.(2024)H{\"a}gele, Bakouch, Kosson, Von~Werra, Jaggi, et~al.]{hagele2024scaling}
Alex H{\"a}gele, Elie Bakouch, Atli Kosson, Leandro Von~Werra, Martin Jaggi, et~al.
\newblock Scaling laws and compute-optimal training beyond fixed training durations.
\newblock \emph{Advances in Neural Information Processing Systems}, 37:\penalty0 76232--76264, 2024.

\bibitem[H{\aa}stad et~al.(1999)H{\aa}stad, Impagliazzo, Levin, and Luby]{haastad1999pseudorandom}
Johan H{\aa}stad, Russell Impagliazzo, Leonid~A Levin, and Michael Luby.
\newblock A pseudorandom generator from any one-way function.
\newblock \emph{SIAM Journal on Computing}, 28\penalty0 (4):\penalty0 1364--1396, 1999.

\bibitem[Henighan et~al.(2020)Henighan, Kaplan, Katz, Chen, Hesse, Jackson, Jun, Brown, Dhariwal, Gray, et~al.]{henighan2020scaling}
Tom Henighan, Jared Kaplan, Mor Katz, Mark Chen, Christopher Hesse, Jacob Jackson, Heewoo Jun, Tom~B Brown, Prafulla Dhariwal, Scott Gray, et~al.
\newblock Scaling laws for autoregressive generative modeling.
\newblock \emph{arXiv preprint arXiv:2010.14701}, 2020.

\bibitem[Hoffmann et~al.(2022)Hoffmann, Borgeaud, Mensch, Buchatskaya, Cai, Rutherford, Casas, Hendricks, Welbl, Clark, et~al.]{hoffmann2022training}
Jordan Hoffmann, Sebastian Borgeaud, Arthur Mensch, Elena Buchatskaya, Trevor Cai, Eliza Rutherford, Diego de~Las Casas, Lisa~Anne Hendricks, Johannes Welbl, Aidan Clark, et~al.
\newblock Training compute-optimal large language models.
\newblock \emph{arXiv preprint arXiv:2203.15556}, 2022.

\bibitem[Hsiao et~al.(2007)Hsiao, Lu, and Reyzin]{hsiao2007conditional}
Chun-Yuan Hsiao, Chi-Jen Lu, and Leonid Reyzin.
\newblock Conditional computational entropy, or toward separating pseudoentropy from compressibility.
\newblock In \emph{Annual International Conference on the Theory and Applications of Cryptographic Techniques}, pages 169--186. Springer, 2007.

\bibitem[Hutter(2021)]{hutter2021learning}
Marcus Hutter.
\newblock Learning curve theory.
\newblock \emph{arXiv preprint arXiv:2102.04074}, 2021.

\bibitem[Jiang et~al.(2025)Jiang, Zhou, Feng, Malladi, and Kolter]{jiang2025adaptive}
Yiding Jiang, Allan Zhou, Zhili Feng, Sadhika Malladi, and J~Zico Kolter.
\newblock Adaptive data optimization: Dynamic sample selection with scaling laws.
\newblock In \emph{The Thirteenth International Conference on Learning Representations}, 2025.
\newblock URL \url{https://openreview.net/forum?id=aqok1UX7Z1}.

\bibitem[Kaplan et~al.(2020)Kaplan, McCandlish, Henighan, Brown, Chess, Child, Gray, Radford, Wu, and Amodei]{kaplan2020scaling}
Jared Kaplan, Sam McCandlish, Tom Henighan, Tom~B Brown, Benjamin Chess, Rewon Child, Scott Gray, Alec Radford, Jeffrey Wu, and Dario Amodei.
\newblock Scaling laws for neural language models.
\newblock \emph{arXiv preprint arXiv:2001.08361}, 2020.

\bibitem[Kingma et~al.(2013)Kingma, Welling, et~al.]{kingma2013auto}
Diederik~P Kingma, Max Welling, et~al.
\newblock Auto-encoding variational bayes, 2013.

\bibitem[Kolmogorov(1968)]{Kolmogorov01011968}
A.~N. Kolmogorov.
\newblock Three approaches to the quantitative definition of information *.
\newblock \emph{International Journal of Computer Mathematics}, 2\penalty0 (1-4):\penalty0 157--168, 1968.
\newblock \doi{10.1080/00207166808803030}.
\newblock URL \url{https://doi.org/10.1080/00207166808803030}.

\bibitem[Koppel(1988)]{koppel1987structure}
Moshe Koppel.
\newblock Structure.
\newblock In Rolf Herken, editor, \emph{The Universal Turing Machine: A Half-Century Survey}, pages 435--452. Oxford University Press, 1988.

\bibitem[Kraft(1949)]{Kraft1949Device}
Leon~G. Kraft.
\newblock A device for quantizing, grouping, and coding amplitude-modulated pulses.
\newblock S.m. thesis, Massachusetts Institute of Technology, Cambridge, MA, 1949.
\newblock URL \url{https://hdl.handle.net/1721.1/12390}.

\bibitem[Li and Vitányi(2008)]{LiVitanyi2008}
Ming Li and Paul Vitányi.
\newblock \emph{An introduction to {K}olmogorov complexity and its applications}.
\newblock Springer, New York, NY, 2008.

\bibitem[Li et~al.(2008)Li, Vit{\'a}nyi, et~al.]{li2008introduction}
Ming Li, Paul Vit{\'a}nyi, et~al.
\newblock \emph{An introduction to Kolmogorov complexity and its applications}, volume~3.
\newblock Springer, 2008.

\bibitem[Liu et~al.(2024)Liu, Feng, Xue, Wang, Wu, Lu, Zhao, Deng, Zhang, Ruan, et~al.]{liu2024deepseek}
Aixin Liu, Bei Feng, Bing Xue, Bingxuan Wang, Bochao Wu, Chengda Lu, Chenggang Zhao, Chengqi Deng, Chenyu Zhang, Chong Ruan, et~al.
\newblock Deepseek-v3 technical report.
\newblock \emph{arXiv preprint arXiv:2412.19437}, 2024.

\bibitem[Liu and Pass(2024)]{liu2024direct}
Yanyi Liu and Rafael Pass.
\newblock A direct prf construction from kolmogorov complexity.
\newblock In \emph{Annual International Conference on the Theory and Applications of Cryptographic Techniques}, pages 375--406. Springer, 2024.

\bibitem[MacKay(2003)]{mackay2003information}
David~JC MacKay.
\newblock \emph{Information theory, inference and learning algorithms}.
\newblock Cambridge university press, 2003.

\bibitem[Maini et~al.(2024)Maini, Seto, Bai, Grangier, Zhang, and Jaitly]{maini2024rephrasing}
Pratyush Maini, Skyler Seto, He~Bai, David Grangier, Yizhe Zhang, and Navdeep Jaitly.
\newblock Rephrasing the web: A recipe for compute and data-efficient language modeling.
\newblock \emph{arXiv preprint arXiv:2401.16380}, 2024.

\bibitem[Martin-L{\"o}f(1966)]{martin1966definition}
Per Martin-L{\"o}f.
\newblock The definition of random sequences.
\newblock \emph{Information and control}, 9\penalty0 (6):\penalty0 602--619, 1966.

\bibitem[Mart{\'\i}nez et~al.(2006)Mart{\'\i}nez, Adamatzky, and McIntosh]{martinez2006phenomenology}
Genaro~Ju{\'a}rez Mart{\'\i}nez, Andrew Adamatzky, and Harold~V McIntosh.
\newblock Phenomenology of glider collisions in cellular automaton rule 54 and associated logical gates.
\newblock \emph{Chaos, Solitons \& Fractals}, 28\penalty0 (1):\penalty0 100--111, 2006.

\bibitem[McLeish et~al.(2025)McLeish, Kirchenbauer, Miller, Singh, Bhatele, Goldblum, Panda, and Goldstein]{mcleish2025gemstones}
Sean McLeish, John Kirchenbauer, David~Yu Miller, Siddharth Singh, Abhinav Bhatele, Micah Goldblum, Ashwinee Panda, and Tom Goldstein.
\newblock Gemstones: A model suite for multi-faceted scaling laws.
\newblock \emph{arXiv preprint arXiv:2502.06857}, 2025.

\bibitem[McMillan(1956)]{McMillan1956TwoInequalities}
Brockway McMillan.
\newblock Two inequalities implied by unique decipherability.
\newblock \emph{IRE Transactions on Information Theory}, 2\penalty0 (4):\penalty0 115--116, December 1956.
\newblock \doi{10.1109/TIT.1956.1056818}.

\bibitem[Merkle(1978)]{merkle1978secure}
Ralph~C Merkle.
\newblock Secure communications over insecure channels.
\newblock \emph{Communications of the ACM}, 21\penalty0 (4):\penalty0 294--299, 1978.

\bibitem[Metzger(2000)]{metzger2000sinai}
Roger~J. Metzger.
\newblock Sinai-ruelle-bowen measures for contracting {Lorenz} maps and flows.
\newblock \emph{Annales de l'I.H.P. Analyse non lin\'eaire}, 17\penalty0 (2):\penalty0 247--276, 2000.
\newblock URL \url{https://www.numdam.org/item/AIHPC_2000__17_2_247_0/}.

\bibitem[Mota et~al.(2013)Mota, Aaronson, Antunes, and Souto]{mota2013sophistication}
Francisco Mota, Scott Aaronson, Lu{\'\i}s Antunes, and Andr{\'e} Souto.
\newblock Sophistication as randomness deficiency.
\newblock In \emph{Descriptional Complexity of Formal Systems: 15th International Workshop, DCFS 2013, London, ON, Canada, July 22-25, 2013. Proceedings 15}, pages 172--181. Springer, 2013.

\bibitem[Nakkiran et~al.(2020)Nakkiran, Neyshabur, and Sedghi]{nakkiran2020deep}
Preetum Nakkiran, Behnam Neyshabur, and Hanie Sedghi.
\newblock The deep bootstrap framework: Good online learners are good offline generalizers.
\newblock \emph{arXiv preprint arXiv:2010.08127}, 2020.

\bibitem[Olsson et~al.(2022)Olsson, Elhage, Nanda, Joseph, DasSarma, Henighan, Mann, Askell, Bai, Chen, et~al.]{olsson2022context}
Catherine Olsson, Nelson Elhage, Neel Nanda, Nicholas Joseph, Nova DasSarma, Tom Henighan, Ben Mann, Amanda Askell, Yuntao Bai, Anna Chen, et~al.
\newblock In-context learning and induction heads.
\newblock \emph{arXiv preprint arXiv:2209.11895}, 2022.

\bibitem[{OpenAI}(2025)]{openai2025gpt5systemcard}
{OpenAI}.
\newblock {GPT-5 System Card}.
\newblock \url{https://cdn.openai.com/gpt-5-system-card.pdf}, August 2025.
\newblock Version dated August 13, 2025. Accessed: 2026-01-05.

\bibitem[Papadopoulos et~al.(2024)Papadopoulos, Wenger, and Hongler]{papadopoulos2024arrows}
Vassilis Papadopoulos, J{\'e}r{\'e}mie Wenger, and Cl{\'e}ment Hongler.
\newblock Arrows of time for large language models.
\newblock In \emph{Forty-first International Conference on Machine Learning}, 2024.
\newblock URL \url{https://openreview.net/forum?id=UpSe7ag34v}.

\bibitem[Pearce and Song(2024)]{pearce2024reconciling}
Tim Pearce and Jinyeop Song.
\newblock Reconciling kaplan and chinchilla scaling laws.
\newblock \emph{arXiv preprint arXiv:2406.12907}, 2024.

\bibitem[Penedo et~al.(2024)Penedo, Kydl{\'\i}{\v{c}}ek, Lozhkov, Mitchell, Raffel, Von~Werra, Wolf, et~al.]{penedo2024fineweb}
Guilherme Penedo, Hynek Kydl{\'\i}{\v{c}}ek, Anton Lozhkov, Margaret Mitchell, Colin~A Raffel, Leandro Von~Werra, Thomas Wolf, et~al.
\newblock The fineweb datasets: Decanting the web for the finest text data at scale.
\newblock \emph{Advances in Neural Information Processing Systems}, 37:\penalty0 30811--30849, 2024.

\bibitem[Pesin(1977)]{pesin1977characteristic}
Ya~B Pesin.
\newblock Characteristic lyapunov exponents and smooth ergodic theory.
\newblock \emph{Russian Mathematical Surveys}, 32\penalty0 (4):\penalty0 55, 1977.

\bibitem[Radford et~al.(2019)Radford, Wu, Child, Luan, Amodei, Sutskever, et~al.]{radford2019language}
Alec Radford, Jeffrey Wu, Rewon Child, David Luan, Dario Amodei, Ilya Sutskever, et~al.
\newblock Language models are unsupervised multitask learners.
\newblock \emph{OpenAI blog}, 1\penalty0 (8):\penalty0 9, 2019.

\bibitem[Raz(2018)]{raz2018fast}
Ran Raz.
\newblock Fast learning requires good memory: A time-space lower bound for parity learning.
\newblock \emph{Journal of the ACM (JACM)}, 66\penalty0 (1):\penalty0 1--18, 2018.

\bibitem[Redeker(2010)]{redeker2010language}
Markus Redeker.
\newblock A language for particle interactions in one-dimensional cellular automata.
\newblock \emph{arXiv preprint arXiv:1012.0158}, 2010.

\bibitem[Rissanen(2004)]{rissanen2004minimum}
Jorma Rissanen.
\newblock Minimum description length principle.
\newblock \emph{Encyclopedia of statistical sciences}, 7, 2004.

\bibitem[Salmon et~al.(2011)Salmon, Moraes, Dror, and Shaw]{salmon2011parallel}
John~K Salmon, Mark~A Moraes, Ron~O Dror, and David~E Shaw.
\newblock Parallel random numbers: as easy as 1, 2, 3.
\newblock In \emph{Proceedings of 2011 international conference for high performance computing, networking, storage and analysis}, pages 1--12, 2011.

\bibitem[Saunshi et~al.(2025)Saunshi, Dikkala, Li, Kumar, and Reddi]{saunshi2025reasoning}
Nikunj Saunshi, Nishanth Dikkala, Zhiyuan Li, Sanjiv Kumar, and Sashank~J Reddi.
\newblock Reasoning with latent thoughts: On the power of looped transformers.
\newblock \emph{arXiv preprint arXiv:2502.17416}, 2025.

\bibitem[Schmidhuber(2002)]{schmidhuber2002speed}
J{\"u}rgen Schmidhuber.
\newblock The speed prior: a new simplicity measure yielding near-optimal computable predictions.
\newblock In \emph{International conference on computational learning theory}, pages 216--228. Springer, 2002.

\bibitem[Shafer and Vovk(2006)]{shafer2006sources}
Glenn Shafer and Vladimir Vovk.
\newblock The sources of kolmogorov’s grundbegriffe.
\newblock 2006.

\bibitem[Shalizi and Crutchfield(2001)]{shalizi2001computational}
Cosma~Rohilla Shalizi and James~P Crutchfield.
\newblock Computational mechanics: Pattern and prediction, structure and simplicity.
\newblock \emph{Journal of Statistical Physics}, 104\penalty0 (3--4):\penalty0 817--879, 2001.

\bibitem[Shannon(1948)]{shannon1948mathematical}
Claude~E Shannon.
\newblock A mathematical theory of communication.
\newblock \emph{The Bell system technical journal}, 27\penalty0 (3):\penalty0 379--423, 1948.

\bibitem[Shannon(1950)]{shannon1950chess}
Claude~E. Shannon.
\newblock Programming a computer for playing chess.
\newblock \emph{Philosophical Magazine}, 41\penalty0 (314):\penalty0 256--275, 1950.

\bibitem[Shaw(1984)]{shaw1984dripping}
Robert Shaw.
\newblock The dripping faucet as a model chaotic system.
\newblock \emph{(No Title)}, 1984.

\bibitem[Silver et~al.(2018)Silver, Hubert, Schrittwieser, Antonoglou, Lai, Guez, Lanctot, Sifre, Kumaran, Graepel, et~al.]{silver2018general}
David Silver, Thomas Hubert, Julian Schrittwieser, Ioannis Antonoglou, Matthew Lai, Arthur Guez, Marc Lanctot, Laurent Sifre, Dharshan Kumaran, Thore Graepel, et~al.
\newblock A general reinforcement learning algorithm that masters chess, shogi, and go through self-play.
\newblock \emph{Science}, 362\penalty0 (6419):\penalty0 1140--1144, 2018.

\bibitem[Soboleva et~al.(2023)Soboleva, Al-Khateeb, Myers, Steeves, Hestness, and Dey]{cerebras2023slimpajama}
Daria Soboleva, Faisal Al-Khateeb, Robert Myers, Jacob~R Steeves, Joel Hestness, and Nolan Dey.
\newblock {SlimPajama: A 627B token cleaned and deduplicated version of RedPajama}.
\newblock \url{https://cerebras.ai/blog/slimpajama-a-627b-token-cleaned-and-deduplicated-version-of-redpajama}, 2023.
\newblock URL \url{https://huggingface.co/datasets/cerebras/SlimPajama-627B}.

\bibitem[Song et~al.(2024)Song, Yang, and Anandkumar]{song2024lean}
Peiyang Song, Kaiyu Yang, and Anima Anandkumar.
\newblock Towards large language models as copilots for theorem proving in lean.
\newblock \emph{arXiv preprint arXiv:2404.12534}, 2024.

\bibitem[Steinhardt et~al.(2016)Steinhardt, Valiant, and Wager]{steinhardt2016memory}
Jacob Steinhardt, Gregory Valiant, and Stefan Wager.
\newblock Memory, communication, and statistical queries.
\newblock In \emph{Conference on Learning Theory}, pages 1490--1516. PMLR, 2016.

\bibitem[Sutskever(2019)]{sutskever2019gpt2}
Ilya Sutskever.
\newblock Gpt-2.
\newblock Presented at the Scaled Machine Learning Conference 2019, Computer History Museum, 2019.
\newblock \url{https://www.youtube.com/watch?v=T0I88NhR_9M}.

\bibitem[Terwijn(2016)]{terwijn2016mathematical}
Sebastiaan~A Terwijn.
\newblock The mathematical foundations of randomness.
\newblock In \emph{The Challenge of Chance: A Multidisciplinary Approach from Science and the Humanities}, pages 49--66. Springer International Publishing Cham, 2016.

\bibitem[Theis and Ahmed(2022)]{theis2022algorithms}
Lucas Theis and Noureldin~Y Ahmed.
\newblock Algorithms for the communication of samples.
\newblock In \emph{International Conference on Machine Learning}, pages 21308--21328. PMLR, 2022.

\bibitem[Tishby et~al.(2000)Tishby, Pereira, and Bialek]{tishby2000information}
Naftali Tishby, Fernando~C Pereira, and William Bialek.
\newblock The information bottleneck method.
\newblock \emph{arXiv preprint physics/0004057}, 2000.

\bibitem[Vereshchagin and Vit{\'a}nyi(2004)]{vereshchagin2004kolmogorov}
Nikolay Vereshchagin and Paul~M.B. Vit{\'a}nyi.
\newblock Kolmogorov's structure functions and model selection.
\newblock \emph{IEEE Transactions on Information Theory}, 50\penalty0 (12):\penalty0 3265--3290, 2004.

\bibitem[von Neumann(1928)]{vonneumann1928theorie}
John von Neumann.
\newblock Zur theorie der gesellschaftsspiele.
\newblock \emph{Mathematische Annalen}, 100\penalty0 (1):\penalty0 295--320, 1928.

\bibitem[Von~Neumann(1951)]{vonNeumann1951RandomDigits}
John Von~Neumann.
\newblock Various techniques used in connection with random digits.
\newblock \emph{Appl. Math Ser}, 12\penalty0 (36-38):\penalty0 3, 1951.

\bibitem[Wei et~al.(2022)Wei, Wang, Schuurmans, Bosma, Xia, Chi, Le, Zhou, et~al.]{wei2022chain}
Jason Wei, Xuezhi Wang, Dale Schuurmans, Maarten Bosma, Fei Xia, Ed~Chi, Quoc~V Le, Denny Zhou, et~al.
\newblock Chain-of-thought prompting elicits reasoning in large language models.
\newblock \emph{Advances in neural information processing systems}, 35:\penalty0 24824--24837, 2022.

\bibitem[Weiss et~al.(2021)Weiss, Goldberg, and Yahav]{weiss2021thinking}
Gail Weiss, Yoav Goldberg, and Eran Yahav.
\newblock Thinking like transformers.
\newblock In \emph{International Conference on Machine Learning}, pages 11080--11090. PMLR, 2021.

\bibitem[Whitney et~al.(2020)Whitney, Song, Brandfonbrener, Altosaar, and Cho]{whitney2020evaluating}
William~F Whitney, Min~Jae Song, David Brandfonbrener, Jaan Altosaar, and Kyunghyun Cho.
\newblock Evaluating representations by the complexity of learning low-loss predictors.
\newblock \emph{arXiv preprint arXiv:2009.07368}, 2020.

\bibitem[Wolfram and Gad-el Hak(2003)]{wolfram2003new}
Stephen Wolfram and M~Gad-el Hak.
\newblock A new kind of science.
\newblock \emph{Appl. Mech. Rev.}, 56\penalty0 (2):\penalty0 B18--B19, 2003.

\bibitem[Xu et~al.(2020)Xu, Zhao, Song, Stewart, and Ermon]{xu2020theory}
Yilun Xu, Shengjia Zhao, Jiaming Song, Russell Stewart, and Stefano Ermon.
\newblock A theory of usable information under computational constraints.
\newblock \emph{arXiv preprint arXiv:2002.10689}, 2020.

\bibitem[Yang and Littwin(2023)]{yang2023tensor}
Greg Yang and Etai Littwin.
\newblock Tensor programs ivb: Adaptive optimization in the infinite-width limit.
\newblock \emph{arXiv preprint arXiv:2308.01814}, 2023.

\bibitem[Yang et~al.(2022)Yang, Hu, Babuschkin, Sidor, Liu, Farhi, Ryder, Pachocki, Chen, and Gao]{yang2022tensor}
Greg Yang, Edward~J Hu, Igor Babuschkin, Szymon Sidor, Xiaodong Liu, David Farhi, Nick Ryder, Jakub Pachocki, Weizhu Chen, and Jianfeng Gao.
\newblock Tensor programs v: Tuning large neural networks via zero-shot hyperparameter transfer.
\newblock \emph{arXiv preprint arXiv:2203.03466}, 2022.

\bibitem[Yao(1982)]{Yao1982Trapdoor}
Andrew~Chi{-}Chih Yao.
\newblock Theory and applications of trapdoor functions (extended abstract).
\newblock In \emph{23rd Annual Symposium on Foundations of Computer Science (FOCS)}, pages 80--91. IEEE Computer Society, 1982.
\newblock \doi{10.1109/SFCS.1982.95}.

\bibitem[Zhang et~al.(2024)Zhang, Patel, Rizvi, Liu, He, Karbasi, Zappala, and van Dijk]{zhang2024intelligence}
Shiyang Zhang, Aakash Patel, Syed~A Rizvi, Nianchen Liu, Sizhuang He, Amin Karbasi, Emanuele Zappala, and David van Dijk.
\newblock Intelligence at the edge of chaos.
\newblock \emph{arXiv preprint arXiv:2410.02536}, 2024.

\bibitem[Zhang et~al.(2020)Zhang, Li, Dou, and Wu]{zhang2020measuring}
Xiao Zhang, Xingjian Li, Dejing Dou, and Ji~Wu.
\newblock Measuring information transfer in neural networks.
\newblock \emph{arXiv preprint arXiv:2009.07624}, 2020.

\bibitem[Zhou et~al.(2023)Zhou, Bradley, Littwin, Razin, Saremi, Susskind, Bengio, and Nakkiran]{zhou2023algorithms}
Hattie Zhou, Arwen Bradley, Etai Littwin, Noam Razin, Omid Saremi, Josh Susskind, Samy Bengio, and Preetum Nakkiran.
\newblock What algorithms can transformers learn? a study in length generalization.
\newblock \emph{arXiv preprint arXiv:2310.16028}, 2023.

\end{thebibliography}

\newpage
\appendix
\onecolumn

\newpage

\section*{Appendix Outline}

This appendix provides the technical details, proofs, and experimental specifications supporting the main text. 

\textbf{Appendix \ref{app:proofs}} presents rigorous proofs of all theoretical results, including properties of cryptographically secure pseudorandom number generators under time-bounded entropy and epiplexity (Theorem \ref{thm:csprng_entropy}), creation of information through deterministic transformations (Theorem \ref{thm:creation}), the existence of high-epiplexity random variables (Theorem \ref{thm:high-ep}), the factorization dependence of information content (Theorem \ref{thm:asymmtry}).

\textbf{Appendix \ref{app:measure}} details the practical methodology for estimating epiplexity, covering both prequential and requential coding implementations, hyperparameter optimization procedures for compute-optimal two-part codes, the connection between prequential and requential estimates under a static teacher assumption, and a solvable analytical model combining neural scaling laws with prequential coding. We also establish general properties showing that optimal model size and training tokens increase monotonically with compute budget, that optimal training tokens for prequential coding generally saturate at the test set size for large compute budgets, and that epiplexity and per-token entropy exhibit predictable monotonicity with respect to dataset size.

\textbf{Appendix \ref{app:experiment_details}} provides comprehensive experimental specifications for all empirical results, including architectural choices, hyperparameters, and dataset details for elementary cellular automata experiments, easy and hard variants of induction tasks, chess experiments (with both pre-training data formatting and downstream evaluation tasks), natural data experiments on OpenWebText and CIFAR-5M, comparisons between prequential and requential coding estimates, and scaling law estimation procedures.

\textbf{Appendix \ref{app:rule30_rasp}} presents executable RASP-L code demonstrating that elementary cellular automaton evolution rules can be implemented within the transformer computational model, providing constructive evidence that autoregressive transformers are capable of solving these tasks.

\textbf{Appendix \ref{app:conway}} contains definitions of elementary cellular automata and Conway's Game of Life, emergence examples referenced in the paper.

\textbf{Appendix \ref{app:emergence}} explores additional examples illustrating the relationship between emergence and epiplexity, including the Lorenz system as a case study in chaotic dynamics where entropy is created at a rate determined by Lyapunov exponents, and chess strategy as exemplified by the contrast between AlphaZero's multi-million parameter networks solution at moderate compute and the simple minimax algorithm available at very high compute.

\textbf{Appendix \ref{app:induction-not-specific}} argues that induction phenomena occur not merely in autoregressive models; instead, the key requirement is maximum likelihood estimation rather than autoregressive factorization specifically.

\textbf{Appendix~\ref{app:mdl}} provides a more comprehensive review of MDL, in particular on two-part code, one-part code and the notion of regret, related to epiplexity. %

\textbf{Compute Resources.}\quad 
A cluster of 6 2080Ti was used for many of the smaller scale experiments. A cluster of 6 Titan RTX and 32 TPUv4 provided by the Google TPU Research Cloud was used for the more computationally expensive natural data experiments. We refer the reader to \citet{jiang2025adaptive} for computational resources required in evaluating ADO.

\textbf{Licenses.}\quad The Chess data used in \cref{sec:factorization} is released under Creative Commons CC0 license (\href{https://database.lichess.org/}{\url{database.lichess.org/}}).
The OpenWebText dataset used in \cref{sec:epiplexity-natural} is released under Creative Commons CC0 license.

\section{Proofs}\label{app:proofs}

First, we prove two short lemmas about the basic properties of epiplexity and time-bounded entropy.

\begin{lemma}[Maximum expected description length]
\label{lemma:maxdl}
    For any random variable $X$ on $\{0,1\}^n$ there exists constants $c_1,c_2,c_3$ such that:
    \begin{equation}
        \rS_T(X) + \rH_T(X) \leq n + c_1
    \end{equation}
    for time bounds $T(n) \ge c_2n+c_3$.
\end{lemma}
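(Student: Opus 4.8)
The plan is to exhibit a single cheap model in $\mathcal{P}_T$ — the uniform distribution on $\{0,1\}^n$ — and then invoke the optimality of $\mathrm{P}^\star$ against it. Concretely, let $\mathrm{P}_{\mathrm{unif}}$ be the fixed program that on a sampling query $(1,u)$ copies the first $n$ bits off the random tape, and on an evaluation query $(0,x)$ with $|x|=n$ writes out the rational $2^{-n}$ as the length-$n$ binary fraction $0.\underbrace{0\cdots0}_{n-1}1$. One checks immediately that $\sum_{x\in\{0,1\}^n}\mathrm{Prob}_{\mathrm{P}_{\mathrm{unif}}}(x)=2^{n}\cdot 2^{-n}=1$ and that $\Pr_{u\sim U_\infty}[\mathrm{Sample}_{\mathrm{P}_{\mathrm{unif}}}(u)=x]=2^{-n}=\mathrm{Prob}_{\mathrm{P}_{\mathrm{unif}}}(x)$, so $\mathrm{P}_{\mathrm{unif}}$ meets the normalization and sampler/evaluator–consistency clauses of Definition~\ref{def:PT}.

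Next I would bound its runtime: both operations — reading $n$ tape bits, or emitting the $n$ digits of the binary expansion of $2^{-n}$ — take time linear in $n$ on the fixed universal prefix-free machine $\mathcal{U}$, say at most $c_2 n + c_3$ steps, where the additive constant $c_3$ absorbs $\mathcal{U}$'s fixed simulation overhead and the cost of parsing the $(0,\cdot)/(1,\cdot)$ tag. Hence, as soon as the allowed time bound satisfies $T(n)\ge c_2 n + c_3$, we have $\mathrm{P}_{\mathrm{unif}}\in\mathcal{P}_T$. Moreover $\mathrm{Prob}_{\mathrm{P}_{\mathrm{unif}}}(x)=2^{-n}$ for every $x$, so $\E[\log 1/\mathrm{Prob}_{\mathrm{P}_{\mathrm{unif}}}(X)]=n$ irrespective of the law of $X$.

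Finally I would conclude from the definition of $\mathrm{P}^\star$ as the minimizer of the two-part code over $\mathcal{P}_T$: since $\mathrm{P}_{\mathrm{unif}}\in\mathcal{P}_T$,
\[
\rS_T(X)+\rH_T(X)=|\mathrm{P}^\star|+\E[\log 1/P^\star(X)]\ \le\ |\mathrm{P}_{\mathrm{unif}}|+\E[\log 1/\mathrm{Prob}_{\mathrm{P}_{\mathrm{unif}}}(X)]=|\mathrm{P}_{\mathrm{unif}}|+n,
\]
so the claim holds with $c_1:=|\mathrm{P}_{\mathrm{unif}}|$, a constant depending only on $\mathcal{U}$.

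There is no obstacle of substance here; the only points requiring a little care are (i) verifying that $\mathrm{P}_{\mathrm{unif}}$ literally satisfies every clause of Definition~\ref{def:PT} (normalization, and the sampler inducing exactly the stated mass function), and (ii) pinning down how the sampler obtains the block length $n$ — under the convention that the sample-space dimension is supplied to the machine (e.g. in unary alongside the query), the uniform program is genuinely constant size and runs in linear time, which is all that is needed to identify $c_2,c_3$ and to make $c_1$ independent of both $X$ and $n$.
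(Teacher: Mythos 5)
Your proof is correct and follows essentially the same route as the paper's: exhibit the constant-size, linear-time uniform program as a feasible candidate in $\mathcal{P}_T$ and invoke the optimality of $\mathrm{P}^\star$ against it. Your version is simply more careful about verifying the clauses of Definition~\ref{def:PT} and about how the sampler obtains $n$, which the paper glosses over.
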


\begin{proof}
Let $U_n$ be the uniform distribution $Q_{\mathrm{unif}}(x) = 2^{-n}$.
$Q_{\mathrm{unif}}$ can be computed in linear time (just by outputting $2^{-n}$ for each input) and with a program of constant size $c_1$ and in time $c_2n+c_3$ with constants depending on the Turing machine..
$$|Q^\star_X| + \mathbb{E}[-\log Q^\star_X(x)] \leq |Q_{\mathrm{unif}}| + \mathbb{E}[-\log Q_{\mathrm{unif}}(x)] \leq c + n.$$
\end{proof}

\begin{lemma}[Time-bounded entropy of uniform distribution]
    Let $X = U_n$ be the uniform distribution on $\{0,1\}^n$. The time-bounded entropy of $U_n$ for $T(n)\ge c_2n+c_3$ is:
    \begin{equation}
        n\le \rH_T(X) \le n+c_1.
    \end{equation}
\end{lemma}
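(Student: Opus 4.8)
The plan is to establish the two bounds separately, each in essentially one line, using results already in hand. For the upper bound I would invoke Lemma~\ref{lemma:maxdl}, which states $\rS_T(X) + \rH_T(X) \le n + c_1$ for time bounds $T(n) \ge c_2 n + c_3$ and in particular applies to $X = U_n$. Since epiplexity is non-negative by definition, $\rS_T(U_n) \ge 0$, so $\rH_T(U_n) \le \rS_T(U_n) + \rH_T(U_n) \le n + c_1$, which is exactly the claimed upper bound with the same constant $c_1$.

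For the lower bound, the observation I would use is that the MDL-optimal program $\mathrm{P}^\star \in \mathcal{P}_T$ from Definition~\ref{def:epiplexity} induces, by the normalization condition in Definition~\ref{def:PT}, a genuine probability mass function $P^\star$ on $\{0,1\}^n$. Therefore $\rH_T(U_n) = \E_{X\sim U_n}[\log 1/P^\star(X)] = \sum_{x \in \{0,1\}^n} 2^{-n}\log\tfrac{1}{P^\star(x)}$ is the cross-entropy between $U_n$ and $P^\star$, which decomposes as $\rH(U_n) + \mathrm{KL}(U_n \,\|\, P^\star)$. Non-negativity of the KL divergence (Gibbs' inequality) then gives $\rH_T(U_n) \ge \rH(U_n) = n$. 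Combining the two bounds yields $n \le \rH_T(U_n) \le n + c_1$, as desired.

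The argument has no genuine obstacle; the only points requiring care are that (i) the minimizer defining $\mathrm{P}^\star$ exists and outputs a normalized distribution, so that Gibbs' inequality is applicable — which holds because the trivial uniform program attains finite two-part code length, forcing the minimizer to have finite expected loss — and (ii) the time bound $T(n) \ge c_2 n + c_3$ is large enough for the uniform program to lie in $\mathcal{P}_T$, which is precisely what Lemma~\ref{lemma:maxdl} already relies on. Both are therefore inherited for free.
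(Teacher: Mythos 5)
Your proposal is correct and follows essentially the same route as the paper: the upper bound via $\rH_T(X)\le \mathrm{MDL}_T(X)\le n+c_1$ from Lemma~\ref{lemma:maxdl}, and the lower bound via the cross-entropy decomposition $\E[-\log P^\star(X)]=\rH(U_n)+\mathrm{KL}(U_n\|P^\star)\ge n$. Your added remarks on the normalization of $P^\star$ and the existence of the minimizer are sensible bookkeeping that the paper leaves implicit, but they do not change the argument.
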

\begin{proof}

For the lower bound, we have $$\mathbb{E}_X[-\log Q(X)] = \rH(X) + D_{\mathrm{KL}}(P_X \| Q) \ge \rH(X)=n$$ given that the KL is always positive.
For the upper bound, we have that $$\rH_T(X)\le \mathrm{MDL}_T(X) \le n+c$$.
\end{proof}

\subsection{PRGs/CSPRNGs have (nearly) maximal time-bounded Entropy and low epiplexity}\label{app:csprng}

\begin{theorem}
\label{thm:maxent}
Let $X = U_k$ and $n = \ell(k)$ for a non-uniform PRG $G$ that admits advantage $\varepsilon(n)$.
Then, for every polynomial time bound $T(n)$,
\begin{equation}
    \rH_T\bigl(G(U_k)\bigr)\ \ge\ n - 2 - n\,\varepsilon(k).
\end{equation}
\end{theorem}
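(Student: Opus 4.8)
The plan is to turn the MDL-optimal model for $G(U_k)$ into an efficient statistical test and invoke the CSPRNG guarantee. Let $P^\star=\mathrm{Prob}_{\mathrm{P}^\star}$ be the $T$-time probabilistic model realizing the time-bounded MDL of the random variable $G(U_k)$. By \Cref{def:PT} it is a genuine distribution, $\sum_{x\in\{0,1\}^n}P^\star(x)=1$, evaluable on any $x\in\{0,1\}^n$ in $T(n)=\mathrm{poly}(n)=\mathrm{poly}(k)$ steps, and by the basic properties its description length obeys $|\mathrm{P}^\star|=\rS_T(G(U_k))\le n+c_1=\mathrm{poly}(k)$, so the program $\mathrm{P}^\star$ is short enough to be supplied as polynomial-length non-uniform advice.

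For a threshold $\theta\in[0,n]$ I would define the distinguisher $D_\theta(y)=\mathbbm{1}[\,P^\star(y)\ge 2^{-\theta}\,]$, computable in $T(n)+O(n)=\mathrm{poly}(k)$ time. Normalization forces a counting bound: at most $2^{\theta}$ strings $x\in\{0,1\}^n$ can have $P^\star(x)\ge 2^{-\theta}$, hence $\Pr_{u\sim U_n}[D_\theta(u)=1]\le 2^{\theta-n}$. Applying the CSPRNG advantage bound to $D_\theta$ then gives $\Pr_{s\sim U_k}[D_\theta(G(s))=1]\le 2^{\theta-n}+\varepsilon(k)$, i.e.
\[
\Pr_{s\sim U_k}\!\bigl[\log 1/P^\star(G(s))\le\theta\bigr]\;\le\;2^{\theta-n}+\varepsilon(k).
\]

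The next step is to integrate over thresholds. Setting $Z=\log 1/P^\star(G(s))\ge 0$ (nonnegative since $P^\star\le 1$), the layer-cake identity yields
\[
\rH_T(G(U_k))=\E[Z]\;\ge\;\int_0^n \Pr[Z>\theta]\,d\theta\;\ge\;\int_0^n\bigl(1-\varepsilon(k)-2^{\theta-n}\bigr)\,d\theta
\;=\;n\bigl(1-\varepsilon(k)\bigr)-\frac{1-2^{-n}}{\ln 2}.
\]
Since $\tfrac{1-2^{-n}}{\ln 2}<\tfrac{1}{\ln 2}=\log_2 e<2$, this gives $\rH_T(G(U_k))>n-2-n\varepsilon(k)$, as claimed; the companion upper bound $\rH_T(G(U_k))\le n+c$ is immediate from \Cref{lemma:maxdl} together with $\rS_T\ge 0$.

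The main obstacle is the sharpness of the additive constant. A single threshold together with Markov's inequality only gives something like $\rH_T\gtrsim n/2$, so one genuinely needs the continuum of tests $\{D_\theta\}_{\theta\in[0,n]}$ and the fact that the integrated false-positive mass $\int_0^n 2^{\theta-n}\,d\theta$ telescopes to $\log_2 e<2$ rather than to $n$. The remaining subtlety is bookkeeping: one must check that each $D_\theta$ respects the non-uniform PPT resource bounds, which is exactly why the crude estimate $|\mathrm{P}^\star|\le n+c_1=\mathrm{poly}(k)$ is invoked so that $\mathrm{P}^\star$ qualifies as legitimate advice.
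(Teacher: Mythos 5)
Your proposal is correct and takes essentially the same route as the paper's proof: threshold distinguishers $D_\theta(y)=\mathbbm{1}[P(y)\ge 2^{-\theta}]$ built from the (polynomially short, hence advice-admissible) MDL model, a counting bound under $U_n$, the CSPRNG guarantee to transfer it to $G(U_k)$, and a layer-cake argument to lower-bound the expected log-loss. The only difference is cosmetic — you integrate over a continuous threshold where the paper sums over $t=1,\dots,n$, trading the paper's constant $1$ for $\log_2 e<2$, both of which satisfy the stated bound.
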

\begin{proof}
Fix $\mathrm{P}\in \mathcal{P}_T$ and  let $L(x) = -\log P(x)$. For each precision level $t\in \{1, 2, \dots, n\}$, we define the following distinguisher:
$$D_t(x) = \mathbbm{1}\{L(x) \leq n-t\} = \mathbbm{1}\{P(x) \geq 2^{-(n-t)}\}.$$
For any solution $P$ for $\mathrm{MDL}_T$, we have that $\mathrm{MDL}_T(X) = |\mathrm{P}| + \E[-\log P(X)] \leq n + c$. Since both quantities are positive, it must be the case that $|\mathrm{P}| \leq n+c$, which means that $|\mathrm{P}| \in \mathrm{poly}(n)$.
Since $\mathrm{P}$ belongs in $\mathcal{P}_T$ and cannot be longer than $n$, each $D_t$ is a non-uniform PPT algorithm with polysized advice (i.e., $\mathrm{P}$) that PRGs are secure against.

\paragraph{Uniform threshold bound.} Let $U_n$ be uniform on $\{0,1\}^n$ and set $A_t := \{x: D_t(x) = 1\}$.
$$1 \geq \sum_{x} P(x) \geq \sum_{x\in A_t} P(x) \geq |A_t| 2^{-(n-t)} \Rightarrow |A_t| \leq 2^{n-t}.$$
Hence ,$$\Pr[D_t(U_n)=1] = \frac{|A_t|}{2^n} \leq \frac{ 2^{n-t}}{2^n} = 2^{-t}.$$

\paragraph{PRG transfers bound to $X:=G(U_k)$.} By the security of $G$, for each $t$,
$$
\Pr\bigl[D_t(X)=1\bigr]\ \le\ \Pr\bigl[D_t(U_n)=1\bigr]+\varepsilon(k)\ \le\ 2^{-t}+\varepsilon(k),
$$

\paragraph{From threshold probabilities to an entropy lower bound.}
For any non-negative random variable $Z$, we have the layercake representation:
\begin{align}
    \mathbb{E}[Z] &= \sum_{u=0}^\infty (1-P(Z \leq u))\\
     n-\mathbb{E}[Z] &= \sum_{u=0}^{n-1}1 -\sum_{u=0}^\infty (1-P(Z \leq u))\\
    &= \sum_{u=0}^{n-1}1 -\sum_{u=0}^{n-1} (1-P(Z \leq u))-\sum_{u=n}^\infty (1-P(Z \leq u))
    \\&= \sum_{u=0}^{n-1} P(Z \leq u)-\sum_{u=n}^\infty (1-P(Z \leq u))\\
    &\leq \sum_{u=0}^{n-1} P(Z \leq u).
\end{align}
Now we change the bounds to be in terms of $t$ with $t=n-u$. The lower bound becomes $t = n$. The upper bound becomes $t = 1$, which yields
$$n-\mathbb{E}[Z] \leq  \sum_{u=0}^{n-1} P(Z \leq u) = \sum_{t=1}^{n} P(Z \leq n-t).$$
Let $Z = L(X) = -\log P(X)$:
$$n-\mathbb{E}[Z] \leq \sum_{t=1}^{n} P(Z \leq n-t) = \sum_{t=1}^{n} P(D_t(X)=1)  \leq \sum_{t=1}^{n} 2^{-t} + \varepsilon(k)\leq 1 + n\varepsilon(k).$$
The last two steps come from the fact that $X$ is a CSPRNG. This means that:
$$n - \mathbb{E}[L(X)] \leq 1 + n\varepsilon(k) \Rightarrow \mathbb{E}[-\log P(X)]\geq n - n\varepsilon(k)-1.$$
Since this is true for any $P\in \mathcal{P}_T$, taking the minimum yields:
$$\rH_T(X) = \rH_T(G(U_n)) = \min_{P \in \mathcal{P}_T}\mathbb{E}[-\log P(X)]\geq n - n\varepsilon(k)-1.$$
\end{proof}

\subsection{Deterministic transformation can increase time bounded entropy and epiplexity}
\label{sec:epi_increase}

\begin{theorem}
    Let $G: \{0,1\}^k \to \{0,1\}^n$ be a $\mathrm{CSPRNG}$ which admits advantage $\varepsilon(k)$ and $U_k$ be the uniform distribution. $\rH_{\mathrm{Poly}}(G(U_k))>\rH_{\mathrm{Poly}}(U_k) + n-k-n\varepsilon(k)-c$ for a fixed constant $c$. Proof: see Appendix~\ref{app:csprng}.
\end{theorem}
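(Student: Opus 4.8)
This statement is essentially a corollary of \Cref{thm:csprng_entropy} combined with the elementary bound on the time-bounded entropy of a uniform source. The plan is to lower-bound $\rH_{\mathrm{Poly}}(G(U_k))$ using the CSPRNG result and upper-bound $\rH_{\mathrm{Poly}}(U_k)$ using the fact that the uniform distribution is trivially modeled, then subtract.

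First I would invoke \Cref{thm:csprng_entropy} (equivalently \Cref{thm:maxent}): since $G$ is a CSPRNG stretching $k$ bits to $n = \mathrm{poly}(k)$ bits with advantage at most $\varepsilon(k)$, for any polynomial time bound $T$ we have
\[
\rH_T\bigl(G(U_k)\bigr) \;\ge\; n - 2 - n\,\varepsilon(k).
\]
Next I would upper-bound the right-hand side term $\rH_{\mathrm{Poly}}(U_k)$. By \Cref{lemma:maxdl} and the subsequent lemma on the time-bounded entropy of the uniform distribution, applied to the sample space $\{0,1\}^k$, we have $\rH_T(U_k) \le k + c_1$ for any time bound $T$ exceeding the linear threshold $c_2 k + c_3$ needed to evaluate the uniform model; any polynomial $T$ certainly satisfies this. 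Here the constant $c_1$ depends only on the fixed universal machine, not on $k$.

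Combining the two bounds, for a common polynomial time bound $T$,
\[
\rH_{\mathrm{Poly}}\bigl(G(U_k)\bigr) - \rH_{\mathrm{Poly}}(U_k)
\;\ge\; \bigl(n - 2 - n\varepsilon(k)\bigr) - \bigl(k + c_1\bigr)
\;=\; n - k - n\varepsilon(k) - O(1),
\]
which is the claimed inequality (the strictness can be absorbed into the $O(1)$, or obtained from the strict inequality in \Cref{thm:maxent}). The only bookkeeping points to be careful about are: (i) ensuring the polynomial time bound $T$ is the same one in both applications, which is harmless since both bounds hold for \emph{every} polynomial $T$; and (ii) tracking that the constants are universal-machine constants independent of $k$, so they collapse into the $O(1)$. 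There is no real obstacle here—the content is entirely in \Cref{thm:csprng_entropy}, and the present statement merely repackages it as a concrete "deterministic transformation increases time-bounded information" claim by comparing against the input's own entropy.
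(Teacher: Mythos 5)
Your proposal is correct and follows essentially the same route as the paper's proof: lower-bound $\rH_{\mathrm{Poly}}(G(U_k))$ via the CSPRNG entropy theorem (Theorem~\ref{thm:maxent}), upper-bound $\rH_{\mathrm{Poly}}(U_k)\le k+O(1)$ via Lemma~\ref{lemma:maxdl}, and subtract. Your extra bookkeeping remarks about using a common polynomial time bound and machine-dependent constants are harmless refinements of the same argument.
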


\begin{proof}
By Lemma~\ref{lemma:maxdl} applied to the uniform distribution on $\{0,1\}^k$,
there is an absolute constant $c$ such that
\[
\rH_{\mathrm{poly}}(U_k)\le k+c.
\]
Rearranging gives $k \ge \rH_{\mathrm{poly}}(U_k)-O(1)$.
Combining this with the assumed CSPRNG lower bound (Lemma~\ref{thm:maxent}),
\[
\rH_{\mathrm{poly}}(G(U_k)) \ge n-2-n\varepsilon(k),
\]
we obtain,
\begin{align*}
    &\rH_{\mathrm{poly}}(G(U_k)) - \rH_{\mathrm{poly}}(U_k) \geq n-2-n\varepsilon(k) - (k+c)\\
    \Rightarrow &\rH_{\mathrm{Poly}}(G(U_k))>\rH_{\mathrm{Poly}}(U_k) + n-n\varepsilon(k)-k - O(1).
\end{align*}
\end{proof}
\subsection{CSPRNGs have low epiplexity}
\label{sec:csprng-epi}
\begin{theorem}
    Let $X = U_k$ and $n = \ell(k)$ for CSPRNG $G$ that admits advantange $\varepsilon(n)$.
    Then, for every polynomial time bound $T(n)$, the epiplexity of $Y = G(X)$ is,
    \begin{equation}
        \rS_T(Y) \leq c + n\varepsilon(k).
    \end{equation}
\end{theorem}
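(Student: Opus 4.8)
The strategy is to squeeze $\rS_T(Y)$, for $Y = G(U_k)$, between the generic ceiling on the total time-bounded information content and the nearly-maximal floor on the time-bounded entropy already established for CSPRNG outputs. Both $\rS_T(Y)=|\mathrm{P}^\star|$ and $\rH_T(Y)=\E[\log 1/P^\star(Y)]$ are read off the \emph{same} MDL-optimal program $\mathrm{P}^\star$ of Definition~\ref{def:epiplexity}, so $\rS_T(Y) = \mathrm{MDL}_T(Y) - \rH_T(Y)$ holds exactly, and it suffices to upper bound the first term and lower bound the second.

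For the upper bound, Lemma~\ref{lemma:maxdl} gives $\mathrm{MDL}_T(Y) = \rS_T(Y) + \rH_T(Y) \le n + c_1$ for any sufficiently large (at least linear) time bound, in particular any polynomial $T$ in the hypothesis, since the uniform distribution on $\{0,1\}^n$ is a constant-size, linear-time competitor in the two-part code. For the lower bound, Theorem~\ref{thm:maxent} (equivalently the entropy half of Theorem~\ref{thm:csprng_entropy}) gives $\rH_T(Y) \ge n - 2 - n\varepsilon(k)$, using that any short polynomial-time program $\mathrm{P}$ induces threshold distinguishers $D_t(x)=\mathbbm{1}\{P(x)\ge 2^{-(n-t)}\}$ with $\Pr[D_t(U_n)=1]\le 2^{-t}$, which the security of $G$ transfers to $Y$ up to $\varepsilon(k)$. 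Subtracting,
\[
\rS_T(Y) \;=\; \mathrm{MDL}_T(Y) - \rH_T(Y) \;\le\; (n+c_1) - \bigl(n - 2 - n\varepsilon(k)\bigr) \;=\; (c_1+2) + n\varepsilon(k),
\]
which is the claimed bound with $c := c_1 + 2$.

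I expect no real obstacle here: the genuine work lives inside Theorem~\ref{thm:maxent}, whose proof builds the distinguisher family $\{D_t\}$ and bounds $n - \E[-\log P(Y)]$ via the layer-cake identity and the geometric sum $\sum_{t=1}^n (2^{-t}+\varepsilon(k)) \le 1 + n\varepsilon(k)$. The one point to state carefully is that $\rH_T(Y)$ is the entropy term of the \emph{MDL-optimal} program, not of the entropy-minimizing one; but Theorem~\ref{thm:maxent} is a lower bound over all $\mathrm{P}\in\mathcal{P}_T$, so it applies to $\mathrm{P}^\star$ in particular and introduces no slack. The statement is thus essentially a corollary of the entropy bound together with the universal $n+O(1)$ ceiling on $\mathrm{MDL}_T$.
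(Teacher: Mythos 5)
Your proposal is correct and follows essentially the same route as the paper's proof: combine the ceiling $\rS_T(Y)+\rH_T(Y)\le n+c$ from Lemma~\ref{lemma:maxdl} with the entropy floor $\rH_T(Y)\ge n-2-n\varepsilon(k)$ from Theorem~\ref{thm:maxent} and subtract. Your additional remark that the entropy lower bound holds uniformly over all $\mathrm{P}\in\mathcal{P}_T$ and hence applies to the MDL-optimal $\mathrm{P}^\star$ is a worthwhile clarification, but the argument is the same.
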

\begin{proof}
    We know from Theorem~\ref{thm:maxent} that $\rH_T(G(U_k)) \geq n-n\varepsilon(k) -2$, which means:
    \begin{align}
        \rS_T(Y) + \rH_T(Y) \geq \rS_T(Y) + n - n\varepsilon(k) - 2.
    \end{align}
    We also have from Lemma~\ref{lemma:maxdl} that $\rS_T(Y) + \rH_T(Y) \leq n + c$. Combining these two results yields:
    \begin{align}
        \rS_T(Y) + n - n\varepsilon(k)-1 \leq n +c 
        \Rightarrow \rS_T(Y) \leq c + n \varepsilon(k).
    \end{align}
\end{proof}

\subsection{Existence of High Epiplexity random variables}
\label{app:high-ep}

\begin{definition}[Pseudorandom functions (PRF)]
Let $\mathrm{PRF}$ be the class of keyed functions $F: \{0,1\}^k \times \{0,1\}^n \to \{0,1\}^m$ that are computable in polynomial time and satisfy the following property: For any probabilistic polynomial-time distinguisher $D$ with oracle access to the provided function,
    \begin{equation}
        |\Pr_{K \sim U_k}[D^{F_K(\cdot)}] - \Pr_{f \sim \mathcal{F}_n}[D^{f(\cdot)}]| < \frac{1}{n^c},
    \end{equation}
for all integers $c > 0$ and sufficiently large $n$. Here, $F_K(\cdot)$ denotes the function $F(K, \cdot)$ with the key $K$ fixed, and $\mathcal{F}_n$ is the set of all functions mapping $\{0,1\}^n$ to $\{0,1\}^m$.
\end{definition}

\paragraph{Cryptographic assumptions.} Assume one-way functions exist (secure against non-uniform PPT adversaries with inversion probability at most 
$\varepsilon(n)$). By standard constructions~\citep{haastad1999pseudorandom}, this implies the existence of PRFs secure against non-uniform PPT distinguishers with advantage $\mathrm{poly}(\varepsilon(n))$ (and in particular negligible if $\varepsilon(n)$ is negligible).

\begin{definition}[Heavy set]
\label{def:heavyset}
    For a distribution $Q$ on $\{0,1\}^n$, $m < n$, and a fixed threshold $t \geq 0$, the $(Q,t)$-heavy set is:
    \begin{align}
        A_{Q,t} := \{z: Q(z) \geq 2^{-2(m+t)}\}.
    \end{align}
\end{definition}
\begin{lemma}
\label{lemma:heavyset1}
    Let $P$ be a distribution on $\{0,1\}^n$ with entropy $\rH(P)=m$. If $\mathrm{KL}(P, Q) \leq t$, then $P(A_{Q,t}) \geq \frac{1}{2}$.
\end{lemma}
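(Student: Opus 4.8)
The plan is to reduce the statement to a single application of Markov's inequality to the nonnegative random variable $-\log Q(z)$ under $z\sim P$. The one thing to observe up front is that the hypothesis $\mathrm{KL}(P,Q)\le t$, together with $\rH(P)=m$, controls the expectation of this random variable via the standard cross-entropy decomposition:
\begin{equation}
\E_{z\sim P}[-\log Q(z)] \;=\; \E_{z\sim P}[-\log P(z)] \,+\, \E_{z\sim P}\!\big[\log\tfrac{P(z)}{Q(z)}\big] \;=\; \rH(P) + \mathrm{KL}(P,Q) \;\le\; m + t .
\end{equation}
Here I am using the paper's base-$2$ logarithm convention so that $\rH(P)=\E_{z\sim P}[-\log P(z)]$ and $\mathrm{KL}(P,Q)=\E_{z\sim P}[\log(P(z)/Q(z))]$.

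Next I would note $-\log Q(z)\ge 0$ pointwise (since $Q(z)\le 1$) and apply Markov's inequality with the threshold $2(m+t)$, which is exactly twice the bound just obtained; assuming $m+t>0$,
\begin{equation}
\Pr_{z\sim P}\!\big[\,-\log Q(z)\ge 2(m+t)\,\big] \;\le\; \frac{\E_{z\sim P}[-\log Q(z)]}{2(m+t)} \;\le\; \frac{m+t}{2(m+t)} \;=\; \frac12 .
\end{equation}
Taking the complement gives $\Pr_{z\sim P}[\,-\log Q(z) < 2(m+t)\,]\ge \tfrac12$, i.e.\ $\Pr_{z\sim P}[\,Q(z) > 2^{-2(m+t)}\,]\ge \tfrac12$. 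Since $\{z:\ Q(z)>2^{-2(m+t)}\}\subseteq A_{Q,t}=\{z:\ Q(z)\ge 2^{-2(m+t)}\}$, monotonicity of $P$ yields $P(A_{Q,t})\ge \tfrac12$, which is the claim.

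It remains to dispatch the degenerate case $m+t=0$, where the Markov step divides by zero. In that case $\rH(P)=0$ forces $P$ to be a point mass at some $z_0$, and $\mathrm{KL}(P,Q)=-\log Q(z_0)\le 0$ forces $Q(z_0)=1\ge 2^{0}=2^{-2(m+t)}$, so $z_0\in A_{Q,t}$ and $P(A_{Q,t})=1$. I do not anticipate any real obstacle in this argument: the only point that requires a small observation is that the constant $2$ in the exponent defining the heavy set is chosen precisely so that Markov's inequality returns the factor $\tfrac12$, and that the base-$2$ convention is what converts the event $-\log Q(z)<2(m+t)$ into the membership condition $Q(z)\ge 2^{-2(m+t)}$.
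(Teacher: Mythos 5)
Your proof is correct and follows essentially the same route as the paper's: the cross-entropy decomposition $\E_{z\sim P}[-\log Q(z)]=\rH(P)+\mathrm{KL}(P\|Q)\le m+t$ followed by Markov's inequality at threshold $2(m+t)$ and taking the complement. Your extra care with the strict-versus-nonstrict inequality in the complement and the degenerate case $m+t=0$ is slightly more rigorous than the paper's version but changes nothing substantive.
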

\begin{proof}
    First, observe the standard inequality:
    $$\E_{z\sim P}\left[\log\frac{1}{Q(z)} \right] = \rH(P) + \mathrm{KL}(P\|Q) \leq m + t.$$
    Applying Markov's inequality, we get:
    \begin{align}
        \Pr_{z\sim P}\left[\log\frac{1}{Q(z)} \geq 2(m+t)\right] \leq \frac{\E[-\log Q(z)]}{2(m+t)} \leq \frac{1}{2}.
    \end{align}
    Taking the complement gives:
    \begin{align}
        \Pr_{z\sim P}\left[\log\frac{1}{Q(z)} \leq 2(m+t)\right] = \Pr_{z\sim P}\left[Q(z) \geq 2^{-2(m+t)}\right] = P(A_{Q,t}) \geq  \frac{1}{2}.
    \end{align}
\end{proof}
\begin{lemma}
\label{lemma:heavyset2}
    Let $U_n$ be the uniform distribution over $\{0,1\}^n$, the weights of $A_{Q,t}$ under $U_n$ is $U_n(A_{Q,t}) \leq 2^{-(n-2(m+t))}$
\end{lemma}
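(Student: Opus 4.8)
This is a pure counting bound on the size of a super-level set of the probability mass function $Q$, so the plan is short. First I would recall that by Definition~\ref{def:heavyset}, $A_{Q,t} = \{z \in \{0,1\}^n : Q(z) \ge 2^{-2(m+t)}\}$, and that under the uniform distribution $U_n(A_{Q,t}) = |A_{Q,t}|/2^n$ since every singleton has mass $2^{-n}$. So it suffices to upper bound the cardinality $|A_{Q,t}|$.

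Next I would invoke that $Q$ is a normalized probability distribution: $\sum_{z \in \{0,1\}^n} Q(z) = 1$. Restricting the sum to $A_{Q,t}$ and using that each term there is at least $2^{-2(m+t)}$ gives
\[
1 \;\ge\; \sum_{z \in A_{Q,t}} Q(z) \;\ge\; |A_{Q,t}| \cdot 2^{-2(m+t)},
\]
hence $|A_{Q,t}| \le 2^{2(m+t)}$. (This is exactly the same Markov-style argument already used in the proof of Theorem~\ref{thm:maxent} for the sets $A_t$.)

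Finally, combining the two observations,
\[
U_n(A_{Q,t}) \;=\; \frac{|A_{Q,t}|}{2^n} \;\le\; \frac{2^{2(m+t)}}{2^n} \;=\; 2^{-(n - 2(m+t))},
\]
which is the claim. There is essentially no obstacle here—the only thing to be careful about is the direction of the inequality in the definition of $A_{Q,t}$ (it is a set of \emph{heavy} points, so the probability lower bound turns into a cardinality upper bound) and making sure the normalization of $Q$ rather than any structural property is what is being used. This lemma will presumably be paired with Lemma~\ref{lemma:heavyset1} (which lower-bounds $P(A_{Q,t})$ when $\mathrm{KL}(P\|Q)$ is small) to derive, via the PRF indistinguishability assumption, the existence of the claimed high-epiplexity random variables in Theorem~\ref{thm:high-ep}: a short program $Q$ that compresses $X_n$ well would force $A_{Q,t}$ to be both $P$-heavy and $U_n$-light, contradicting pseudorandomness.
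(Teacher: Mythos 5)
Your proof is correct and is essentially the same argument as the paper's: the paper applies Markov's inequality to $Q(z)$ under $z \sim U_n$ (using $\E_{z\sim U_n}[Q(z)] = 2^{-n}$), which is exactly your cardinality bound $|A_{Q,t}| \le 2^{2(m+t)}$ divided by $2^n$, both resting only on the normalization of $Q$. No gaps.
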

\begin{proof}
For $z \sim U_n$, we have $\E_{z \sim U_n}[Q(z)] = \sum_{z} 2^{-n} Q(z) = 2^{-n}$. Applying Markov' inequaltiy:
\begin{align}
    \Pr_{z\sim U_n}\left[Q(z) \geq 2^{-2(m+t)}\right] \leq \frac{\E_{z \sim U_n}[Q(z)]}{2^{-2(m+t)}} \leq 2^{-n + 2(m+t)} = 2^{-(n-2(m+t))}.
\end{align}
\end{proof}

\begin{theorem}
    If there exists a PRF family $F_K: \{0,1\}^m \rightarrow \{0,1\}^k$ that is indexed by $K \in \{0,1\}^m$ and secure against a non-uniform PPT distinguisher $D_m$ allowing for an advantage of at most $\varepsilon(m)$, there exists $n_0$ such that for all $n = m+k \geq n_0$, there exists a sequence of random variables $\{X_k\}_{k=1}^n$ over $\{0,1\}^n$ such that $\mathrm{S}_{\mathrm{Poly}}(X_n) = \Omega(\log n)$.
\end{theorem}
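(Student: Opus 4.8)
The plan is to realize $X_n$ as a \emph{low‑entropy pseudorandom} distribution whose optimal time‑bounded model is forced, by a counting argument over short programs combined with the heavy‑set lemmas, to carry a program of length $\Omega(\log n)$. \textbf{Construction.} Given one‑way functions, instantiate a PRF with key and input length $m=m(n)=\lceil n^{1/5}\rceil$ and output length $k=k(n)=n-m$, so that $k\ge 4m$ for large $n$ and $n=m+k$; its non‑uniform security with advantage $\varepsilon(m)$ (negligible in $m$, hence in $n$) is exactly the stated hypothesis. For each key $K\in\{0,1\}^m$ let $\mu_K$ be the law over $\{0,1\}^n$ of the pair $(r,F_K(r))$ with $r\sim U_m$. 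Since the second block is a deterministic function of the first, $\rH(\mu_K)=m$, matching the entropy hypothesis of Lemma~\ref{lemma:heavyset1}.

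\textbf{Two preliminary facts.} (i) The program that hard‑codes $K$ and $F$ and outputs $2^{-m}$ on inputs of the form $(r,F_K(r))$ and $0$ elsewhere lies in $\mathcal P_T$ and has length $m+O(1)$; it is a normalized sampler/evaluator, so $\mathrm{MDL}_T(\mu_K)\le 2m+O(1)$. Writing $\mathrm P^\star_K$ (with distribution $Q^\star_K$) for the optimal program, $\mathrm{MDL}_T(\mu_K)=\rS_T(\mu_K)+m+t_K$ with $t_K:=\mathrm{KL}(\mu_K\|Q^\star_K)$, hence $\rS_T(\mu_K)\le m+O(1)$ and $t_K\le m+O(1)$. (ii) A single sample from the mixture $\mu_{U_m}$ (draw $K\sim U_m$, then sample $\mu_K$) is produced by one oracle query to $F_K(\cdot)$, so by PRF security $\mu_{U_m}$ is $\varepsilon(m)$‑indistinguishable from $U_n$ against every non‑uniform PPT distinguisher.

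\textbf{Counting $+$ heavy sets.} Fix $s=c\log n$ for an arbitrary constant $c$ and suppose, for contradiction, that $\rS_T(\mu_K)<s$ for all $K$. There are fewer than $2^{s}$ programs of length $<s$, so some single program $\mathrm P^\dagger$ (distribution $Q^\dagger$) is optimal for a set $\mathcal S$ of at least $2^{m-s}$ keys. Using the worst‑case bound $t_K\le m+O(1)$, define the test $D^\dagger(z)=\mathbbm 1\{\,Q^\dagger(z)\ge 2^{-2(m+m+O(1))}\,\}=\mathbbm 1\{z\in A_{Q^\dagger,\,m+O(1)}\}$; note $A_{Q^\dagger,t_K}\subseteq A_{Q^\dagger,\,m+O(1)}$. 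By Lemma~\ref{lemma:heavyset1}, $\mu_K(D^\dagger=1)\ge\tfrac12$ for every $K\in\mathcal S$, so $\mu_{U_m}(D^\dagger=1)\ge |\mathcal S|\,2^{-m}\cdot\tfrac12\ge 2^{-(s+1)}$; by Lemma~\ref{lemma:heavyset2}, $U_n(D^\dagger=1)\le 2^{-(n-4m-O(1))}=2^{-\Omega(m)}$ using $k\ge 4m$. Since $\mathrm P^\dagger$ has length $<s=\mathrm{poly}(n)$ and $Q^\dagger$ is evaluable in time $T(n)=\mathrm{poly}(n)$, $D^\dagger$ is a legitimate non‑uniform PPT distinguisher with advantage at least $2^{-(s+1)}-2^{-\Omega(m)}\ge\tfrac13 n^{-c}$ for large $n$. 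As $\varepsilon(m)=\varepsilon(n^{1/5})$ is negligible in $n$, it is eventually below $n^{-(c+1)}<\tfrac13 n^{-c}$, contradicting fact (ii). Hence for all large $n$ there is a key $K_n$ with $\rS_T(\mu_{K_n})\ge c\log n$; taking $X_n:=\mu_{K_n}$ gives $\rS_{\mathrm{Poly}}(X_n)=\Omega(\log n)$.

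\textbf{Main obstacle.} The subtlety is that the natural heavy‑set distinguisher built from $\mu_K$'s optimal model uses that model as advice, and these models differ from key to key, so they cannot be plugged into the PRF game one at a time. The pigeonhole step is exactly what repairs this: it extracts one program shared by exponentially many keys, producing a single, key‑independent non‑uniform distinguisher. The remaining care is in choosing the threshold uniformly over $\mathcal S$ (via $t_K\le m+O(1)$) and in arranging $k\gg m$ so the heavy set stays exponentially light under $U_n$; the $\Omega(\log n)$ bound — rather than something larger — reflects only that the distinguishing advantage $2^{-\Theta(s)}$ must beat the negligible quantity $\varepsilon(m)$.
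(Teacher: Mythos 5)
Your proof is correct and follows essentially the same route as the paper's: the identical construction $(U_m, F_K(U_m))$ with deterministic second block, the same keyed-model upper bound giving $\mathrm{MDL}_T \le 2m+O(1)$, and the same single-query heavy-set distinguisher built from the two Markov-inequality lemmas. The only difference is organizational — where the paper union-bounds the per-program event $\mathrm{KL}(P_K\|Q)\le t$ over all $2^{s+1}$ short programs and then argues via an explicit margin $\Delta$ that the MDL minimizer must be long, you pigeonhole the assumed-short optimal programs onto one program shared by $2^{m-s}$ keys and run the distinguisher once; the two counting arguments impose quantitatively the same requirement on $\varepsilon$.
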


\begin{proof}
    We will prove the existence of such $P$ via a counting argument. First, we define the family of distributions of interest. Concretely, we draw a sample $P_K$ as follows:
    \begin{enumerate}
        \item Sample $x \sim U_m$
        \item Output $z = (x, F_K(x)) \in \{0,1\}^n$
    \end{enumerate}
    Since $F_K$ is a deterministic function, $\rH(P_K) = m$.

    We also defined a \emph{keyed model} $\mathrm{Q}_K$ that models $P_K$ by directly storing the key $K$ and the program for generating PRF from $K$ inside its program:
    $$Q_K(x,y) = 2^{-m} \mathbbm{1}\{y = F_K(x)\}.$$
    This model matches the density of $P_K$ so $\mathrm{KL}(P_K \| Q_K) = 0$, and:
    $$L(Q_K, P_K) = |\mathrm{Q}_K| + \rH(P_K) \leq m + c_1 + m = 2m + c_1.$$
    $c_1$ is the constant overhead to implement the PRF evaluation and sampling wrapper under a fixed encoding (i.e., a UTM).

    \paragraph{Constructing distinguisher from $Q$.} Given a model $Q$ and its heavy set $A_{Q,t}$ (Definition~\ref{def:heavyset}), we can turn $Q$ into a \emph{single-query} distinguisher $D^O$:
    \begin{enumerate}
        \item Sample $x\sim U_m$ and query the oracle $y=O(x)$ and set $z = (x, y)$.
        \item Output $1$ if $z \in A_{Q,t}$ i.e., $Q(z) \geq 2^{-2(m+t)}$ else $0$.
    \end{enumerate}
    If $O$ is a truly random function $R$, then $(x, R(x))$ follows $U_n$ and by Lemma~\ref{lemma:heavyset2}:
    \begin{align}
        \Pr[D^R=1] = \Pr_{z\sim U_n}\left[z \in A_{Q,t}\right] \leq 2^{-(n-2(m+t))}
    \end{align}
    If $O$ is the PRF $F_K$ for a $K$ that satisfies $\mathrm{KL}(P_K \| Q) \leq t$, Lemma~\ref{lemma:heavyset1} gives:
    \begin{align}
        \Pr\left[D^{F_K}=1 \mid \mathrm{KL}(P_K \| Q) \leq t\right] \geq \frac{1}{2}.
    \end{align}
    Let $p_{Q, t} = \Pr_K[\mathrm{KL}(P_K \| Q) \leq t]$. We can average over all possible $K$ and obtain the following bound:
    \begin{align}
        \Pr\left[D^{F_K}=1\right] \geq \Pr_K[\mathrm{KL}(P_K \| Q) \leq t] \Pr\left[D^{F_K}=1 \mid \mathrm{KL}(P_K \| Q) \leq t\right] \geq \frac{1}{2} p_{Q, t}.
    \end{align}
    Therefore,  the distinguishing advantage of $D^O$ is:
    \begin{align}
        \mathsf{Adv}(D^O) = \Pr\left[D^{F_K}=1\right] - \Pr[D^R=1] \geq \frac{1}{2} p_{Q, t} - 2^{-(n-2(m+t))}.
    \end{align}
    Rearranging:
    \begin{align}
        p_{Q, t} \leq 2\mathsf{Adv}(D^O) + 2 \cdot 2^{-(n-2(m+t))}.
    \end{align}
    Since $F_K$ is a PRF and $D_O$ is a PPT distinguisher, the advantage is upperbounded by $\varepsilon(m)$:
    \begin{align}
        p_{Q, t} \leq 2\varepsilon(m) + 2 \cdot 2^{-(n-2(m+t))}.
    \end{align}
\paragraph{Union bound over short models.} Given a maximum program length $s$, there are at most $2^{s+1}$ candidate programs $\mathrm{Q}$ with $|\mathrm{Q}| \leq s$. Applying union bound on all such $Q$'s:
\begin{align}
    \Pr_{K}\left[\exists \mathrm{Q} : |\mathrm{Q}| \leq s \,\wedge \, \mathrm{KL}(P_K\|Q)\leq t\right]\leq 2^{s+1} p_{Q, t} \leq 2^{s+1}\left(2\varepsilon(m) + 2 \cdot 2^{-(n-2(m+t))}\right).
    \label{eq:union}
\end{align}

Now, it suffices to choose parameters such that the RHS of equation~\ref{eq:union} is smaller than $1$, which implies there exists a hard key $K^\star$ such that:
\begin{align}
    \mathrm{KL}(P_{K^\star}\|Q) > t, \,\,\forall \mathrm{Q} \,\,\text{satisfying} \,\,|\mathrm{Q}| \leq s.
\end{align}

\paragraph{MDL lower bound from $K^\star$.} 
For $K^\star$, every $|\mathrm{Q}|\leq s$ satisfies:
$$L(Q, P_{K^\star}) = |\mathrm{Q}| + \rH(P^\star) + \mathrm{KL}(P_{K^\star}\|Q) \geq \rH(P^\star) + \mathrm{KL}(P_{K^\star}\|Q) \geq m + t.$$
Meanwhile, the keyed model $Q_{K^\star}$ satisfies: $L(Q_{K^\star}, P_{K^\star}) \leq 2m + c_1.$
If we set:
$$t = m + c_1 + \Delta,$$
we get a margin of $\Delta$:
\begin{align}
 L(Q, P_{K^\star}) \geq m + m + c_1 + \Delta > 2m + c_1 \geq   L(Q_{K^\star}, P_{K^\star}).  
\end{align}
This implies that there exists at least one model that achieves a lower description length than any $\mathrm{Q}$ with $|\mathrm{Q}|\leq s$ and the MDL minimizer must have $|\mathrm{Q}^\star| > s$.

\paragraph{Choosing parameters.} 
Set:
\begin{itemize}
    \item $s = \log m$
    \item $\Delta = \log m$
    \item $t = m + c_1 + \Delta = m + c_1 + \log m$
    \item $k = 4m + 4\Delta + 2c_1$
\end{itemize}
We now plug these values into \autoref{eq:union}. First, $2^{s+1} = \mathrm{poly}(m)$ and $\lim_{m \rightarrow \infty} 2^{s+1}\cdot 2 \varepsilon(m) = 0$. For the second term:
\begin{align*}
    &2^{s+1} \cdot 2 \cdot 2^{-(n-2(m+t))} \\
    =& 2^{\log m+1} \cdot 2 \cdot 2^{-(m+4m+4\Delta + 2c_1-2(m+m + c_1 + \log m))} \\
    =& 2^{\log m+2}  \cdot 2^{-(5m+4\log m + 2c_1-2(2m + c_1 + \log m))}\\
    =& 2^{\log m+2}  \cdot 2^{-(m+2\log m)} \\
    =& 2^{-m-\log m+2}.
\end{align*}
This term also approaches $0$ as $m$ increases. So for sufficiently large $m$ the RHS of \autoref{eq:union} is less than $1$ as desired.

\end{proof}

\subsection{Information Content is not Independent of Factorization}
\label{sec:inf_fac}

\begin{theorem}[OWP induces entropy asymmetry]
\label{thm:soi_entropy}
Let $f:\{0,1\}^n\to\{0,1\}^n$ be a polynomial-time computable one-way permutation secure
against non-uniform PPT inverters with negligible success probability. Let
$X= U_n$ and $Y=f(X)$. 
Let $\rH_{\mathrm{poly}}(\cdot)$ and $\rH_{\mathrm{poly}}(\cdot\mid\cdot)$ be defined as in
Definition~\ref{def:epiplexity}. Then for every constant $c>0$ there exists $N$ such that
for all $n\ge N$,
\[
\rH_{\mathrm{poly}}(X \mid Y) + \rH_{\mathrm{poly}}(Y)
>
\rH_{\mathrm{poly}}(Y\mid X) + \rH_{\mathrm{poly}}(X) + c\log n.
\]
\end{theorem}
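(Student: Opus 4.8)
The key structural observation is that, since $f$ is a permutation and $X=U_n$, the image $Y=f(X)$ is again distributed exactly as $U_n$. Because time-bounded entropy depends only on the distribution of its argument (the minimizer $\mathrm{P}^\star$ and its expected log-loss are functions of the law of the random variable), this gives $\rH_{\mathrm{poly}}(X)=\rH_{\mathrm{poly}}(Y)=\rH_{\mathrm{poly}}(U_n)$ exactly, so these two terms cancel and the claimed inequality is equivalent to
\[
\rH_{\mathrm{poly}}(X\mid Y)\;-\;\rH_{\mathrm{poly}}(Y\mid X)\;=\;\omega(\log n).
\]
I would prove this by bounding each conditional term separately: $\rH_{\mathrm{poly}}(Y\mid X)=O(1)$ (the ``forward'' direction is trivially predictable) and $\rH_{\mathrm{poly}}(X\mid Y)=\omega(\log n)$ (the ``inverse'' direction is hard by one-wayness). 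Given any constant $c$, the $\omega(\log n)$ gap exceeds $c\log n$ once $n$ is large, which is exactly the stated form.

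\emph{Forward direction.} Given $X=x$, the conditional law of $Y$ is the point mass at $f(x)$. Consider the deterministic model $\mathrm{P}$ that stores the (constant-size) code of $f$ and, on evaluation input $(x,y)$, returns $\mathbbm{1}[y=f(x)]$, and on sampling input $x$ returns $f(x)$. Since $f$ is polynomial-time computable, $\mathrm{P}\in\mathcal{P}_T^X$ for any polynomial time bound $T$ dominating the runtime of $f$, its expected log-loss is $\E[-\log P(Y\mid X)]=0$, and $|\mathrm{P}|=O(1)$. Hence the minimal conditional two-part code length $\mathrm{MDL}_T(Y\mid X)=\rS_T(Y\mid X)+\rH_T(Y\mid X)$ is $O(1)$, and in particular $\rH_{\mathrm{poly}}(Y\mid X)=O(1)$.

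\emph{Inverse direction.} Let $\mathrm{P}^\star=\mathrm{P}^\star_{X\mid Y}$ be the MDL minimizer. Comparing against the uniform conditional model $P(x\mid y)=2^{-n}$ gives $\mathrm{MDL}_T(X\mid Y)\le n+O(1)$, so $|\mathrm{P}^\star|\le n+O(1)$ is of polynomial size and may serve as non-uniform advice. Define a candidate inverter $A$ for $f$: on input $y$, run the sampler guaranteed by $\mathrm{P}^\star\in\mathcal{P}_T^X$ to draw $\hat x\sim P^\star(\cdot\mid y)$ and output $\hat x$. Then $A$ is a non-uniform PPT algorithm (advice $\mathrm{P}^\star$, running time $T(n)=\mathrm{poly}(n)$), and since $f$ is a permutation the unique preimage of $f(x)$ is $x$, so
\[
\Pr_{x\sim U_n}\!\big[A(f(x))=x\big]\;=\;\E_{(X,Y)}\big[P^\star(X\mid Y)\big].
\]
By one-wayness of $f$, the left side is $<\mathrm{negl}(n)$. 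Finally, convexity of $z\mapsto-\log z$ and Jensen's inequality give
\[
\rH_{\mathrm{poly}}(X\mid Y)=\E_{(X,Y)}\big[-\log P^\star(X\mid Y)\big]\;\ge\;-\log\E_{(X,Y)}\big[P^\star(X\mid Y)\big]\;>\;-\log\mathrm{negl}(n)\;=\;\omega(\log n),
\]
where if $P^\star$ ever assigns probability $0$ to a positive-probability event the left side is $+\infty$ and the bound is immediate. Combining the two bounds with the cancellation $\rH_{\mathrm{poly}}(X)=\rH_{\mathrm{poly}}(Y)$ completes the argument.

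\emph{Main obstacle.} The mathematical content is short; the care lies in the bookkeeping around the computational model. I need to confirm that the minimizer $\mathrm{P}^\star_{X\mid Y}$ is a genuinely polynomial-size program (so it is a legal piece of non-uniform advice), that the sampling interface built into membership in $\mathcal{P}_T^X$ really lets $A$ run in time $\mathrm{poly}(n)$, and that a single polynomial time bound $T$ can be chosen simultaneously large enough to evaluate $f$ (needed for the forward direction) and to run the sampler of $\mathrm{P}^\star$ (needed for the inverse direction); this is what the notation $\rH_{\mathrm{poly}}$ is implicitly quantifying over. The one genuinely non-routine step is the Jensen argument upgrading a bound on inversion \emph{probability} into a bound on expected \emph{log-loss}; everything else is definition-chasing.
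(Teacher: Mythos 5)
Your proof is correct and takes essentially the same route as the paper's: cancel the unconditional terms using the fact that $f$ is a permutation (so $Y$ is also uniform), show $\rH_{\mathrm{poly}}(Y\mid X)=O(1)$ via the deterministic forward sampler, and lower-bound $\rH_{\mathrm{poly}}(X\mid Y)$ by turning the MDL-optimal conditional sampler into a non-uniform PPT inverter and applying Jensen's inequality to convert the negligible inversion probability into an $\omega(\log n)$ expected log-loss. Your explicit justification that $|\mathrm{P}^\star|\le n+O(1)$ (so it is legitimate polynomial-size advice) is a small point of care that the paper's version leaves implicit.
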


\begin{proof}
We prove bounds on each term.

\paragraph{Unconditional terms $\rH_{\mathrm{poly}}(X)$ and $\rH_{\mathrm{poly}}(Y)$.}
Since $X= U_n$ and $f$ is a permutation, $Y=f(X)$ is also uniform on $\{0,1\}^n$.
By Lemma~\ref{lemma:maxdl} (time-bounded entropy of the uniform distribution),
there is a constant $c_0$ such that
\[
n \le \rH_{\mathrm{poly}}(X) \le n+c_0,\qquad n \le \rH_{\mathrm{poly}}(Y) \le n+c_0.
\]
In particular, $-c_0 \leq \rH_{\mathrm{poly}}(Y)-\rH_{\mathrm{poly}}(X)\leq c_0$, so $\rH_{\mathrm{poly}}(Y)-\rH_{\mathrm{poly}}(X) = O(1)$.

\paragraph{Forward conditional term $\rH_{\mathrm{poly}}(Y\mid X)$.}
There is a deterministic conditional sampler that on input $x$ outputs $f(x)$.
For this sampler, $P(Y\mid X)=1$, hence $\log(1/P(Y\mid X))=0$.
Since $\rH_{\mathrm{poly}}(Y\mid X)$ is the expected log-loss of the MDL-optimal conditional sampler,
we obtain
\[
\rH_{\mathrm{poly}}(Y\mid X)=O(1).
\]

\paragraph{Hard conditional term $\rH_{\mathrm{poly}}(X\mid Y)$.}
Let $P^\star:=P^\star_{X\mid Y}$ be the MDL-optimal conditional probabilistic model for $X\mid Y$
over the class of non-uniform PPT model, and define
\[
\phi(y)\;:=\; \Pr_{u\sim U_\infty}\left[\mathrm{Sample}_{P^\star_{X\mid y}}(u) = f^{-1}(y)\right].
\]
Because $Y=f(X)$ and $f$ is a permutation, we have $X=f^{-1}(Y)$, and thus
\[
P^\star(X\mid Y)=P^\star(f^{-1}(Y)\mid Y)=\phi(Y)\qquad\text{a.s.}
\]
Therefore
\[
\rH_{\mathrm{poly}}(X\mid Y)
=\E\Big[\log\frac{1}{P^\star(X\mid Y)}\Big]
=\E\Big[\log\frac{1}{\phi(Y)}\Big].
\]
By Jensen's inequality for the convex function $\log(1/t)$,
\[
\E\Big[\log\frac{1}{\phi(Y)}\Big]\;\ge\;\log\frac{1}{\E[\phi(Y)]}.
\]
Now consider the inverter $\mathcal I$ that on input $y$ runs the sampler $P^\star(X\mid Y)$
once and outputs the resulting $x$. Since $P^\star$ is a non-uniform PPT sampler, $\mathcal I$
is a non-uniform PPT inverter. Moreover, its inversion success probability is exactly
\[
\Pr[\mathcal I(Y)=f^{-1}(Y)] = \E[\phi(Y)].
\]
Equivalently (since $Y=f(X)$),
\[
\Pr_{X\sim U_n}\big[\mathcal I(f(X))=X\big] = \E[\phi(Y)].
\]
By one-wayness, this success probability is negligible. In particular, for every constant
$c>0$ there exists $N$ such that for all $n\ge N$,
\[
\E[\phi(Y)] \le n^{-c}.
\]
Plugging into the Jensen bound yields, for all $n\ge N$,
\[
\rH_{\mathrm{poly}}(X\mid Y)\;\ge\;\log\frac{1}{\E[\phi(Y)]}\;\ge\; c\log n.
\]

\paragraph{Combine.}
For $n\ge N$, we have
\begin{align}
\rH_{\mathrm{poly}}(X\mid Y)+\rH_{\mathrm{poly}}(Y)
&\ge c\log n + \rH_{\mathrm{poly}}(Y)\\
& \ge c\log n + \rH_{\mathrm{poly}}(X) - O(1) \\
&= \rH_{\mathrm{poly}}(Y\mid X)+\rH_{\mathrm{poly}}(X)+c\log n - O(1),
\end{align}
where we used $\rH_{\mathrm{poly}}(Y\mid X)=O(1)$ and $\rH_{\mathrm{poly}}(Y)-\rH_{\mathrm{poly}}(X)\ge -c_0$.
\end{proof}

\begin{corollary}
\label{cor:inf_fac2}
     Let $f$ be a one-way permutation and lef $X = \mathrm{Unif}(\{0, 1\}^n), Y=f(X)$. Define $\mathcal{P}$ as a family of probabilistic generative model that allows for multiple factorizations of the data, ie $P\in \mathcal P$ it can make predictions $P_{1\to 2}(X,Y)=P_1(X)P_2(Y;X)$ and $P_{2\to 1}(X,Y)=P_2(Y)P_1(X;Y)$ for the functions $P_1(\cdot),P_1(\cdot \ ;\cdot)$,$P_2(\cdot),P_2(\cdot \ ;\cdot)$ that are normalized probability distributions over the first variable. 

     Suppose that $P$ fits the forward direction of $f$ (and the input uniform distributions)
     \begin{align*}
         \E[-\log P_1(X)] &\le n+\varepsilon\\
         \E[-\log P_2(f(X)\mid X)] &\le \varepsilon
     \end{align*}

    then it must violate Bayes theorem $P_{1\to 2} = P_{2\to 1}$ by a margin growing with $n$.
     
     Specifically, for any value of $c$ there exists $N$ such that for all $n > N$, there exists at least one $x \in \{0,1\}^n$ such that
     
    \begin{equation}
         P_1(x)P_2(f(x);x)> n^c 2^{-2\varepsilon} P_2(f(x))P_1(x;f(x))
    \end{equation}
\end{corollary}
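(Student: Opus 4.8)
The plan is to combine the entropy asymmetry from Theorem~\ref{thm:soi_entropy} with an averaging argument. Since $P_{1\to2}$ fits the forward direction with total expected log-loss at most $n+2\varepsilon$, and since $P_{2\to1}$ is a valid (polynomial-time) probabilistic model for the joint distribution, its expected log-loss is lower bounded by the time-bounded MDL cost for the joint, which in turn controls $\rH_{\mathrm{poly}}(X\mid Y)+\rH_{\mathrm{poly}}(Y)$. More directly, I would argue as follows. The reverse factorization $P_{2\to1}$ gives a conditional model $P_1(\cdot\,;\,Y)$ for $X\mid Y$; running its sampler once is a non-uniform PPT inverter for $f$, so by one-wayness its success probability is negligible, and hence by Jensen (exactly as in the proof of Theorem~\ref{thm:soi_entropy}) we get $\E_Y[-\log P_1(X;Y)] \ge c'\log n$ for any constant $c'$ and $n$ large. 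Meanwhile the forward fit assumptions give $\E[-\log P_1(X)] + \E[-\log P_2(f(X)\mid X)] \le n + 2\varepsilon$.

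Next I would form the expected log-ratio of the two factorizations,
\begin{align}
\E\Big[\log \frac{P_1(X)P_2(f(X);X)}{P_2(f(X))P_1(X;f(X))}\Big]
&= \E[-\log P_2(f(X))] - \E[-\log P_1(X)] \notag\\
&\quad{}+ \E[-\log P_1(X;f(X))] - \E[-\log P_2(f(X);X)]. \notag
\end{align}
By uniformity of $Y=f(X)$ and the standard fact that expected log-loss is minimized by the true distribution, $\E[-\log P_2(f(X))] \ge \rH(Y) = n$; combined with $\E[-\log P_1(X)]\le n+\varepsilon$ this contributes at least $-\varepsilon$. The conditional terms contribute at least $c'\log n - \varepsilon$ by the inverter argument above and the forward-fit bound on $P_2(f(X)\mid X)$. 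So the expected log-ratio is at least $c'\log n - 2\varepsilon - O(1)$, which exceeds $c\log n - 2\varepsilon$ for $n$ large. Since the expectation of $\log\big(P_1(x)P_2(f(x);x)/(P_2(f(x))P_1(x;f(x)))\big)$ over $x\sim U_n$ is at least $c\log n - 2\varepsilon$, there must exist at least one $x$ for which this log-ratio is at least $c\log n - 2\varepsilon$, i.e. $P_1(x)P_2(f(x);x) \ge n^c 2^{-2\varepsilon} P_2(f(x))P_1(x;f(x))$, which is the claim.

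The main obstacle I anticipate is handling the $P_2(f(X))$ marginal cleanly: one needs that $P_2$ is a genuinely normalized distribution over $Y$ (given by the corollary's hypothesis) so that $\E[-\log P_2(f(X))]\ge \rH(Y)=n$ via Gibbs' inequality, and one must be careful that this inequality does not require $P_2$ to be efficiently computable — it holds for any distribution. A secondary subtlety is that the forward-fit assumption bounds $\E[-\log P_2(f(X)\mid X)]$ but the inverter argument needs a bound on the reverse conditional $\E[-\log P_1(X;f(X))]$; these are genuinely different objects, and the whole point is precisely that they cannot both be small, so no circularity arises — the asymmetry is exactly what drives the separation. Finally, one should double-check that the single-sample inverter built from $P_1(\cdot\,;\,y)$ is non-uniform PPT (it is, since $P\in\mathcal P$ is time-bounded by hypothesis), so that one-wayness applies and yields $\E_Y[\Pr[\text{sampler hits }f^{-1}(Y)]] \le n^{-c'}$, giving the $\omega(\log n)$ lower bound on the reverse conditional log-loss after Jensen.
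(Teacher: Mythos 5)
Your proposal is correct and follows essentially the same route as the paper's proof: invoke the one-wayness/Jensen argument from Theorem~\ref{thm:soi_entropy} to lower-bound the reverse conditional log-loss $\E[-\log P_1(X;f(X))]$ by $c\log n$, lower-bound $\E[-\log P_2(f(X))]$ by $n$ since $Y$ is uniform, combine with the two forward-fit assumptions to bound the expected log-ratio of the two factorizations below by $c\log n - 2\varepsilon$, and then conclude by averaging and exponentiating. Your additional sanity checks (Gibbs' inequality not requiring efficient computability, and the single-sample inverter being non-uniform PPT) are valid and consistent with the paper's argument.
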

\begin{proof}
From \autoref{thm:soi_entropy} which applies also for each $P$, we have
\begin{equation*}
     \E\left[-\log P_2(X ; Y)\right] > c \log n .
\end{equation*}
       
The minimim value of $\E\left[-\log P_2(f(X))\right]$ is $n$ since $f$ is a bijection. Assembling these components,
\begin{equation}
   \E\left[\log \frac{P_1(X)P_2(f(X);X)}{P_2(f(X)) P_1(X ; f(X))}\right] >c \log n - 2\varepsilon.
\end{equation}
Since the inequality holds in expectation, it also must hold for at least one value of $X$. Exponentiating provides the final result.
\end{proof}

\subsection{Problems with time-bounded sophistication}
\label{app:time_bounded_sophistication}

Epiplexity can be seen as a time-bounded and distributional generalization of sophistication. A natural question is whether we can directly define a time-bounded version of sophistication for individual strings.
We show below that a naive time-bounded generalization degenerates: it makes the ``model'' part essentially constant for \emph{every} string.

\paragraph{Preliminaries.}
Fix a reference universal (prefix-free or plain) Turing machine $U$.
For a program $p$ and auxiliary input $d$, we write $U(p,d)$ for the output of running $p$ on input $d$.
The length of a binary string $p$ is denoted $|p|$.
A program $p$ is \emph{total} if $U(p,d)$ halts for every input $d$ (i.e., $p$ computes a total function).

We write $K(x)$ for Kolmogorov complexity (plain or prefix; the choice only changes values by $O(1)$).
For a time bound $t(\cdot)$, the time-bounded Kolmogorov complexity is
\[
K^{t}(x) \;:=\; \min\bigl\{\, |q| \;:\; U(q) \text{ outputs } x \text{ within } t(|x|) \text{ steps}\,\bigr\}.
\]
(Any standard time-constructible $t$ suffices for the discussion.)

We adopt the definition of sophistication from \citet{koppel1987structure} and \citet{antunes2006sophistication}, phrased for finite strings as in later expositions.
For a significance level $c \ge 0$, the sophistication of $x$ is
\begin{definition}[Sophistication at significance $c$]
\label{def:sophistication}
\[
\mathrm{soph}_c(x)
\;:=\;
\min_{p}\Bigl\{\, |p| \;:\; p \text{ is total and } \exists d \text{ such that } U(p,d)=x \text{ and } |p|+|d| \le K(x)+c \Bigr\}.
\]
\end{definition}
Intuitively, $(p,d)$ is a near-optimal two-part description of $x$.
The requirement that $p$ be \emph{total} is crucial: it prevents taking $p$ to be a tiny universal interpreter and pushing all information into $d$ (since a universal interpreter is not total).
One of the most intuitive attempts at ``time-bounded sophistication'' is to simply replace $K(x)$ by the time-bounded complexity $K^{t}(x)$ in Definition~\ref{def:sophistication}.

\begin{definition}[Naive time-bounded sophistication]
\label{def:naive_time_soph}
Fix a time bound $t(\cdot)$ and significance level $c\ge 0$. Define
\[
\mathrm{soph}^{t}_c(x)
\;:=\;
\min_{p}\Bigl\{\, |p| \;:\; p \text{ is total and } \exists d \text{ such that } U(p,d)=x \text{ and } |p|+|d| \le K^{t}(x)+c \Bigr\}.
\]
\end{definition}

The definition above \emph{collapses}, essentially because time bounds make it easy to ``totalize'' a universal interpreter by adding a timeout.

\begin{lemma}[Naive time-bounded sophistication is $O(1)$]
\label{prop:time_soph_degenerate}
For every time bound $t(\cdot)$ and every $c\ge 0$, there exists a constant $C_t$ (depending only on $t$ and the choice of $U$) such that for every string $x$,
\[
\mathrm{soph}^{t}_c(x) \le C_t.
\]
In particular, $\mathrm{soph}^{t}_c(x)$ does not meaningfully distinguish structured strings from random-looking strings.
\end{lemma}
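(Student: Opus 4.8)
The plan is to recover exactly the degeneracy that the totality requirement on $p$ rules out in the untimed definition (Definition~\ref{def:sophistication}). There, one cannot take $p$ to be a bare universal interpreter and shove all of $x$ into the data $d$, because a bare universal interpreter does not halt on every input. The key observation is that a time bound dissolves this obstruction: a universal interpreter equipped with a \emph{computable} timeout is total, and any program witnessing $K^{t}(x)$ halts within a computable number of steps, so the optimal two-part code can legitimately push (essentially) all of its bits into $d$, leaving $|p|$ a constant --- for \emph{every} $x$.

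Concretely, I would first fix $U$ and the (time-constructible) bound $t$ and build a single ``timed universal interpreter'' $p_t$: on input $d$, parse $d$ as a program $q$ together with a short self-delimiting gadget that determines a step budget $N$ (either $N$ is read directly off $d$, or $p_t$ recomputes $N:=t(m)$ from a length parameter $m$ contained in $d$, which is fine since $t$ is time-constructible); simulate $U(q)$ for at most $N$ steps; output the result if $U(q)$ halts in time and the empty string otherwise. Then I would check the two facts that make this work: $p_t$ is \emph{total}, since on any input it runs only $O(N+|q|)$ steps, and $|p_t|=:C_t$ is a constant depending only on $U$ and $t$, not on $x$.

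Next, given any $x$, I would take a shortest $q^\star$ witnessing $K^{t}(x)$, so $|q^\star|=K^{t}(x)$ and $U(q^\star)$ outputs $x$ within $t(|x|)$ steps, and set $d$ to be $q^\star$ bundled with the short gadget that makes $p_t$ grant it at least $t(|x|)$ steps. Then $U(p_t,d)=x$ with $p_t$ total, and $|p_t|+|d| = C_t + K^{t}(x) + (\text{gadget overhead})$. Checking that this total length is at most $K^{t}(x)+c$ gives $\mathrm{soph}^{t}_c(x)\le |p_t| = C_t$, a bound uniform in $x$; so $\mathrm{soph}^{t}_c$ hands every string a constant-order ``structure part'' and cannot tell incompressible strings apart from highly structured ones --- exactly the pathology totality was introduced to block, now reinstated by the presence of a time bound.

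I expect the main obstacle to be the bookkeeping in these last two steps, where three demands must be met at once: $p_t$ must be total (forcing a timeout), $|p_t|$ must not depend on $x$ (forbidding hardcoding $q^\star$ or the value $t(|x|)$ inside $p_t$), and $|p_t|+|d|$ must stay within $K^{t}(x)+c$ (so the timeout information must be delivered for only $O(1)$ extra bits). The delicate part is choosing the convention --- whether $t$ is applied to the program length, the input length, or the output length --- and showing that, with that convention, totalizing the interpreter costs $O(1)$ program bits and no more than the slack $c$ in data bits; this is also where $C_t$ and the role of $c$ get pinned down. One subtlety to watch: a $t$-optimal program for $x$ may run for a long time relative to its own length, so in general the step budget has to be supplied through the data rather than recovered from $|q^\star|$ alone, and the length accounting must respect that.
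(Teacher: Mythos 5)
Your proposal is correct and takes essentially the same route as the paper's own (sketch) proof: a constant-size clocked universal interpreter serves as the total ``model'' part, a shortest $t$-bounded witness of $K^{t}(x)$ serves as the data part, and the feasibility margin is absorbed into $c$ once $c$ is at least the interpreter's length. You are in fact slightly more careful than the paper on the one real subtlety—how the fixed interpreter learns the step budget $t(|x|)$ without depending on $x$—which the paper's sketch glosses over and which, as you note, must be delivered through $d$ under an appropriate convention without blowing the $K^{t}(x)+c$ budget.
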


\begin{proof}[sketch]
Fix $t$.
Let $p_{\mathrm{tl}}$ be a constant-size program that, on input $d$, simulates $U(d)$ for at most $t(|x|)$ steps (or more generally for the same time budget used in the definition of $K^{t}(x)$), and:
(i) if the simulation halts within the budget, output the same result; otherwise
(ii) output a fixed default string (say $0$).
By construction, $p_{\mathrm{tl}}$ is \emph{total} (it always halts, because it enforces a timeout).

Now let $d^\star$ be a shortest program witnessing $K^{t}(x)$, i.e., $|d^\star| = K^{t}(x)$ and $U(d^\star)$ outputs $x$ within the allowed time.
Then $U(p_{\mathrm{tl}}, d^\star)=x$.
Moreover,
\[
|p_{\mathrm{tl}}| + |d^\star|
=
|p_{\mathrm{tl}}| + K^{t}(x)
\le
K^{t}(x) + c
\quad\text{for all } c \ge |p_{\mathrm{tl}}|.
\]
Thus $p_{\mathrm{tl}}$ is feasible in Definition~\ref{def:naive_time_soph}, giving
$\mathrm{soph}^{t}_c(x) \le |p_{\mathrm{tl}}| = C_t$ for all $x$.
\end{proof}

In the original (unbounded-time) Definition~\ref{def:sophistication}, totality prevents a universal interpreter from being used as the ``model'' part, because such an interpreter cannot halt on inputs that encode non-halting computations.
However, once we commit to a time bound in the \emph{optimality criterion} (i.e., we compare against $K^{t}(x)$), the data part $d$ can be chosen to be a short program that is \emph{guaranteed to halt quickly}.
A constant-size \emph{clocked interpreter} $p_{\mathrm{tl}}$ is then total and suffices for every $x$, pushing all of the description length into $d$.
This is precisely the sense in which the naive time-bounded generalization becomes degenerate.

\section{Measuring Epiplexity}\label{app:measure}
\subsection{Further details on estimating epiplexity} \label{app:detail-procedure}
Here we provide further details on measuring epiplexity.
\paragraph{Evaluating code lengths and time bounds.}
As described in \Cref{sec:measuring}, evaluating the code length for the model boils down to tracking the training losses (prequential) or teacher-student KL (requential) at each step $i:$
\begin{align}
    |\mathrm{P}_{\mathrm{preq}}| &\,\approx \sum_{i=0}^{M-1} \qty(\log 1/P_i(Z_i) - \log 1/P_M(Z_i)), \\
    |\mathrm{P}_{\mathrm{req}}| &\,\approx \sum_{i=0}^{M-1} \mathrm{KL}(P^{\mathrm{t}}_i\|P^{\mathrm{s}}_i).
\end{align}

For prequential coding, we need to compute the loss of the final model summed over the entire training dataset, $\sum_{i=0}^{M-1} \log 1/P_M(Z_i)$, which is time-consuming if done exactly. Since all of our experiments are in the one-epoch training regime without data repeat and training data $Z_i$ are drawn i.i.d. (except for the ADO experiment \Cref{sec:ado}), we make the assumption that the generalization gap is small and estimate $\sum_{i=0}^{M-1} \log 1/P_M(Z_i)$ as $M \log 1/P_{M}(Z_{M}),$ where the latter is a rescaled loss for $P_M$ on unseen data $Z_M.$ The i.i.d. assumption breaks down for the ADO experiment \Cref{sec:ado}, where we instead compute $\sum_{i=0}^{M-1} \log 1/P_M(Z_i)$ exactly.

For requential coding, we need to evaluate the teacher-student KL, $\mathrm{KL}(P^{\mathrm{t}}\|P^{\mathrm{s}}),$ at each training step. The KL divergence over sequences decomposes as a sum over token positions and is estimated as:
\begin{align}
\mathrm{KL}(P^{\mathrm{t}}\|P^{\mathrm{s}}) &= \sum_{j=1}^{L} \E_{Z_{<j} \sim P^{\mathrm{t}}} \left[ \sum_{Z_j \in \mathcal{V}} P^{\mathrm{t}}(Z_j | Z_{<j}) \log \frac{P^{\mathrm{t}}(Z_j | Z_{<j})}{P^{\mathrm{s}}(Z_j | Z_{<j})} \right] \\
&\approx \sum_{j=1}^{L} \sum_{Z'_j \in \mathcal{V}} P^{\mathrm{t}}(Z'_j | Z_{<j}) \log \frac{P^{\mathrm{t}}(Z'_j | Z_{<j})}{P^{\mathrm{s}}(Z'_j | Z_{<j})},
\end{align}
where $Z \sim P^{\mathrm{t}}$ is a sample from the teacher, $L$ is the sequence length, and $\mathcal{V}$ is the vocabulary. We evaluate this estimator using the sample $Z$ generated by the teacher to train the student, along with their next-token-prediction logits $\{P^{\mathrm{t}}(Z_j|Z_{<j}), P^{\mathrm{s}}(Z_j|Z_{<j})\}_j$ recorded on the generated sequence

Finally, to estimate the expected entropy code length for the test data $\E[\log 1/P(X)]$ under the trained model $P,$ we use an appropriately scaled empirical entropy code length of a heldout test set $\hat{X}.$ Let $K$ and $\hat{K}$ denote the number of examples in each dataset. Then:
\begin{align}
 \E[\log 1/P(X)] &= \E\qty[\log \frac{1}{\prod_i P(X_i)}] \\
 &= \sum_i \E[\log 1/P(X_i)] \\
 &= K \E[\log 1/P(X_1)] \\
&\approx \frac{K}{\hat{K}}\sum_{i=1}^{\hat{K}} \log 1/P(\hat{X}_i)    
\end{align}
where we assumed the datasets $X$ and $\hat{X}$ consist of i.i.d. draws from the same distribution. This estimator is simply a scaled version of the standard empirical test loss, and it converges to the true expectation as $\hat{K}$ becomes large. To speedup evaluation, we typically choose $\hat{K} \ll K,$ but this choice does not affect our time-bound calculation: for both prequential and requential coding, the total decoding time of the two-part code for the test dataset $X$ is estimated as $6ND + 2N\D$ where $N$ is the number of parameters of the (student) model, $D$ is the number of (student) training tokens, and $\D$ is the number of tokens in the test dataset. When evaluating conditional epiplexity $\rS_T(Y|X),$ decoding time takes into account both the input ($X$) and label ($Y$) tokens, but code length only needs to be computed for the label tokens (tokens contributing to the training loss).

\paragraph{Finding Hyperparameters for Compute-Optimal Two-Part Code.}
To identify models that lead to compute-optimal two-part code, we need to optimize several key hyperparameters, including model size ($N$), training tokens ($D$), width-depth ratio, learning rate, etc. Through our early experiments, we found two interventions that reduce the model code length under requential coding: (1) distilling from an exponential moving average (EMA) of teacher checkpoints rather than instantaneous checkpoints, which reduces noise in the distillation signal, and (2) imposing a maximum KL threshold between teacher and student—when exceeded, the teacher is frozen while the student catches up, preventing divergence that would otherwise inflate the code length. The EMA time scale and the maximum KL threshold are additional hyperparameters for requential coding.

In each experiment, we first identify a good learning rate for a small model size and use the Maximum Update Parameterization \citep{yang2022tensor} and CompleteP \citep{dey2025don} to transfer the found learning rate to larger models. We also optimize the EMA time scale and maximum KL threshold for the small model when using requential coding. We then train models of various depths and widths to simultaneously sweep over model size and width-depth ratios, for a total number of training tokens chosen to be larger than the test dataset size $\D,$ motivated by the observation that the optimal training tokens typically grows with the model size but do not exceed $\D$ (see \Cref{app:epi-scaling}).  To avoid separately training a model for intermediate training token budgets, we record an EMA of the iterates (for requential coding, this is done for the student) under a constant learning rate schedule, rather than using a decaying learning rate schedule, following \citet{hagele2024scaling}. Each training run traces a curve in the $|\mathrm{P}| + \E[1 / \log P(X)]$ vs $T$ plane as more training tokens are seen. The Pareto frontier of all such curves yields the optimal hyperparameters ($N, D,$ width, depth, etc.) as a function of the compute budget.

\begin{figure*}[t]
\centering
\includegraphics[width=\linewidth]{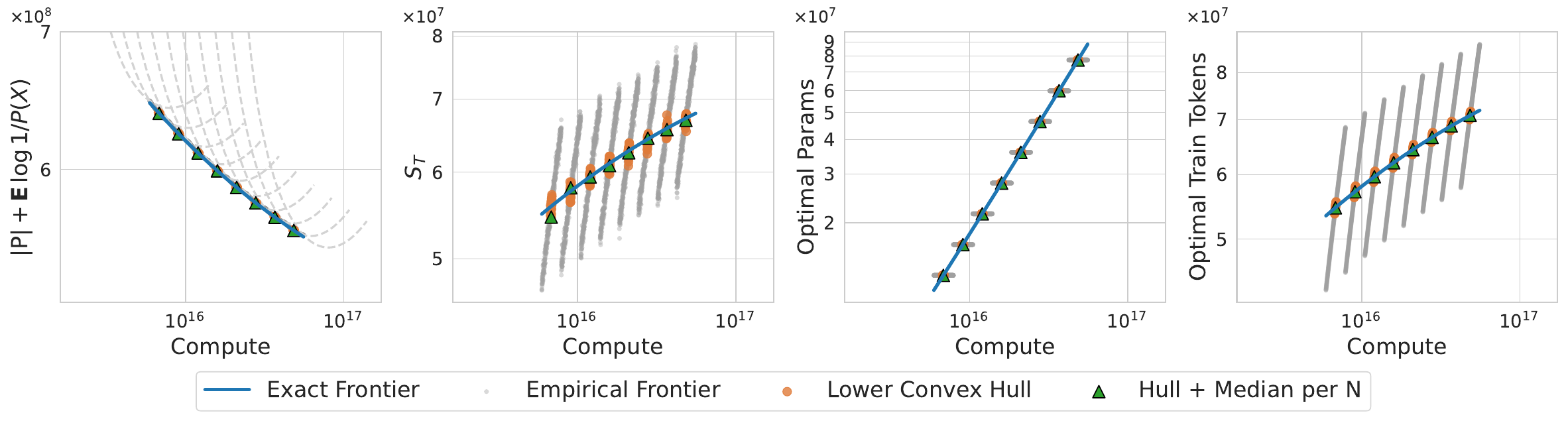}
\caption{
\small
\textbf{Estimating the Pareto frontier from a finite number of training runs.} While the exact Pareto frontier is smooth and the optimal model size and training tokens increase smoothly with compute, the empirical frontier is jagged and includes many spurious points due to selecting over only a finite number of hyperparameter combinations. Replacing the empirical Pareto frontier with the lower convex hull and retaining only the median point (ordered by compute) belong to a single training run with a fixed model size results in a much more accurate estimate of the true Pareto frontier. The example training curves are generated using the scaling laws in \citet{hoffmann2022training} and prequential coding. The exact frontier is found via root finding for \Cref{eq:scaling-foc}.
}
\label{fig:pareto-error}
\end{figure*}

\paragraph{Estimating the Pareto Frontier.} Due to computational constraints, we can only sweep over a limited set of hyperparameter combinations, which makes the empirical Pareto frontier noisy and jagged; we therefore use the lower convex hull of the resulting curves as a smoother approximation to the true Pareto frontier, a strategy often used in the compute-optimal scaling law literature \citep{henighan2020scaling,mcleish2025gemstones} to overcome similar issues. After applying this strategy, we still often observe that multiple checkpoints from a single training run appear on the Pareto frontier. This is an artifact of finite hyperparameter sweeps: we expect both the optimal training tokens $D$ and model size $N$ to vary smoothly with compute budget, precluding multiple values of $D$ at the same $N$ from lying on the true Pareto frontier. These spurious points cause noisy, oscillatory trends in the estimated epiplexity, as shown in \Cref{fig:pareto-error}. As a simple workaround, we retain only the median point (ordered by compute) per training run (which has a fixed model size) on the lower convex hull.

\paragraph{Sources of errors.} In addition to the artifacts produced by finite $(N,D)$ combinations, our estimated epiplexity may differ from the true value for a few reasons: 1) potential systematic errors introduced by using the lower convex hull and taking the median point, 2) using a fixed architecture (e.g., the transformer) and learning algorithm (e.g., requential training with Adam) rather than considering all possible programs, and 3) suboptimality of other hyperparameters, such as the learning rate, Adam $(\beta_1,\beta_2),$ etc. In most cases, we believe these sources of errors only contribute sub-leading corrections to the estimated epiplexity that do not impact the result qualitatively. For example, they are unlikely to alter the ordering between datasets if the estimated epiplexity gap is already significant or there is a clear trend along some axis of variation (e.g., number of hidden bits in the induction experiment in \Cref{sec:induction})

\subsection{Prequential Coding Approximates Requential Coding with a Static Teacher} \label{app:preq-as-approx}

In this section, we show that the prequential coding estimate in \Cref{eq:preq} can be viewed as an approximation to requential coding with a static teacher, providing an alternative justification for its use beyond the symmetry of information argument.

Consider requential coding with a fixed teacher across all time steps, i.e., $P^{\mathrm{t}}_i = P^{\mathrm{t}}$ for all $i \in \{0, \ldots, M-1\}$. The requential code length becomes
\begin{equation}
    |\mathrm{P}_{\mathrm{req}}| \approx \sum_{i=0}^{M-1} \mathrm{KL}(P^{\mathrm{t}} \| P^{\mathrm{s}}_i) = \sum_{i=0}^{M-1} \E_{P^{\mathrm{t}}}\left[\log \frac{1}{P^{\mathrm{s}}_i(X)} - \log \frac{1}{P^{\mathrm{t}}(X)}\right].
\end{equation}

Now suppose the static teacher closely matches the true data distribution, i.e., $P^{\mathrm{t}} \approx P_{X_1}$(we use $P_{X_1}$ in order to refer to the distribution of a single example, not the dataset). Under this assumption, we can make three simplifying approximations:
\begin{enumerate}
    \item The expectation under the teacher can be replaced by the expectation under the data distribution: $\E_{P^{\mathrm{t}}}[\cdot] \approx \E_{P_{X_1}}[\cdot]$.
    \item Training the student on synthetic samples from $P^{\mathrm{t}}$ yields similar dynamics to training on real data samples from $P_{X_1}$.
    \item If the student converges to the teacher, then $P^{\mathrm{s}}_M \approx P^{\mathrm{t}}$, allowing us to estimate the teacher's loss $\E_{P_{X_1}}[\log 1/P^{\mathrm{t}}(X)]$ by the final student's loss $\E_{P_{X_1}}[\log 1/P^{\mathrm{s}}_M(X)]$.
\end{enumerate}

Applying these approximations, the requential code length with a static teacher becomes
\begin{equation}
    |\mathrm{P}_{\mathrm{req}}| \,\approx \sum_{i=0}^{M-1} \E_{P_{X_1}}\left[\log \frac{1}{P^{\mathrm{s}}_i(X)} - \log \frac{1}{P^{\mathrm{s}}_M(X)}\right],
\end{equation}
which, when estimated empirically on real training data $Z_0, \ldots, Z_{M-1} \sim P_{X_1}$, recovers precisely the prequential estimate from \Cref{eq:preq}:
\begin{equation}
    |\mathrm{P}_{\mathrm{preq}}| \,\approx \sum_{i=0}^{M-1} \left(\log \frac{1}{P_i(Z_i)} - \log \frac{1}{P_M(Z_i)}\right).
\end{equation}
This connection also lends some justification to treating $6ND$ as the decoding time for the model in prequential coding, as it relates to a requential scheme that achieves this runtime.
Since a static teacher is generally suboptimal compared to the time-varying teachers used in full requential coding, which can remain close to the student throughout training while still guiding it toward the target distribution, we expect the prequential estimate to be an overestimate of the requential code length. This is consistent with the empirical observations in \Cref{fig:req_vs_preq}, where the prequential estimate is typically several times larger than the requential estimate.

\subsection{A Solvable Model Using Scaling Laws} \label{app:scaling-law-model}

In this section, we present a simplified analytical model from combining neural scaling laws with prequential coding to gain insight into how epiplexity and compute-optimal hyperparameters typically vary with compute and dataset size, along with their asymptotic behaviors.

We adopt a standard scaling law for the loss as a function of model size $N$ and training tokens $D$:
\begin{align}
    \L(N,D) = E + \qty(\frac{N_0}{N})^{\alpha} + \qty(\frac{D_0}{D})^{\beta},
\end{align}
where $E$ is the irreducible loss, $N_0$ and $D_0$ are scaling constants, and $0 < \alpha, \beta < 1$ are the scaling exponents. The total compute for training and evaluating on $\mathcal{D}$ test tokens is $T = 6ND + 2N\mathcal{D} = 2N(3D + \mathcal{D})$.

To simplify the analysis, we work in natural units: $n = N/N_0$, $d = D/D_0$, $\delta = \mathcal{D}/D_0$, and $t = T/(2N_0 D_0)$. The loss becomes $\L(n,d) = E + n^{-\alpha} + d^{-\beta}$, and the compute constraint simplifies to $t = n(3d + \delta)$.

\paragraph{Two-part code length.} The two-part code $\mathrm{P}_{\mathrm{tot}}$ consists of the model description and the data encoded using the model. The data code length on the test set is $\delta D_0 \cdot \L(n,d)$.

For the model description length, we use the prequential estimate from \Cref{eq:preq}, which corresponds to the area under the loss curve above the final loss\footnote{We start the sum and integral at $1$ to avoid the singularity at $0,$ which is an artifact of the scaling law as it typically only holds for large $D.$}:
\begin{align}
    |\mathrm{P}_{\mathrm{preq}}| &\,= \sum_{i=1}^{D} \qty[ \qty(\frac{i}{D_0})^{-\beta} - \qty(\frac{D}{D_0})^{-\beta} ] \nonumber\\
    &= \int_1^D \qty[ \qty(\frac{u}{D_0})^{-\beta} - \qty(\frac{D}{D_0})^{-\beta} ] du + O(1),
\end{align}
where the $O(1)$ term remains bounded as $D \to \infty$. Evaluating the integral and dropping $O(1)$ terms, we obtain the expression valid for large $D$:
\begin{align}
    |\mathrm{P}_{\mathrm{preq}}| \,= \frac{\beta}{1-\beta} D_0 \, d^{1-\beta}.
\end{align}

\paragraph{Optimality condition.} Dropping the constant term $\delta D_0 E$ from the two-part code length and dividing by $D_0$, we seek to minimize
\begin{align}
    f(n, d) = \frac{\beta}{1-\beta} d^{1-\beta} + \delta (n^{-\alpha} + d^{-\beta})
\end{align}
subject to $t = n(3d + \delta)$.

\paragraph{Solution.} Eliminating $n$ using the constraint $n = t/(3d + \delta)$, we obtain a one-dimensional optimization problem in $d$. Setting the derivative to zero and simplifying, the optimal $d^\star(t)$ satisfies
\begin{align} \label{eq:scaling-foc}
    \beta d^{-\beta-1} (\delta - d) = 3\alpha \delta \, t^{-\alpha} (3d + \delta)^{\alpha-1},
\end{align}
with the corresponding optimal model size given by
\begin{align}
    n^\star(t) = \frac{t}{3d^\star(t) + \delta}.
\end{align}

While \Cref{eq:scaling-foc} does not admit a simple closed-form solution in general, we can extract the asymptotic behavior in the large- and small-compute regimes.

\paragraph{Large-compute regime ($t \to \infty$).} As $t$ grows, the right-hand side of \Cref{eq:scaling-foc} scales as $t^{-\alpha} \to 0$. For the equation to remain balanced, we require $\delta - d \to 0$, i.e., $d^\star(t) \to \delta$. The leading-order scaling is therefore:
\begin{align}
    d^\star(t) = \delta - \Theta(t^{-\alpha}), \qquad n^\star(t) \sim \frac{t}{4\delta}.
\end{align}
In this regime, the optimal training set size saturates at the test set size $\delta$, while the model size grows linearly with compute. Correspondingly, the epiplexity saturates to
\begin{align}
    \rS_\infty(X) = \frac{\beta}{1-\beta} D_0 \, \delta^{1-\beta} = \frac{\beta}{1-\beta} D_0^{\beta} \mathcal{D}^{1-\beta}.
\end{align}
For the entropy, we have $(n^\star)^{-\alpha} \to 0$ while $(d^\star)^{-\beta} \to \delta^{-\beta}$, so
\begin{align}
    \rH_\infty(X) = \mathcal{D} \qty( E + \delta^{-\beta} ) = \mathcal{D} E + D_0^{\beta} \mathcal{D}^{1-\beta}.
\end{align}
The entropy approaches the irreducible entropy $\mathcal{D} E$ plus a residual term from finite training data that scales sublinearly with the test set size, meaning that the achieved per-token loss is $E + O(\D^{-\beta}).$

\paragraph{Small-compute regime ($d^\star \ll \delta$).} When compute is limited such that $d \ll \delta$, we approximate $\delta - d \approx \delta$ and $(3d + \delta)^{\alpha-1} \approx \delta^{\alpha-1}$. Substituting into \Cref{eq:scaling-foc} and solving for $d$ gives
\begin{align}
    d^\star(t) = \qty( \frac{\beta}{3\alpha} )^{\frac{1}{\beta+1}} t^{\frac{\alpha}{\beta+1}} \delta^{\frac{1-\alpha}{\beta+1}}.
\end{align}
Since $3d^\star \ll \delta$ in this regime, the optimal model size is
\begin{align}
    n^\star(t) \approx \frac{t}{\delta}.
\end{align}
Here, the model size is constrained by the need to evaluate on $\delta$ tokens, and the optimal training set size grows sublinearly with compute as $d^\star \propto t^{\alpha/(\beta+1)}$. The epiplexity in this regime scales as
\begin{align}
    \rS_T(X) = \frac{\beta}{1-\beta} D_0 \, (d^\star)^{1-\beta} \propto T^{\frac{\alpha(1-\beta)}{\beta+1}},
\end{align}
growing sublinearly with compute.

For the entropy, both the model and data contributions are significant. The model contribution scales as
\begin{align}
    \mathcal{D} (n^\star)^{-\alpha} = \mathcal{D} \qty(\frac{\delta}{t})^{\alpha} \propto T^{-\alpha},
\end{align}
while the data contribution scales as
\begin{align}
    \mathcal{D} (d^\star)^{-\beta} \propto T^{-\frac{\alpha\beta}{\beta+1}}.
\end{align}
Since $\alpha\beta/(\beta+1) < \alpha$, the data term decays more slowly and dominates for larger $t$ within this regime. The entropy above the irreducible level is thus
\begin{align}
    \rH_T(X) - \mathcal{D} E \propto T^{-\frac{\alpha\beta}{\beta+1}},
\end{align}
decaying as a power law with compute.

For typical scaling exponents (e.g., $\alpha \approx 0.34$ and $\beta \approx 0.28$ from \citet{hoffmann2022training}), the epiplexity grows as $\rS_T \propto T^{0.19}$ and the entropy decays as $\rH_T - \mathcal{D} E \propto T^{-0.07}$ in the small-compute regime.

\subsection{How Epiplexity and Time-Bounded Entropy Scale with Compute and Dataset Size} \label{app:epi-scaling}
In this section, we analyze how epiplexity and time-bounded entropy scale with compute budget and dataset size under natural assumptions about neural network training, without relying on specific functional forms for scaling laws. The goal is to provide some general intuitions for how these quantities are expected to vary as a function of the compute budget and dataset size.
\Cref{app:scaling-law-model} explicitly demonstrates using scaling laws and prequential coding that (1) epiplexity grows with both compute and dataset size, and (2) for a fixed $X$, epiplexity saturates to a finite value in the limit of infinite compute—specifically, to the amount of information acquired by an arbitrarily large model trained on a training set of the same size as the test set $X$, while time-bounded entropy decays to the loss achievable by an infinitely large model on this training set. Here, we show that similar or weaker statements hold more generally, requiring only a few natural assumptions about the effect of increasing model size $N$ and training data size $D$. These assumptions capture typically observed regularities in deep learning, such as the smoothly diminishing returns in scaling only model size while holding training set size fixed, but they may fail to capture rare exceptions like grokking and sudden improvement in performance above certain compute thresholds (as in \Cref{sec:emergent}).

Denote the code length for an $N$-parameter model trained on $D$ tokens as $|\mathrm{P}|(N,D),$ the per-token loss it achieves as $\L(N,D)$, the compute-optimal model size as $N^\star(T)$ and training data size as $D^\star(T),$ so that $\rS_T(X) = |\mathrm{P}|\qty(N^\star(T), D^\star(T))$ and $\rH_T(X) = \D\, \L\qty(N^\star(T), D^\star(T)).$ We establish the following results as we vary $T$ and $\D = |X|$, fixing the distribution of $X_i$ (only the dataset size changes):
\begin{itemize}
    \item \textbf{Monotonicity of $N^\star(T)$, $D^\star(T)$, $\rS_T(X)$, and $\rH_T(X)$} (\Cref{app:monotonicity-in-T}): Under natural assumptions on the effect of increasing $N$ and $D$, the compute-optimal model size $N^\star(T)$ and training data size $D^\star(T)$ are both increasing in the compute budget $T$. As a result, epiplexity typically grows with $T$ while time-bounded entropy typically decreases with $T$.

    \item \textbf{Monotonicity of $\rS_\infty(X)$ and $\rH_\infty(X)$ in $\D$} (\Cref{app:monotonicity-in-D}): In the infinite-compute limit, epiplexity $\rS_\infty(X)$ is nondecreasing in $\D = |X|$, while the per-token time-bounded entropy $h_\infty(X_\D) := \rH_\infty(X_\D) / \D$ is nonincreasing in $\D$.

    \item \textbf{$D^\star(T)$ generally approaches $\D$ in prequential coding} (\Cref{app:prequential-saturation}): For prequential coding, the compute-optimal training set size satisfies $D^\star(T) \to \D$ as $T \to \infty$, where $\D$ is the test set size, without assuming the scaling law form. Combined with monotonicity of $D^\star(T)$, this implies $D^\star(T) \uparrow \D$ from below.
\end{itemize}

\subsubsection{Monotonicity of $N^*(T)$, $D^*(T)$, $\rS_T(X)$, and $\rH_T(X)$}
\label{app:monotonicity-in-T}

The following theorem shows that the compute-optimal model size and training data size are both monotonically increasing in the compute budget under natural assumptions.

\begin{theorem}[Monotone growth of compute-optimal $N$ and $D$]
\label{thm:ND-monotone-logconvex}
Define the effective data $\widetilde{D} = 6D + 2\D$, so that the compute constraint becomes $T = N\widetilde{D}$. Let $J(N, \widetilde{D})$ denote the two-part code length as a function of model size $N$ and effective data $\widetilde{D}$, and assume $J$ is twice continuously differentiable. Consider the constrained MDL problem
\begin{align}
    \min_{N > 0,\, \widetilde{D} \ge 2\D}\; J(N, \widetilde{D})
    \qquad \text{s.t.} \qquad
    N\widetilde{D} = T.
\end{align}
Assume that for each $T$ in the regime of interest there is a unique interior optimizer $(N^\star(T), \widetilde{D}^\star(T))$ with $\widetilde{D}^\star(T) > 2\D$ and $N^\star(T)\widetilde{D}^\star(T) = T$.

Work in logarithmic coordinates $\mu := \log N$ and $\nu := \log \widetilde{D}$, and by slight abuse of notation write $J(\mu, \nu) = J(e^\mu, e^\nu)$. Assume that for all such $T$, the following conditions hold at the corresponding optimum $(\mu^\star(T), \nu^\star(T))$:
\begin{enumerate}
    \item \textbf{Complementarity (larger models are more sample-efficient):}
    \begin{align}
        \frac{\partial^2 J}{\partial \mu \partial \nu} \le 0.
    \end{align}

    \item \textbf{Diminishing returns in model size (in log coordinates):}
    \begin{align}
        \frac{\partial^2 J}{\partial \mu^2} > 0.
    \end{align}

    \item \textbf{Diminishing returns in effective data (in log coordinates):}
    \begin{align}
        \frac{\partial^2 J}{\partial \nu^2} > 0.
    \end{align}
\end{enumerate}

Then both compute-optimal choices are strictly increasing functions of $T$:
\begin{align}
    T_2 > T_1
    \quad \Longrightarrow \quad
    N^\star(T_2) > N^\star(T_1)
    \quad \text{and} \quad
    \widetilde{D}^\star(T_2) > \widetilde{D}^\star(T_1).
\end{align}
\end{theorem}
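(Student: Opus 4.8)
The plan is to collapse the constrained problem onto the constraint line, turn it into a one-variable stationarity condition, and then read off the monotonicity from the implicit function theorem. First I would pass to the logarithmic coordinates of the statement and set $\tau := \log T$, so that the compute constraint $N\widetilde D = T$ becomes the affine constraint $\mu + \nu = \tau$. Eliminating $\nu = \tau - \mu$ and writing $g(\mu;\tau) := J(\mu,\tau-\mu)$, the assumed unique interior optimizer is a stationary point of $g(\cdot\,;\tau)$, i.e.
\[
F(\mu,\tau)\;:=\;\partial_\mu J(\mu,\tau-\mu)-\partial_\nu J(\mu,\tau-\mu)\;=\;0,
\]
which is just the ``equal marginal'' statement that one log-unit of $N$ buys the same code-length reduction as one log-unit of $\widetilde D$ at the optimum. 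A short chain-rule computation gives $\partial_\mu^2 g = \partial_\mu F = J_{\mu\mu}-2J_{\mu\nu}+J_{\nu\nu}$, which is strictly positive by Conditions (1)--(3); this both confirms that the stationary point is a strict (hence, by uniqueness, global) minimizer and shows the implicit function theorem applies nondegenerately to $F$.

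Next I would differentiate $F$ in $\tau$. Since $\partial_\tau(\tau-\mu)=1$, we get $\partial_\tau F = J_{\mu\nu}-J_{\nu\nu}$, which is strictly negative by Conditions (1) and (3). The implicit function theorem then yields a $C^1$ branch $\mu^\star(\tau)$ with
\[
\frac{d\mu^\star}{d\tau}\;=\;-\frac{\partial_\tau F}{\partial_\mu F}\;=\;\frac{J_{\nu\nu}-J_{\mu\nu}}{\,J_{\mu\mu}-2J_{\mu\nu}+J_{\nu\nu}\,}.
\]
The numerator is strictly positive by (1) and (3), and the denominator exceeds it by $J_{\mu\mu}-J_{\mu\nu}$, which is strictly positive by (1) and (2). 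Hence $d\mu^\star/d\tau \in (0,1)$ throughout the regime of interest, so $\mu^\star$ is strictly increasing, and $\nu^\star = \tau - \mu^\star$ satisfies $d\nu^\star/d\tau = 1 - d\mu^\star/d\tau > 0$, so $\nu^\star$ is strictly increasing as well. Exponentiating gives that $N^\star(T)$ and $\widetilde D^\star(T)$, and therefore $D^\star(T) = (\widetilde D^\star(T)-2\D)/6$, are strictly increasing in $T$. The conclusions for $\rS_T(X) = |\mathrm{P}|(N^\star,D^\star)$ and $\rH_T(X) = \D\,\L(N^\star,D^\star)$ then follow from the assumed qualitative monotonicity of $|\mathrm{P}|$ and $\L$ in $(N,D)$ (code length increasing, loss decreasing), with the caveat that the two move in opposite directions so only ``typically'' holds for each separately.

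The one genuinely delicate point is the bound $d\mu^\star/d\tau < 1$: this is precisely what upgrades ``$N^\star$ grows'' to ``$N^\star$ \emph{and} $\widetilde D^\star$ both grow'', and it is exactly where Condition (2) is needed beyond the complementarity Condition (1)—without diminishing returns in model size, all the extra compute could in principle be absorbed by $N$ alone. A secondary, routine nuisance is that the implicit function theorem produces only a local branch through each $(\mu^\star(\tau),\tau)$; these are patched into the single globally defined $\mu^\star(\tau)$ using the assumed uniqueness of the interior optimizer at every $T$, after which the everywhere-strictly-positive derivative delivers global strict monotonicity rather than just a local statement.
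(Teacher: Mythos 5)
Your proposal is correct and follows essentially the same route as the paper's proof: pass to log coordinates, reduce to the one-variable stationarity condition $J_\mu = J_\nu$ along the constraint line, and apply the implicit function theorem to obtain $d\mu^\star/d\tau = (J_{\nu\nu}-J_{\mu\nu})/(J_{\mu\mu}+J_{\nu\nu}-2J_{\mu\nu}) \in (0,1)$, with the three curvature conditions supplying the sign constraints. The observations you add (strict convexity of the restricted objective confirming the minimizer, and the uniqueness assumption patching local IFT branches into a global monotone curve) are sound refinements of the same argument.
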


\begin{proof}
Work in logarithmic coordinates
\begin{align}
    \mu := \log N, \qquad \nu := \log \widetilde{D}, \qquad \tau := \log T.
\end{align}
The compute constraint $N\widetilde{D} = T$ becomes the affine constraint
\begin{align}
    \mu + \nu = \tau
    \qquad \Longleftrightarrow \qquad
    \nu = \tau - \mu.
    \label{eq:log-constraint-mu-nu}
\end{align}
By slight abuse of notation, write $J(\mu, \nu) := J(e^\mu, e^\nu)$ and denote its partial derivatives by $J_\mu, J_\nu, J_{\mu\mu}, J_{\nu\nu}, J_{\mu\nu}$, etc., all taken with respect to the log-coordinates $(\mu, \nu)$.

Define the \emph{restricted objective} along the compute frontier by
\begin{align}
    f(\mu, \tau) := J(\mu, \tau - \mu).
\end{align}
For each $\tau$ in the regime of interest, let $\mu^\star(\tau)$ denote the unique interior minimizer of $f(\cdot, \tau)$, and set $\nu^\star(\tau) := \tau - \mu^\star(\tau)$.

Holding $\tau$ fixed and differentiating $f$ with respect to $\mu$ gives
\begin{align}
    f_\mu(\mu, \tau)
    &= \frac{\partial}{\partial\mu} J(\mu, \tau - \mu) \nonumber\\
    &= J_\mu(\mu, \nu) + J_\nu(\mu, \nu) \frac{\partial}{\partial\mu}(\tau - \mu) \nonumber\\
    &= J_\mu(\mu, \nu) - J_\nu(\mu, \nu),
    \label{eq:fmu}
\end{align}
where $\nu = \tau - \mu$. The optimality condition for $\mu^\star(\tau)$ is therefore
\begin{align}
    f_\mu(\mu^\star(\tau), \tau) = 0
    \qquad \Longleftrightarrow \qquad
    J_\mu(\mu^\star(\tau), \nu^\star(\tau)) = J_\nu(\mu^\star(\tau), \nu^\star(\tau)).
    \label{eq:restricted-FOC}
\end{align}

Differentiating the identity $f_\mu(\mu^\star(\tau), \tau) = 0$ with respect to $\tau$ yields
\begin{align}
    0 = \frac{d}{d\tau} f_\mu(\mu^\star(\tau), \tau)
    = f_{\mu\mu}(\mu^\star(\tau), \tau) \, \frac{d\mu^\star}{d\tau}
      + f_{\mu\tau}(\mu^\star(\tau), \tau).
    \label{eq:IFT-start}
\end{align}
Assuming $f_{\mu\mu}(\mu^\star(\tau), \tau) \neq 0$ (verified below), we obtain
\begin{align}
    \frac{d\mu^\star}{d\tau}
    = -\frac{f_{\mu\tau}}{f_{\mu\mu}}
    \quad \text{evaluated at } (\mu, \tau) = (\mu^\star(\tau), \tau).
    \label{eq:IFT-compact}
\end{align}

We now express $f_{\mu\tau}$ and $f_{\mu\mu}$ in terms of second partial derivatives of $J$. From \eqref{eq:fmu} and the chain rule, using $\partial_\tau(\tau - \mu) = 1$,
\begin{align}
    f_{\mu\tau}(\mu, \tau)
    &= \frac{\partial}{\partial\tau} \qty( J_\mu(\mu, \nu) - J_\nu(\mu, \nu) ) \nonumber\\
    &= J_{\mu\nu}(\mu, \nu) \frac{\partial\nu}{\partial\tau}
       - J_{\nu\nu}(\mu, \nu) \frac{\partial\nu}{\partial\tau} \nonumber\\
    &= J_{\mu\nu}(\mu, \nu) - J_{\nu\nu}(\mu, \nu),
    \label{eq:fmutau}
\end{align}
with $\nu = \tau - \mu$. Similarly, differentiating \eqref{eq:fmu} with respect to $\mu$ while holding $\tau$ fixed, and using $\partial_\mu(\tau - \mu) = -1$ together with symmetry $J_{\nu\mu} = J_{\mu\nu}$, yields
\begin{align}
    f_{\mu\mu}(\mu, \tau)
    &= \frac{\partial}{\partial\mu} \qty( J_\mu(\mu, \nu) - J_\nu(\mu, \nu) ) \nonumber\\
    &= \qty( J_{\mu\mu}(\mu, \nu) + J_{\mu\nu}(\mu, \nu) \frac{\partial\nu}{\partial\mu} )
       - \qty( J_{\nu\mu}(\mu, \nu) + J_{\nu\nu}(\mu, \nu) \frac{\partial\nu}{\partial\mu} ) \nonumber\\
    &= (J_{\mu\mu} - J_{\mu\nu}) - (J_{\mu\nu} - J_{\nu\nu}) \nonumber\\
    &= J_{\mu\mu}(\mu, \nu) + J_{\nu\nu}(\mu, \nu) - 2J_{\mu\nu}(\mu, \nu).
    \label{eq:fmumu}
\end{align}
Substituting \eqref{eq:fmutau}--\eqref{eq:fmumu} into \eqref{eq:IFT-compact} gives
\begin{align}
    \frac{d\mu^\star}{d\tau}
    = -\frac{J_{\mu\nu} - J_{\nu\nu}}{J_{\mu\mu} + J_{\nu\nu} - 2J_{\mu\nu}}
    = \frac{J_{\nu\nu} - J_{\mu\nu}}{J_{\mu\mu} + J_{\nu\nu} - 2J_{\mu\nu}},
    \label{eq:dmu-dtau-clean}
\end{align}
with all second partial derivatives of $J$ evaluated at $(\mu, \nu) = (\mu^\star(\tau), \nu^\star(\tau))$.

By the assumptions $J_{\nu\nu} > 0$ and $J_{\mu\nu} \le 0$ at the optimum, the numerator in \eqref{eq:dmu-dtau-clean} satisfies $J_{\nu\nu} - J_{\mu\nu} > 0$. By the assumptions $J_{\mu\mu} > 0$, $J_{\nu\nu} > 0$, and $J_{\mu\nu} \le 0$, the denominator satisfies $J_{\mu\mu} + J_{\nu\nu} - 2J_{\mu\nu} > 0$. Hence
\begin{align}
    \frac{d\mu^\star}{d\tau} > 0.
\end{align}

Since $\nu^\star(\tau) = \tau - \mu^\star(\tau)$, we also have
\begin{align}
    \frac{d\nu^\star}{d\tau}
    = 1 - \frac{d\mu^\star}{d\tau}
    = \frac{J_{\mu\mu} - J_{\mu\nu}}{J_{\mu\mu} + J_{\nu\nu} - 2J_{\mu\nu}}
    > 0,
    \label{eq:dnu-dtau-clean}
\end{align}
where positivity follows from $J_{\mu\mu} > 0$ and $J_{\mu\nu} \le 0$ together with the same positive denominator.

Finally, $N^\star(T) = \exp(\mu^\star(\log T))$ and $\widetilde{D}^\star(T) = \exp(\nu^\star(\log T))$, so $d\mu^\star/d\tau > 0$ and $d\nu^\star/d\tau > 0$ imply that both $N^\star(T)$ and $\widetilde{D}^\star(T)$ are strictly increasing functions of $T$.
\end{proof}

\paragraph{Empirical plausibility of the assumptions.}
The three conditions in \Cref{thm:ND-monotone-logconvex} reflect well-documented empirical phenomena in deep learning. The complementarity condition $\partial^2 J / \partial \mu \partial \nu \le 0$ captures the observation that larger models are more sample-efficient: increasing model size leads to faster learning \citep{kaplan2020scaling,yang2022tensor}, which leads to a faster decrease in both the model description length and data code length (final loss), and thus $\partial J / \partial\nu$ should decrease with $\mu$. The diminishing returns conditions $\partial^2 J / \partial \mu^2 > 0$ and $\partial^2 J / \partial \nu^2 > 0$ simply state that there is diminishing return in successive doubling of the model size or training data size, holding the other quantity fixed.

\paragraph{Asymptotic growth of $\rS_T$ and monotone decay of $\rH_T$.}
The monotone growth of the compute-optimal $N^\star(T)$ and $D^\star(T)$ does not by itself imply that $\rS_T(X) := |\mathrm{P}|\qty(N^\star(T), D^\star(T))$ is monotone for all $T$. Intuitively, while we expect the model description length $|\mathrm{P}|(N, D)$ to grow with $D$, it need not increase with $N$: larger models can be more sample-efficient, which may reduce the effective complexity of the learned predictor under some coding schemes. However, one should still expect $\rS_T(X)$ to grow with $T$, at least asymptotically, if we assume (1) the compute-optimal model size diverges while the optimal training horizon converges, as in the scaling-law model of \Cref{app:scaling-law-model}, and (2) the existence of infinite-model-size limits of the training dynamics.

That is, assume that along the compute-optimal path,
\begin{align}
    N^\star(T) \to \infty
    \qquad \text{and} \qquad
    D^\star(T) \to D_\infty < \infty
    \qquad \text{as } T \to \infty.
\end{align}
Assume moreover that for bounded training horizons, the description length admits a well-defined infinite-model-size limit: there exists a function $|\mathrm{P}|_\infty(D)$ such that for each fixed $D$,
\begin{align}
    |\mathrm{P}|(N, D) \to |\mathrm{P}|_\infty(D)
    \qquad \text{as } N \to \infty.
\end{align}
This assumption is motivated by the existence of infinite-width and depth limits of neural networks under appropriate parameterizations \citep{yang2023tensor,dey2025don}, where scalar quantities such as loss and teacher--student KL divergence that determine $|\mathrm{P}|(N, D)$ admit stable large-model limits for bounded training durations. Under these conditions, any non-monotonic dependence of $|\mathrm{P}|$ on $N$ is a finite-model effect; once $N^\star(T)$ is large enough, $|\mathrm{P}|\qty(N^\star(T), D^\star(T))$ is well-approximated by the limiting curve $|\mathrm{P}|_\infty\qty(D^\star(T))$. Since $D^\star(T)$ is monotone increasing and convergent under our earlier assumptions, the large-$T$ behavior of $\rS_T(X)$ is therefore governed primarily by the behavior of $D^\star(T)$ alone, which we have shown is increasing with $T$, so one expects $\rS_T(X)$ to increase at large $T$ as $|\mathrm{P}|_\infty(D)$ should increase with $D.$

For the entropy term $\rH_T(X) := \D\, \L\qty(N^\star(T), D^\star(T))$, the conclusion is simpler and does not require taking $N \to \infty$. Assume only that the loss $\L(N, D)$ is nonincreasing in both $N$ and $D$ (more data and parameters cannot make the loss worse). Since $N^\star(T)$ and $D^\star(T)$ are increasing in $T$, we have
\begin{align}
    T_2 > T_1
    \quad \Longrightarrow \quad
    \L\qty(N^\star(T_2), D^\star(T_2))
    \le
    \L\qty(N^\star(T_1), D^\star(T_1)),
\end{align}
and therefore $\rH_T(X)$ is nonincreasing in $T$. In particular, whenever $\rH_T(X)$ has a finite large-compute limit $\rH_\infty(X)$, it approaches this limit from above.

\subsubsection{Monotonicity of $\rS_\infty(X)$ and $\rH_\infty(X)$ in $\D$}
\label{app:monotonicity-in-D}

We now show that epiplexity and time-bounded entropy (after appropriate normalization) in the infinite-compute limit are monotonic in the test set size $\D = |X|$, regardless of the coding scheme.

Fix a dataset $X_{\D}$ of length $\D$ tokens. For a two-part code of the form
\begin{align}
    J(N, D; \D) = |\mathrm{P}|(N, D) + \D\, \L(N, D),
    \label{eq:J-P-plus-DL}
\end{align}
let $(N^\star_T(\D), D^\star_T(\D))$ denote the compute-optimal choices. We write
\begin{align}
    \rS_T(X_{\D}) &:= |\mathrm{P}|\qty(N^\star_T(\D), D^\star_T(\D)), \\
    \rH_T(X_{\D}) &:= \D\, \L\qty(N^\star_T(\D), D^\star_T(\D)), \\
    h_T(\D) &:= \frac{H_T(X_{\D})}{\D} = \L\qty(N^\star_T(\D), D^\star_T(\D)).
\end{align}

In the infinite-compute limit $T \to \infty$, the compute constraint becomes irrelevant, so the limiting quantities coincide with the optimum of the unconstrained problem
\begin{align}
    (N_\infty^\star(\D), D_\infty^\star(\D)) = \arg\min_{N > 0,\, D \ge 0}\; |\mathrm{P}|(N, D) + \D\, \L(N, D).
    \label{eq:infty-unconstrained}
\end{align}
Thus
\begin{align}
    S_\infty(X_{\D}) = |\mathrm{P}|(N_\infty^\star, D_\infty^\star), \qquad h_\infty(X_\D) = \L(N_\infty^\star, D_\infty^\star).
\end{align}

We claim that $\rS_\infty(X_{\D})$ is nondecreasing in $\D$, and $h_\infty(X_\D)$ is nonincreasing in $\D$, assuming that for each $\D > 0$ the unconstrained problem \eqref{eq:infty-unconstrained} admits at least one minimizer.

To see this, fix $\D_2 > \D_1$ and let $(N_i, D_i)$ be minimizers of \eqref{eq:infty-unconstrained} at $\D = \D_i$. Write $P_i := |\mathrm{P}|(N_i, D_i)$ and $L_i := \L(N_i, D_i)$. Optimality of $(N_2, D_2)$ at $\D_2$ implies
\begin{align}
    P_2 + \D_2 L_2 \le P_1 + \D_2 L_1.
    \label{eq:ineq-D2}
\end{align}
Optimality of $(N_1, D_1)$ at $\D_1$ implies
\begin{align}
    P_1 + \D_1 L_1 \le P_2 + \D_1 L_2.
    \label{eq:ineq-D1}
\end{align}
Adding \eqref{eq:ineq-D2} and \eqref{eq:ineq-D1} gives
\begin{align}
    (P_2 + \D_2 L_2) + (P_1 + \D_1 L_1) &\le (P_1 + \D_2 L_1) + (P_2 + \D_1 L_2) \nonumber\\
    \D_2 L_2 + \D_1 L_1 &\le \D_2 L_1 + \D_1 L_2 \nonumber\\
    (\D_2 - \D_1)(L_2 - L_1) &\le 0,
\end{align}
hence $L_2 \le L_1$ since $\D_2 > \D_1$, i.e., the achieved loss $h_\infty(X_\D)$ is nonincreasing in $\D$. Substituting $L_2 \le L_1$ back into \eqref{eq:ineq-D1} yields $P_2 \ge P_1$, i.e., $\rS_\infty(X_{\D})$ is nondecreasing in $\D$.

\subsubsection{$D^\star(T)$ Generally Approaches $\D$ in Prequential Coding}
\label{app:prequential-saturation}

We now show that the compute-optimal training set size for prequential coding generically saturates at $D = \D$ as $T \to \infty$, without assuming specific scaling laws.

In continuous time, the prequential model description length is the area above the final loss:
\begin{align}
    |\mathrm{P}_{\mathrm{preq}}(N, D)|\, := \int_{0}^{D} \qty( \L(N, u) - \L(N, D) )\, du.
\end{align}
The corresponding two-part code length for a test set of size $\D$ is
\begin{align}
    J_{\mathrm{preq}}(N, D; \D)
    &= |\mathrm{P}_{\mathrm{preq}}(N, D)|\, + \D\, \L(N, D) \nonumber\\
    &= \int_{0}^{D} \L(N, u)\, du + (\D - D)\, \L(N, D).
    \label{eq:Jpreq-cont}
\end{align}
We express $N$ in terms of $D$ for fixed $T$ using the constraint $6ND + 2N\D = T$:
\begin{align}
    N_T(D) = \frac{T}{6D + 2\D}.
    \label{eq:NT-of-D}
\end{align}

\paragraph{Large-compute limit.}
Assume: (i) $\L(N, D)$ is nonincreasing in $N$ and admits a pointwise infinite-model-size limit $\L_\infty(D) := \lim_{N \to \infty} \L(N, D)$;\footnote{This limit provably exists under $\mu$P, but is a reasonable assumption in general as it simply asserts diminishing returns in scaling model size without increasing data.} (ii) $\L_\infty$ is continuously differentiable and strictly decreasing, i.e., $\L_\infty'(D) < 0$. Along the compute frontier \eqref{eq:NT-of-D}, for any fixed $D$ we have $N_T(D) \to \infty$ as $T \to \infty$, hence
\begin{align}
    J_{\mathrm{preq}}(N_T(D), D; \D) \to J_\infty(D) := \int_{0}^{D} \L_\infty(u)\, du + (\D - D)\, \L_\infty(D).
\end{align}
Differentiating gives
\begin{align}
    J_\infty'(D) = (\D - D)\, \L_\infty'(D).
\end{align}
Since $\L_\infty'(D) < 0$, we have $J_\infty'(D) < 0$ for $D < \D$ and $J_\infty'(D) > 0$ for $D > \D$. Thus $J_\infty$ is uniquely minimized at $D = \D$, implying
\begin{align}
    D^\star(T) \to \D \qquad \text{as } T \to \infty.
\end{align}

\paragraph{Approach from below and linear growth of $N^\star(T)$.}
By \Cref{thm:ND-monotone-logconvex}, under the complementarity and diminishing-returns assumptions, the compute-optimal training set size $D^\star(T)$ is strictly increasing in $T$. Combined with the convergence $D^\star(T) \to \D$, this yields $D^\star(T) \uparrow \D$, i.e., $D^\star(T)$ approaches $\D$ from below. Finally, since $N^\star(T) = N_T(D^\star(T))$,
\begin{align}
    N^\star(T) = \frac{T}{6D^\star(T) + 2\D} \sim \frac{T}{8\D},
\end{align}
so the compute-optimal model size grows linearly with $T$ in the large-compute regime.

\section{Experiment Details}\label{app:experiment_details}
Unless otherwise stated, we use the GPT-2 \citep{radford2019language} transformer architecture trained with Adam optimizer. In experiments where we vary the model size, we tune the base learning rate on a small model and transfer it to larger models using using $\mu$P \citep{yang2022tensor} and CompleteP \citep{dey2025don}. In \mup, the per-layer learning rate is base learning rate divided by the input dimension, so our reported base learning rate is larger than typical learning rates used for Adam. The hyperparameters presented below are shared between the teacher and the student for requential coding (width, depth, learning rate, EMA time scale, etc.). As described in \Cref{app:detail-procedure}, the EMA for the teacher is used only for producing the distillation target and does not alter the raw teacher training dynamics, while the EMA for the student model does alter its training dynamics and is used to replace a decaying learning rate schedule. 

\subsection{ECA}\label{app:eca}
In \Cref{fig:eca}, we train the transformer to predict $Y$ given $X$ where $X$ is the initial state with a state size of 64 cells and $Y$ is obtained by evolving $X$ for 48 steps. We apply a burnin period of 1000 steps for sampling the initial state $X$ to eliminate the less uninteresting transient dynamics from random initialization. That is $X$ is obtained by evolving the ECA on $Z$ for 1000 steps where $Z$ is a uniform random initial state. For each rule, we train models with width (embedding dimension) $\in \{16,32,64,128,256,512\}$ and depth (number of transformer blocks) $\in \{1,2,4,6,9\}$. We train both teacher and student using batches of 1536 sequences (each an $(X,Y)$ pair), a base learning rate of 0.03 with 100 warmup steps, and an EMA time scale of 50 steps (half-life divided by $\ln(2)$). We did not set a max teacher-student KL as the student smoothly trackes the teacher throughout training. The epiplexity and time-bounded entropy is estimated for a test set of size $\D=100$M tokens (counting $Y$ only).

\subsection{Easy induction}\label{app:easy-induct}
For this task, we use a sequence length of $n=512$ (as described in \Cref{sec:induction}). The model has 3 layers and a width of 128, and is trained with a learning rate of 0.03 and a batch size of 384 sequences for 3000 steps with 15 warmup steps and an EMA time scale of 50 steps. We found further increasing the model size led to negligible improvement in the loss, and \Cref{fig:induction_easy} shows that the model has nearly converged by the end of training to the theoretical minimum loss, so there is no need to further increase the training data. As a result, we expect the epiplexity $\rS_T(X)$ to stabilize as $T$ and $\D=|X|$ increases (in the relevant regime where $T$ is still much less than what is required for implementing the brute-force solution that enumerates all possible combinations of hidden entries in the transition matrix), and our estimated epiplexity approximates this stabilized value.

\subsection{Hard induction}\label{app:hard-induct}
We modify the ECA experiment in \Cref{app:eca} to remove the first $h \in \{0,1,\ldots,5\}$ bits in $X$ when fed to the model as input. We use a state size of 32, batch size of 1536 sequences, learning of 0.03, EMA time scale of 100 steps. We set the max KL threshold between the teacher and student as 0.03 (nats per token). To construct a forward function that is hard to invert, we use rule 30 iterated for 4 steps. We train models with 3 layers and width 256 for 20000. Further increasing model size or training data led to no improvement in the loss. As \Cref{fig:induction_hard} shows, the models converge by the end of training (the loss curves shown are for the student models, but the teacher models also converge) to the theoretical minimum values. Therefore, like the case for \Cref{app:easy-induct}, we expect the epiplexity $\rS_T(X)$ to stabilize as $T$ and $\D=|X|$ increases, at least in the relevant regime where $T$ is still much less than what is required for implementing the brute-force solution that enumerates all possible combinations of hidden bits, and our estimated epiplexity approximates this stabilized value.

\subsection{Chess}\label{app:chess}
We train models of varying sizes from 1M to 160M parameters with depth between $3$ and $24$ layers. The base learning rate is set to $2$ and the batch size is 256, with a sequence length of 512. We set the EMA time scale to 50 steps and max KL to 0.1 nats per token. We use character-level tokenization. The teacher models are trained for 5B tokens in total, and the student models are trained for slightly more due to hitting the max KL threshold during training. The test set size is set to $5$B tokens.

\paragraph{Pre-Training Data.} We use the Lichess dataset available on Hugging Face at \url{https://huggingface.co/datasets/Lichess/standard-chess-games} as pre-training data, formatted as either "<board>|<moves>" or "<moves>|<board>", where moves are in algebraic chess notation and board is the final board state in FEN notation. We use a slightly more concise version of the algebraic notation to further compress the move sequence. An example input where the board appears last is:
\begin{verbatim}
e4,e5;Nf3,Nc6;Bb5,a6;Ba4,Nf6;O-O,Be7;Re1,b5;Bb3,d6;c3,O-O;h3,Nb8;d4,Nbd7;
|r1bq1rk1/2pnbppp/p2p1n2/1p2p3/3PP3/1BP2N1P/PP3PP1/RNBQR1K1 w - - 0 10
\end{verbatim}

For downstream evaluation, we evaluate performance on the following two datasets after fine-tuning on $50$k examples for a 10M-parameter model with depth 24. We report accuracy under greedy decoding at zero temperature. 
\paragraph{Chess Puzzles.} We use puzzles from the Lichess puzzle database available at \url{https://huggingface.co/datasets/EleutherAI/lichess-puzzles}, filtering for puzzles with difficulty rating above 2000. The task is to predict the correct move sequence given the game context. Puzzles are formatted as move sequences where the model must predict the next optimal move, following \citep{burns2023weak}, with only the target moves included in the loss computation via masking. This tests the model's ability to recognize tactical patterns and calculate forced sequences.

\paragraph{Centipawn Evaluation.} We evaluate position understanding using the Lichess chess position evaluations dataset at \url{https://huggingface.co/datasets/Lichess/chess-position-evaluations}, where models classify positions into 9 evaluation buckets based on Stockfish centipawn (cp) scores: class 0 ($\leq -800$cp), class 1 ($-800$ to $-400$cp), class 2 ($-400$ to $-200$cp), class 3 ($-200$ to $-50$cp), class 4 ($-50$ to $+50$cp), class 5 ($+50$ to $+200$cp), class 6 ($+200$ to $+400$cp), class 7 ($+400$ to $+800$cp), and class 8 ($\geq +800$cp). Examples are formatted as "<board>|<class>" where the model predicts the evaluation class, with mate positions assigned to the extreme classes (0 or 8). Loss during fine-tuning is computed only for predicting the class.

\subsection{OpenWebText}\label{app:owt}
We use the OpenWebText dataset at \url{https://huggingface.co/datasets/Skylion007/openwebtext}, keeping only documents containing only 96 common alphanumeric symbols, and apply character-level tokenization. The setup is otherwise identical to the chess experiment (\Cref{app:chess}).

\subsection{CIFAR-5M}\label{app:c5m}
We use the CIFAR-5M dataset at \url{https://github.com/preetum/cifar5m}. We convert the $32 \times 32 \times 3$ images to greyscale and flatten to a 1D sequence of 1024 in raster-scan order. The vocabulary is the set of pixel intensities $\{0,\ldots,255\}$. The setup is otherwise identical to the chess experiment (\Cref{app:chess}).

\subsection{Prequential vs Requential Comparison} \label{app:preq-vs-req}
\paragraph{ECA.} The ECA experiment include rules $\{0, 32, 4, 15, 22, 30, 41, 54, 106, 110\},$ covering all 4 classes. We train models with width $\in \{16,32,64,128\}$ and depth $\in \{1,2,3\}$ up to 10000 steps. We use a base learning rate of 0.03 and batch size of 384. Other hyperparameters are identical to \Cref{app:eca}. We set $\D=250$M tokens. For each rule, we report the maximum epiplexity over the resulting compute range.

\paragraph{Induction.} Both the easy and hard induction results directly come from the experiments in \Cref{sec:induction}. As explained in \Cref{app:easy-induct} and \Cref{app:hard-induct}, the compute budget $T$ and test set size $\D$ need not be precisely specified for these two tasks as the epiplexity stabilizes as $T$ and $\D$ increase due to the convergent training dynamics.

\paragraph{Natural data.} We report the estimated epiplexity on each dataset at the maximum tested compute budget as described in \Cref{app:chess}, \Cref{app:owt}, and \Cref{app:c5m}.

\subsection{ECA Emergence} \label{app:eca-emergence}
We modify the setup in \Cref{app:eca} to include models that predict intermediate states and the final state rather than the final state directly. Let $X^{(0)}$ denote the initial ECA state, and $X^{(s)}$ denote it evolved for $s$ steps. For an $\ell$-loop model, we train the model to predict $(X^{(\Delta)}, X^{(2\Delta)}, \ldots, X^{(t)})$ instead of $X^{(t)}$ only, where $\Delta = t/\ell.$ Its marginal probability on the final state is lower-bounded by its joint probability on the ground truth trajectory: 
\begin{equation}
    P(X^{(t)}) = \sum_{X^{\prime(\Delta)},\ldots,X^{\prime(t-\Delta)}} P\!\big(X^{\prime(\Delta)},\ldots,X^{\prime(t-\Delta)},X^{(t)}\big)
\end{equation}
So we upper-bound its NLL as
\begin{align}
\log\frac{1}{P(X^{(t)})}
&\le
\log\frac{1}{P\!\big(X^{(\Delta)},\ldots,X^{(t)}\big)} \nonumber\\
&=
\sum_{k=1}^{\ell}
\log\frac{1}{P\!\big(X^{(k\Delta)} \mid X^{((k-1)\Delta)},\ldots,X^{(\Delta)}\big)},
\end{align}
We account for the intermediate tokens when computing the time bound and the code length (they contribute to the model code length as well as the data entropy code length).
In the experiment, we set the ECA steps to $t=64.$ We train models with width $\{16,32,64,128\},$ depth $\in \{1,2,4,8,16,32\},$ and number of loops $\ell \in \{1,2,4,8,16\}.$ We found $\ell \in \{2,4,8\}$ has no advantage over the non-looped model ($\ell=1$) in terms of the two-part code, only $\ell=16$ does. We therefore refer to $\ell=1$ as non-looped and $\ell=16$ as looped. The fact that a small $\ell > 1$ is not helpful is likely because the overhead of encoding and generating intermediate states exceeds the savings from only slightly simplifying each prediction step, as the per-step prediction horizon is still significant. We train all models with a base learning rate of 0.06, batch size of 147456 tokens, warmup of 100 steps, and EMA time scale of 50 steps. We did not set a max teacher-student KL. The test set size is set to $\D=100$M final state tokens.

\subsection{Scaling Laws}\label{app:scaling-law-exps}
We estimate epiplexity and time-bounded entropy using the expressions derived in \Cref{app:scaling-law-model} for prequential coding using existing scaling laws for $\L(N,D)$. We solve for the optimal training tokens $D^\star(T)$ as a function of compute using root finding for \Cref{eq:scaling-foc}. For language, we use the Chinchilla scaling laws from \citet{hoffmann2022training}, which were fit to total parameter counts. For all other modalities (images and video), we use the scaling laws from \citet{henighan2020scaling}, which follow the methodology of \citet{kaplan2020scaling} and report non-embedding parameter counts. We correct these to use total parameters following \citet{pearce2024reconciling}, as described below.

\paragraph{Correcting for embedding parameters.}
The scaling laws in \citet{kaplan2020scaling} and \citet{henighan2020scaling} are reported in terms of non-embedding parameters $N_{\backslash E}$ and non-embedding compute $C_{\backslash E}$, excluding embedding and unembedding parameters. As shown by \citet{pearce2024reconciling}, this choice---combined with smaller model scales---accounts for much of the discrepancy between the Kaplan and Chinchilla scaling laws. Following their approach, we relate total parameters $N$ to non-embedding parameters via
\begin{align}
    N = N_{\backslash E} + \omega N_{\backslash E}^{1/3}, \qquad \omega = (V + L_{\mathrm{ctx}}) \qty(\frac{A}{12})^{1/3},
\end{align}
where $V$ is the vocabulary size, $L_{\mathrm{ctx}}$ is the context length, and $A$ is the aspect ratio ($\mathrm{width} / \mathrm{depth})$. We use $A=5$ as \citet{henighan2020scaling} showed the optimal aspect ratio is around this value for non-language datasets. We generate points $(C_{\backslash E}, N_{\backslash E}, L)$ from the original scaling laws, convert to $(C, N, \L)$ using this relation (with total compute as $C = C_{\backslash E} \cdot N / N_{\backslash E}$), and refit the power-law exponents and the irreducible loss.

\paragraph{Parameterization conversion.}
The scaling laws in \citet{henighan2020scaling} are reported in compute-centric form, expressing the optimal loss $L^\star(C) = (C/C_0)^{-\gamma} + E$ and optimal model size $N^\star(C) = (C/\hat{C})^{\delta}$ as functions of compute budget $C$. We convert these to the $(N, D)$ parameterization used in this work:
\begin{align}
    \L(N, D) = \qty(\frac{N}{N_0})^{-\alpha} + \qty(\frac{D}{D_0})^{-\beta} + E,
\end{align}
where the exponents transform as $\alpha = \gamma / \delta$ and $\beta = \gamma / (1 - \delta)$, and the token scale is given by $D_0 = \frac{\hat{C}}{6} N_0^{\alpha/\beta} (\beta/\alpha)^{-1/\beta}$.

\paragraph{Corrected parameters.}
\Cref{tab:scaling-params} presents the corrected scaling law parameters used in our final calculations.

\begin{table}[H]
\centering
\caption{Final scaling law parameters used. Image and video domains from \citet{henighan2020scaling} are corrected for embedding parameters using aspect ratio $A = 5$ following \citep{pearce2024reconciling}; Chinchilla (language) from \citet{hoffmann2022training} was originally fit to total parameter counts and requires no correction. $D_0$ is measured in tokens and $E$ is measured in nats.}
\label{tab:scaling-params}
\begin{tabular}{lcccccc}
\toprule
Domain & $\alpha$ & $\beta$ & $N_0$ & $D_0$ & $E$ \\
\midrule
Image 8$\times$8 & 0.331 & 0.566 & $8.0 \times 10^{1}$ & $2.66 \times 10^{6}$ & 3.14 \\
Image 16$\times$16 & 0.307 & 0.820 & $2.8 \times 10^{2}$ & $8.94 \times 10^{7}$ & 2.68 \\
Image 32$\times$32 & 0.258 & 0.399 & $6.3 \times 10^{1}$ & $1.95 \times 10^{6}$ & 2.30 \\
Image VQ 16$\times$16 & 0.322 & 0.441 & $2.7 \times 10^{4}$ & $4.44 \times 10^{7}$ & 4.23 \\
Image VQ 32$\times$32 & 0.287 & 0.560 & $1.9 \times 10^{4}$ & $1.63 \times 10^{8}$ & 3.32 \\
Video VQ 16$^3$ & 0.428 & 0.718 & $3.7 \times 10^{4}$ & $1.79 \times 10^{8}$ & 1.15 \\
Language (Chinchilla) & 0.339 & 0.285 & $4.91 \times 10^{7}$ & $1.49 \times 10^{9}$ & 1.69 \\
\bottomrule
\end{tabular}
\end{table}

\section{RASP-L for Elementary Cellular Automata}
\label{app:rule30_rasp}
Below we provide RASP-L code \citep{zhou2023algorithms} demonstrating how the evolution rule of an ECA can be implemented, providing evidence that the solution can be expressed within an autoregressive transformer model.

\begin{lstlisting}[style=rasplstyle, caption={RASPL implementation of a cellular automaton evolution step}, label={lst:ca-evolve}]
from np_rasp import *

def int2bits(x, bits=8): # returns LSB first
    """ Helper function to generate fixed bitstring representing a number.
    Not RASP-L, can be assumed constant."""
    bits_str = bin(x)[2:].zfill(bits)
    return np.array(list(map(int,bits_str[::-1])),dtype=np.uint8)

sep = -1
sep2 = -2
def evolve_ca(x, rule):
    """ Function to autoregressively output produce the output of one step of the ECA rule. Problem encoded as x= --input state--,sep,sep2,--output state--.
    Rule: int (specifying the ECA)"""
    lookup = int2bits(rule, 8)
    in_input = 1 - has_seen(x, full(x, sep))
    in_input2 = 1 - has_seen(x, full(x, sep2))
    width = cumsum(in_input)  # only valid after sep
    idx = indices(x)
    circ_x = where(in_input, x, index_select(x, idx - width))
    prev = shift_right(x, 1)
    cprev = where(in_input2, prev, index_select(prev, idx - width))
    prev2 = shift_right(x, 2)
    nbhd = (prev2 << 2) + (cprev << 1) + circ_x
    shifted_nextstate = lookup[nbhd]
    to_select_idx = idx - width
    to_select_idx = where(to_select_idx < 3, idx, to_select_idx)
    outstate = index_select(shifted_nextstate, to_select_idx)
    return outstate
\end{lstlisting}

\section{Cellular Automata and Game of Life}\label{app:conway}
\textbf{Elementary cellular automata}
Elementary cellular automata (ECA) \citep{wolfram2003new} are one-dimensional cellular automata defined on a finite or infinite line of cells, each in one of two states: 0 or 1. The system evolves in discrete time steps according to local rules: a cell's next state depends only on its current state and those of its two immediate neighbors, yielding $2^3=8$ possible neighborhood configurations. Since each configuration can map to either 0 or 1, there are $2^8=256$ possible rules, conventionally numbered 0–255 using Wolfram's notation, where the rule number's binary representation specifies the output for each neighborhood. Despite their simplicity, ECAs exhibit diverse behaviors ranging from trivial (e.g., Rule 0) to complex and chaotic (e.g., Rule 30), with Rule 54 proven to be Turing-complete. These systems serve as minimal models for studying emergence, computation, and the relationship between local rules and global behavior.

\textbf{Conways Game of Life}
Conway's Game of Life \citep{gardner1970mathematical} is a cellular automaton defined on an infinite two-dimensional grid of cells, each in one of two states: alive (1) or dead (0). The system evolves in discrete time steps according to deterministic local rules: a cell's next state depends only on its current state and those of its eight neighbors. Specifically, a live cell survives if it has exactly 2 or 3 live neighbors (otherwise it dies), while a dead cell becomes alive if it has exactly 3 live neighbors (otherwise it remains dead). Despite the simplicity of these rules, the Game of Life exhibits remarkably complex emergent behavior, including stable structures (blocks), periodic oscillators (blinkers), mobile patterns (gliders), and structures that generate infinite streams of gliders (glider guns). The system also happens to be Turing-complete, with a specific initial configuration specifying the program, it is capable of universal computation.

\section{Emergence}\label{app:emergence}

\begin{figure}[t]
\centering
\begin{minipage}[t]{0.4\linewidth}
    \centering
    \includegraphics[width=\linewidth]{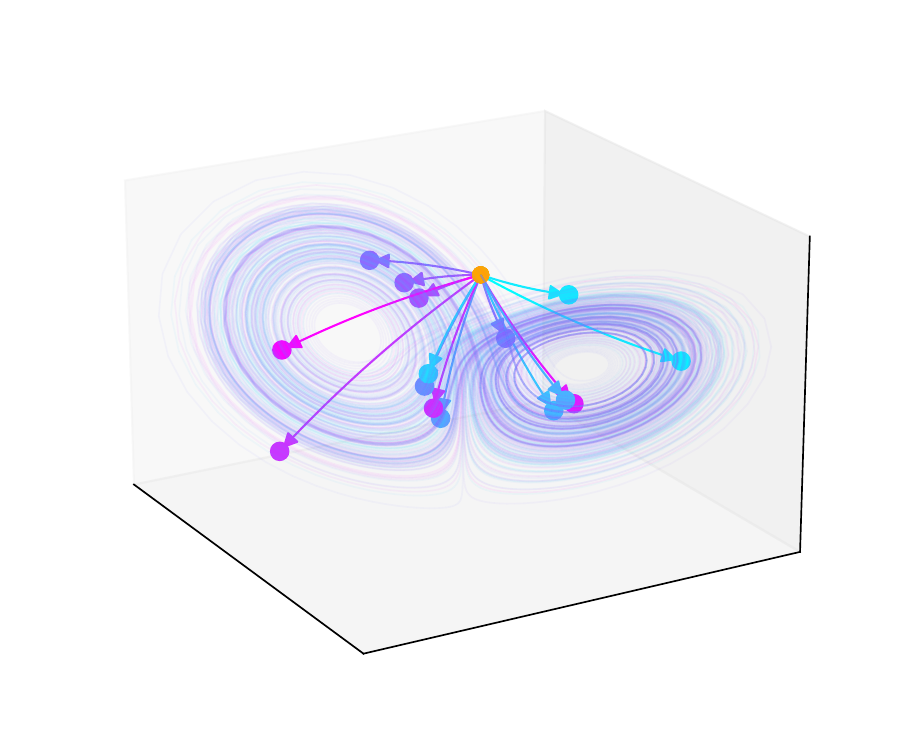}
    \subcaption{}
    \label{fig:lorenz_traj}
\end{minipage}%
\hfill
\begin{minipage}[t]{0.48\linewidth}
    \centering
    \includegraphics[width=\linewidth]{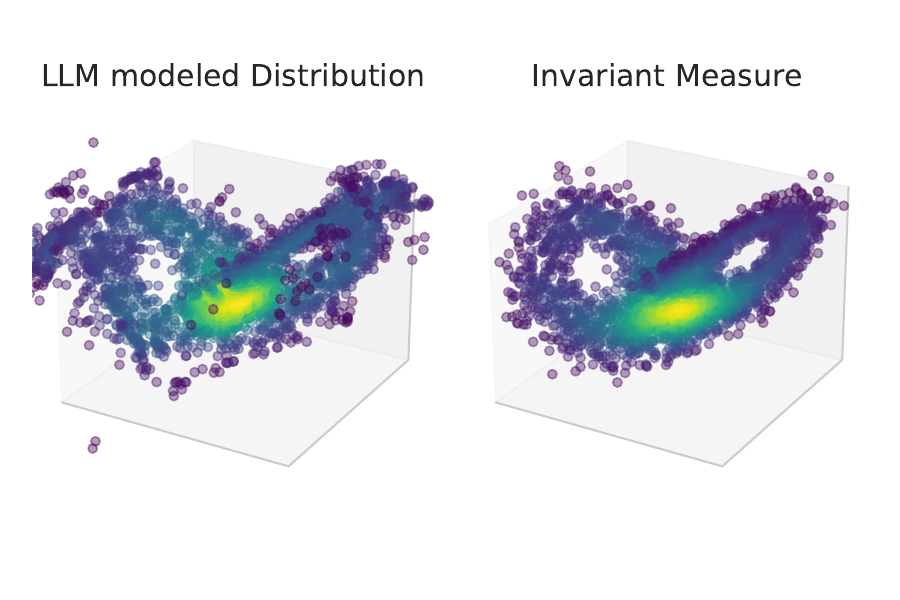}
    \subcaption{}
    \label{fig:lorenz}
\end{minipage}%
\caption{
\small
\textbf{LLMs can learn the invariant measure of chaotic systems despite unpredictable trajectories.}
(\textbf{a}) Chaotic systems like the Lorenz equations display sensitive dependence on initial conditions. Tiny perturbations to the initial conditions (orange) diverge exponentially, making long-term predictions impossible when simulating with limited computation and precision on a computer.
(\textbf{b}) $3000$ sampled points from the distribution modeled by the LLM (left) and from the invariant measure of the Lorenz system (right). Color denotes kernel density estimation of each density.
}
\label{fig:lorenz-combined}
\end{figure}

\textbf{Lorenz System and Chaotic Dynamics}
For the Lorenz system, a canonical example of a chaotic ODE, we can observe a different kind of emergence (Type-0 in \citet{carroll2024emergence}). There exists a canonical invariant measure in dynamical systems (under some regularity conditions) known as the SRB measure\citep{metzger2000sinai}. States evolved for a long time in the Lorenz system will converge this measure. As the Lorenz system is chaotic, tiny perturbations are exponentially amplified through time at a rate related to the largest Lyapunov exponent $\lambda_1 \approx 0.9$. There is a precise sense in which entropy is created in this system at a rate of $\lambda_1 \log_2(e)$ bits per second, formalized through Pesin's theorem \citep{pesin1977characteristic}, despite the fact that it is a purely deterministic process. Intuitively one can see this picture when simulating the system using fixed precision numbers, and seeing $\log_2(e)$ bits of that description replaced with unpredictable random content after every Lyapunov time $1/\lambda_1$. On the one hand randomness is produced, but it is not uniformly random. Rather, there is a stationary measure in the shape of a butterfly, and an observer who has lost track of all previous bits due to chaos can still learn the shape of the butterfly. Moreover, the shape of the stationary measure is not immediately obvious from the ODE, it is emergent and cannot easily be understood without intensive numerical simulation of the system (hence why most of chaos theory was developed after computers).

To demonstrate this interplay, we train a language model to predict the first $B=10$ bits of the future state $\Phi_t(X)$ from an initial state sampled uniformly from the box $X\sim U[-20,20]^3 + 20[0,0,1]$ quantized to $B$ bits, in comparison to directly modeling $\Phi_t(X)$. For both we set the time $t$ to be $30$ Lyapunov times into the future, $t=30/\lambda_1$. The resulting model has a nearly identical loss and estimated epiplexity in the two settings. Despite being unable to distinguish the initial conditions, the LLM learns the invariant (SRB) measure to a reasonable approximation as shown in \autoref{fig:lorenz}.
With very limited compute the stationary measure is not predictable apriori from the dynamics, but with more compute it is merely a consequence. The epiplexity of the attractor for limited compute may be larger than a description of the dynamics $\rS_T(\Phi_t(X)) > \rS_T(\Phi,t)$. 

\textbf{Chess: AlphaZero and Minimax}
A qualitatively different kind of example can be had by considering the models produced by AlphaZero \citep{silver2018general} and the theoretically optimal minimax solution for these two player zero sum perfect information games \citep{vonneumann1928theorie,shannon1950chess}. The minimax strategy can be implemented by a short program, and with sufficient compute (exponential in the size of the board \citep{fraenkel1981computing}) the optimal strategy can be found. On the other hand the CNN policy and value network produced by AlphaZero contain $10$s of millions of parameters. Given that the rules of chess can be encoded in just a few hundred bytes, and the algorithm used to train the model can be simply described and also implemented by a short program, one may wonder \emph{where does this information come from?} With the other examples of emergent phenomena in mind, we can make sense of this information being produced by the computational process of the AlphaZero system. In contrast, with unbounded compute, the best strategy contains little information.

To summarize, due to the existence of emergent phenomena, even systems that have simple generating processes or simple descriptions can lead to large amounts of structural information to be learned by computationally constrained observers.

\section{Induction is Not Specific to Autoregressive Factorization}\label{app:induction-not-specific}
One might get the impression that key constraint that leads to this induction phenomenon is the autoregressive factorization, as it is intuitive to see how such a model needs to perform induction in-context to achieve minimum loss. However, we argue this phenomenon takes place with other classes of generative models trained as long as they are trained with Maximum Likelihood Estimation (MLE) or its approximations. 

In MLE, a generative model allowing explicit likelihood evaluation is trained to maximize the likelihood of the data. Computing the likelihood can be significantly more computationally challenging than sampling from the distribution $P.$ This distinction is particularly clear in the examples we gave where the ground-truth $P$ is a mixture distribution represented by a latent variable model with the CA initial state or Markov chain transition matrix acting as the latent variable $Z$. Given access to $P_{X|Z}$ (equivalent to some easy to implement forward function $F$), sampling is easy as long as $P_Z$ is a simple, but computing $P_X(x)$ for some input $x$ requires evaluating an intractable integral $P_X(x) = \int P_{X|Z}(x|z) P_Z(z) \,dz$ due to the high-dimensionality of $Z.$ As such, a model given a limited compute-budget is forced to learn a cheaper but more sophisticated algorithm for computing $P_X(x),$ often involving approximating the inverse $P_{Z|X}$ either explicitly as done in expectation–maximization-type algorithms and Variational Autoencoders \citep{kingma2013auto}, or implicitly as we illustrated for the autoregressive transformer.

\section{Minimum Description Legnth}
\label{app:mdl}
Intuitively, $L(H)$ can be interpreted as the structural information, and $-\log P(x \mid H)$ can be understood as the remaining random information that cannot be predicted by the best model in $\mathcal H$. 
A main problem with the crude two-part code is that it does not prescribe how one should design the code for $H$ (i.e.,  a procedure for describing $H$ within $\mathcal{H}$).
The description of a particular $H$ can be short under one code but very large under another, which could require additional knowledge to resolve.
To circumvent this issue, one can use a more refined one-part code that describes the data with the entire model class $\mathcal{H}$ rather than any single model $H$. One of the most important one-part codes is the normalized maximum likelihood code.
\begin{definition}[Normalized maximum likelhood code~\citep{grunwald2007minimum}]
    The NML distribution $P^{\mathrm{NML}}_{\mathcal{H}}: \{0,1\}^{n\times d} \rightarrow [0, 1]$ of a probablistic model class $\mathcal{H}$ is:
    $$P^{\mathrm{NML}}_{\mathcal{H}}(x) = \frac{P(x \mid \widehat{H}(x))}{\sum_{y \in \{0,1\}^{n\times d}} P(y \mid \widehat{H}(y))},$$
    where $\widehat{H}(x) = \argmax_{H \in \mathcal{H}} P(x \mid H)$ is the maximum likelihood estimator for $x$.
\end{definition}

Crucially, notice that the NML code only depends on $\mathcal{H}$ rather than any particular $H \in \mathcal{H}$, so we do not have to design a particular code for $H$.
Unfortunately, the NML code requires integrating over the maximum likelihood estimator for all possible data, which is intractable for most practical models such as deep neural networks.
We can instead use a more tractable variant of one-part code based on sequential prediction called prequential coding.
\begin{definition}[Prequential code~\citep{grunwald2007minimum}]
    The prequential distribution $P^{\mathrm{PREQ}}_{\mathcal{H}}: \{0,1\}^{n\times d} \rightarrow [0, 1]$ of a probabilistic model class $\mathcal{H}$ is:
    $$P^{\mathrm{PREQ}}_{\mathcal{H}}(x) = \prod_{k=1}^n P(x_k \mid \widehat{H}(x_{1:k})),$$
    where $\widehat{H}(x_{1:k}) = \argmax_{H \in \mathcal{H}} P(x_{1:k} \mid H)$ is the MLE for the first $k$ elements of $x$. 
\end{definition}
This definition above uses the MLE for updating $\widehat{H}$ but there are in fact no constraints on how the update is performed.
We may use any update method of our choice to produce the next model in the sequence, so long as it only depends on the previous data.
This means that we can naturally adapt it for deep learning, where we use stochastic gradient descent to update the model sequentially.

A code cannot be optimal simultaneously for all possible data $x$ unless it has knowledge of the particular $x$. Therefore, it is useful to characterize how close a given code is to the optimal model, which can be formalized via the notion of \emph{regret}.
\begin{definition}[Regret~\citep{grunwald2007minimum}]
    The regret of a code $Q$ relative to $\mathcal{H}$ for $x$ is the additional number of bits needed to encode $x$ using $Q$ compared to the best model in hindsight,
    $$\mathsf{Reg}(Q, \mathcal{H}, x) = -\log Q(x) - \min_{H \in \mathcal{H}} \{-\log P(x \mid H)\}.$$
\end{definition}
Under this notion of penalty, the NML is optimal in the sense that it achieves the minimax regret.
The regret provides a way to compare different codes.
Consider the two-part regret of the crude two-part code $P^{\mathrm{2P}}(\cdot)$ with minimizer $H^\star$ and associated predictive distribution $P(\cdot \mid H^\star)$, $$\mathsf{Reg}(P^{\mathrm{2P}}, \mathcal{H}, x) = L(H^\star) + \log \frac{1}{P(x \mid H^\star)} - \log \frac{1}{P(x \mid \widehat{H})}.$$
This means that for a two-part code, the regret is an upper bound on the description length of the model.
For sufficiently large $n$, the last two terms become close to each other and $\mathsf{Reg}(P^{\mathrm{2P}}, \mathcal{H}, x) \approx L(H^\star)$.
In the case of NML, the regret is the minimax regret that $\mathsf{Reg}(P^{\mathrm{NML}}_{\mathcal{H}}, \mathcal{H}, x) = \log \sum_{y\in \{0,1\}^n} P(y \mid \widehat{H}(y))$. 
This quantity is independent of $x$, which is also called \emph{parametric complexity} of $\mathcal{H}$, because it measures how expressive the entire \emph{model class} is by counting the total amount of possible data sequences the model class can model well.

\end{document}